\documentclass[12pt]{article}
\usepackage{amsmath,amsthm,amssymb}
\usepackage{float}
\usepackage{bm}
\usepackage{graphicx}
\usepackage{subfigure}
\usepackage{enumerate}
\usepackage{hyperref}
\usepackage{natbib}
\usepackage{url} 
\usepackage[dvipsnames]{xcolor}
\usepackage{algorithm}
\usepackage{algorithmic}

\hypersetup{
    colorlinks=true,
    linkcolor=Maroon,   
    urlcolor=NavyBlue,
    citecolor=Green,
}

\newcommand{\blind}{1}

\newtheorem{theorem}{Theorem}
\newtheorem{lemma}{Lemma}
\newtheorem{corollary}{Corollary}
\newtheorem{proposition}{Proposition}

\theoremstyle{definition}

\newtheorem{definition}{Definition}

\newtheorem{assumption}{Assumption}
\newtheorem{remark}{Remark}

\newcommand{\argmin}{\operatorname*{\arg\min}}
\newcommand{\E}{\operatorname{\mathbb{E}}} 
\newcommand{\Var}{\operatorname{Var}}

\newcommand{\bG}{{\mathbf G}}

\newcommand{\bI}{{\mathbf I}}

\newcommand{\bR}{{\mathbf R}}

\newcommand{\bU}{{\mathbf U}}

\newcommand{\bX}{{\mathbf X}}

\newcommand{\bp}{{\mathbf p}}
\newcommand{\br}{{\mathbf r}}

\newcommand{\bu}{{\mathbf u}}

\newcommand{\bx}{{\mathbf x}}

\newcommand{\bz}{{\mathbf z}}

\newcommand{\bGamma} {\boldsymbol{\Gamma}}

\providecommand{\Pbb}{\mathbb{P}}
\providecommand{\Ebb}{\mathbb{E}}
\providecommand{\ind}{\mathbf{1}}
\providecommand{\cD}{\mathcal{D}}
\providecommand{\FDP}{\operatorname{FDP}}
\providecommand{\FDR}{\operatorname{FDR}}
\providecommand{\Regret}{\operatorname{Reg}}
\providecommand{\KL}{\operatorname{KL}}
\providecommand{\Rbb}{\mathbb{R}}
\begin{document}

\def\spacingset#1{\renewcommand{\baselinestretch}%
{#1}\small\normalsize} \spacingset{1}


\newcommand{\Title}{Online Individualized Treatment Selection under Partial Feedback: False Discovery Control and Minimax Optimality}

\if1\blind
{
  \title{\bf \Title}
    \author{Runzhe Gu\thanks{These authors contributed equally to this work.}\\
    Center for Data Science, Zhejiang University\\
    and \\
    Zinan Zhao\footnotemark[1]\\
    Department of Statistics, University of Michigan\\
    and \\
    Wenguang Sun\\
    Center for Data Science, Zhejiang University\\
    and \\
    Xintao Xia\\
    Center for Data Science, Zhejiang University}
    \date{}
  \maketitle
} \fi

\if0\blind
{
  \bigskip
  \bigskip
  \bigskip
  \title{\bf \Title}
  \author{}
  \date{}
  \maketitle
  \medskip
} \fi

\bigskip
\begin{abstract}
We study online individualized treatment selection when only the outcome under the assigned treatment is observed. Our objective is to preferentially treat individuals whose potential outcomes under treatment exceed a covariate-specific control reference, while limiting false discoveries among treated individuals. We propose Partial-Feedback Adaptive Individualized Treatment Selection (PAITS), which initializes learning from historical data and combines sequential estimation of local null risks, randomized exploration, and feedback-driven updates of the treatment selection threshold. Under regularity conditions, we establish asymptotic false discovery rate control and sublinear gap-weighted assignment regret relative to a population oracle. We further establish a minimax lower bound on assignment regret under prescribed uniform nuisance-estimation requirements and give conditions under which PAITS attains the corresponding constrained minimax rate up to logarithmic factors. These results quantify the tradeoff between accurate treatment assignment and nuisance learning under partial feedback. Simulation studies and a real-data analysis using the Jester recommendation dataset illustrate the finite-sample performance of PAITS.
\end{abstract}

\noindent%
{\it Keywords: False discovery rate; minimax regret; online learning; partial feedback.}

\newpage
\spacingset{1.65} 

\section{Introduction}
\label{sec:intro}

The increasing availability of large-scale, sequentially collected data has made online decision-making an important problem in modern data analysis. Online decision-making involves sequentially selecting actions using current covariates and past observations, while learning from their feedback to improve subsequent decisions \citep{sutton2018reinforcement,lattimore2020bandit,perchet2013multi,agarwal2014taming}. This setting arises naturally in many applications. In recommendation systems, a platform sequentially recommends items to users, observes responses such as clicks, purchases, or viewing behavior, and uses these responses to improve future recommendations \citep{li2010contextual}. In online advertising, an advertiser repeatedly chooses which advertisement to display and updates its allocation strategy according to observed user engagement or conversion outcomes \citep{geng2020online}. In adaptive clinical trials, treatments are assigned to incoming patients based on their characteristics and the outcomes accumulated from previously treated patients, so that the treatment rule can be improved as the trial progresses \citep{villar2015multi}. Similar problems also arise in dynamic pricing and resource allocation.

We consider sequential decisions about whether to assign treatment to each arriving individual, such as recommending a particular item rather than retaining a default option, or administering an aggressive treatment rather than the control condition. Each decision is made before the current outcome is known, using the individual's characteristics and all historical data. Only the outcome under the chosen option is observed; the outcome that the same individual would have experienced under the alternative remains unknown. We refer to this as \emph{partial feedback} \citep{li2010contextual}. With larger outcomes regarded as preferable, we aim to preferentially assign treatment to individuals whose outcomes under treatment exceed a covariate-specific control reference, such as the mean outcome under control for individuals with similar pretreatment characteristics. This formulation is motivated by established clinical practice in which outcomes among comparable patients receiving alternative treatments inform individualized treatment decisions \citep{kent2018personalized}. Assigning treatment to an individual whose outcome does not exceed this reference constitutes a \emph{selection error}. Failure to meet this criterion can be costly: patients may experience side effects, inconvenience, and financial costs without attaining the desired response, while unhelpful recommendations may consume users' attention and displace more relevant content.

Because treatment must be assigned before outcomes are observed, prediction models can guide the selection of individuals likely to meet the response criterion \citep{jin2023selection}. Many existing treatment-rule methods, however, optimize population-average outcomes \citep{qian2011performance,zhao2012estimating}. Favorable average performance need not imply a low selection-error rate: exceptionally strong responses among a few recipients may mask failures among a substantial proportion of those treated. This motivates explicitly limiting the proportion of treated individuals who fail to meet the response criterion. We formalize this objective through false discovery rate (FDR) control, defining the FDR as the expected proportion of selection errors among all treated individuals \citep{benjamini1995controlling,jin2023selection}.

The online nature of the problem adds a challenge to the error-control requirement. Individuals arrive sequentially, and treatment decisions rely on prediction models that are updated as outcomes accumulate. Under partial feedback, these decisions determine which treatment-specific outcomes become available. Repeatedly following current predictions may therefore leave one treatment condition insufficiently observed in parts of the covariate space \citep{perchet2013multi}. The procedure must balance exploiting current predictions to guide treatment selection with exploring alternative assignments to improve future estimation \citep{lai1985asymptotically,auer2002finite,lattimore2020bandit}. In our setting, this exploration--exploitation tradeoff must be addressed while controlling FDR across all treatment assignments, including those made during exploration. Moreover, because the control reference is unknown, even an observed treated outcome does not directly reveal whether a selection error has occurred; the feedback used for error control must itself be estimated.

In this paper, we develop an online framework for individualized treatment selection that integrates sequential learning with selection error control under partial feedback. Our objective is to learn a rule that treats as many individuals as possible while controlling the expected proportion whose outcomes fail to exceed the control reference. Starting from historical data, the procedure sequentially updates estimates of the control reference and treated-outcome distribution as new outcomes become available. These estimates guide treatment assignment through an adaptive selection threshold, while randomized exploration supports continued learning under both treatment conditions.

\subsection{Related Work}
\label{subsec:intro-related-work}

Our work is related to the literature on individualized treatment rules (ITRs), which use patient characteristics to guide treatment assignment, typically with the objective of maximizing expected outcomes \citep{manski2004statistical,qian2011performance}. Representative approaches include outcome-weighted learning \citep{zhao2012estimating} and residual-weighted learning \citep{zhou2017residual}, doubly robust and semiparametric policy learning \citep{zhang2012robust,zhao2015doubly,athey2021policy,zhou2023offline}, and empirical welfare maximization \citep{kitagawa2018who}. Recent developments address distributional robustness and estimation efficiency \citep{mo2021learning,mo2022efficient}, adaptively collected data and limited overlap \citep{zhan2024policy,jin2025policy}, and safety through asymmetric utilities, counterfactual harm constraints, or improvement over an existing policy \citep{benmichael2024asymmetric,li2023trustworthy,wu2025safe,benmichael2025safe}. Existing methods often emphasize average outcomes rather than individual response criteria. We instead aim to control the expected proportion of treated individuals whose outcomes do not exceed a covariate-specific control reference.

FDR is one of the most widely used error criteria in multiple testing. The Benjamini--Hochberg procedure controls FDR through a step-up rule applied to ordered $p$-values \citep{benjamini1995controlling}, with subsequent work addressing dependence and adaptive estimation of the null proportion \citep{benjamini2001control,storey2002direct}. Empirical Bayes methods instead estimate local null probabilities for ranking and selection \citep{efron2001empirical,sun2007oracle}. For sequentially arriving hypotheses, alpha-investing allocates an error budget over time \citep{foster2008alpha}, and later procedures establish online FDR control with adaptive thresholds \citep{javanmard2018online,ramdas2018saffron}. Further developments combine multiple testing with adaptive sampling \citep{xu2021unified}. Many existing testing procedures take valid $p$-values as inputs, but such inputs are not directly available in our setting. Our procedure instead relies on local null risks estimated from adaptively collected outcomes under partial feedback.

A related line of work studies selection procedures with explicit statistical error control. \citet{jin2023selection} combine conformal $p$-values with multiple testing to select individuals whose unobserved outcomes exceed prespecified thresholds. The ACS framework extends conformal selection to adaptive information acquisition and interactive model updates while retaining FDR control \citep{gui2025acs}. For treatment-effect discovery, \citet{duan2024interactive} develop $I^3$, which uses masking and progressive unmasking of treatment assignments to identify treatment-effect signals among participants in randomized experiments with finite-sample FDR control. Adaptive conformal inference updates its calibration level using observed coverage errors to control long-run miscoverage \citep{gibbs2021adaptive,gibbs2024conformal}. Related work studies decaying step sizes \citep{angelopoulos2024decaying}, more general online risk control \citep{feldman2023risk}, and the relationship between regret guarantees and calibration \citep{ramalingam2025relationship}. Online selective conformal inference further controls miscoverage among selected observations while allowing prediction scores to evolve \citep{humbert2025online}. \citet{liang2026optimal} study cumulative deviations of training-conditional coverage from its nominal level under unknown distribution drift. These works, however, do not address the joint problem of controlling FDR and achieving low assignment regret in online individualized treatment selection.

\subsection{Our contributions and paper organization}
\label{subsec:intro-contributions}

We propose Partial-Feedback Adaptive Individualized Treatment Selection (PAITS), an online procedure for learning individualized treatment assignments with asymptotic FDR control. Our main contributions are as follows.

First, we formulate individualized treatment assignment as a sequential selection problem, with selection errors defined relative to a covariate-specific control reference. The formulation requires balancing exploration and exploitation while accounting for errors from all treatment assignments. We characterize the oracle rule that maximizes the population treatment rate subject to a marginal FDR constraint, providing a target for sequential learning and a reference for regret analysis.

Second, we construct an online procedure that combines feedback-driven threshold adaptation with randomized exploration and achieves asymptotic FDR control. The threshold update uses plug-in null indicators based on an estimated control reference. We derive convergence rates that account for nuisance estimation, threshold adaptation, and exploration, with guarantees covering treatment assignments, including exploratory treatments.

Third, we show that the proposed method with specific estimators is minimax optimal up to logarithmic factors among procedures satisfying prescribed uniform nuisance-estimation requirements. Our regret upper bound separates the contributions of nuisance estimation, threshold adaptation, and exploration, while lower bounds quantify the assignment cost of meeting these learning requirements. These results characterize the trade-off between improving nuisance-estimation accuracy and reducing assignment regret.

The remainder of the paper is organized as follows. Section~2 formulates the online treatment selection problem and defines the FDR and regret criteria. Section~3 presents PAITS, including nuisance estimation, randomized exploration, and adaptive cutoff updates. Section~4 establishes asymptotic FDR control and regret bounds, together with constrained minimax optimality. Sections~5 and~6 examine finite-sample performance through numerical experiments, and Section~7 concludes the paper.

\section{Problem Formulation and Performance Criteria}

This section formulates the online treatment assignment problem and introduces the error-control and regret criteria used to evaluate the proposed procedure.

\subsection{Individualized treatment assignment}
\label{subsec:causal-setting}

We study individualized treatment assignment in a sequential setting. An initial rule trained on historical data guides treatment for each arriving unit and is adaptively estimated using the outcomes generated by its own assignments. We adopt the potential-outcomes framework \citep{rubin1974estimating}. Let $(\bX,Y(0),Y(1))$ be a random vector, where $\bX$ denotes the pretreatment covariates and $Y(a)\in\mathbb R$ denotes the potential outcome under treatment $a\in\{0,1\}$. Let $A\in\{0,1\}$ denote the treatment assignment, with $A=0$ and $A=1$ corresponding to control and treatment, respectively. Under consistency, the observed outcome equals the potential outcome under the treatment received: $Y=Y(A)=AY(1)+(1-A)Y(0)$. We further assume that the treatment levels are well defined and that there is no interference across units. Together with consistency, these conditions constitute the standard \textit{stable unit treatment value assumption} (SUTVA; \citealp{imbens2015causal}).

Before online treatment assignment begins, we observe a historical sample $\mathcal D_{\mathrm{hist}}=\{(\bX_i^{\mathrm{hist}},A_i^{\mathrm{hist}},Y_i^{\mathrm{hist}})\}_{i=1}^{n_{\mathrm{hist}}}$, which is used to construct the initial individualized treatment rule. The historical sample initializes the procedure and is retained in subsequent nuisance fits. For clarity, we focus on a randomized trial. Specifically, treatment assignments are independent across units
and satisfy
$$
\mathbb P\{A_i^{\mathrm{hist}}=1|\bX_i^{\mathrm{hist}},Y_i^{\mathrm{hist}}(0),Y_i^{\mathrm{hist}}(1)\}=\pi,
$$
where $\pi\in(0,1)$. Under random assignment, the treatment-specific conditional distribution functions $F_a(y\mid \mathbf{x})$, $a\in\{0,1\}$, are identified from the observed-data distribution. Consequently, their functionals, including the conditional means $\mu_a(\mathbf x)=\mathbb E\{Y(a)\mid \mathbf X= \mathbf x\}$ and the conditional quantiles, are also identified.

After the historical stage, units arrive sequentially, and we assign treatment to each unit upon arrival. We assume that $\{\bX_t,Y_t(0),Y_t(1)\}_{t\geq 1}$ are \textit{independent and identically distributed} (i.i.d.) copies of $(\bX,Y(0),Y(1))$ and are independent of $\mathcal D_{\mathrm{hist}}$. This condition is similar to the time-homogeneity condition standard in reinforcement learning, under which the environment remains invariant over time \citep[see, e.g.,][]{agarwal2014taming,sutton2018reinforcement}. Throughout, we regard larger outcome values as more desirable.

Our objective is to preferentially assign treatment to units whose potential outcomes under treatment exceed covariate-specific baselines derived from the control outcome distribution. Our formulation is motivated by established practices in clinical research \citep{huang2009biomarker,kent2018personalized} and econometric policy analysis \citep{kitagawa2018who,chernozhukov2013inference}. Let $b_0(\mathbf x)$ denote a prespecified scalar summary of the conditional control distribution $F_0(\cdot\mid\mathbf  x)$, such as its mean, median, or a prespecified quantile. We refer to $b_0(\mathbf x)$ as the control reference for units with covariates $\mathbf x$. Following standard terminology in causal inference, we refer to $F_1(\cdot\mid\bx)$ and $b_0(\bx)$ as \emph{nuisance functions}. When unit $t$ arrives, let $\mathcal F_{t-1}$ denote the information available before the arrival of unit $t$. The treatment assignment $A_t\in\{0,1\}$ is chosen using only $\mathcal F_{t-1}$ and $\mathbf X_t$ and thus:
\begin{equation}
    A_t\perp\!\!\!\perp\{Y_t(0),Y_t(1)\}\mid \mathcal F_{t-1},\bX_t.
    \label{eq:generic-sequential-randomization}
\end{equation}
Following assignment, we observe only the potential outcome under the assigned treatment, $Y_t=A_tY_t(1)+(1-A_t)Y_t(0)=Y_t(A_t)$, while the counterfactual outcome remains unobserved. We formulate treatment assignment as an online multiple-testing problem. For the unit arriving at time $t$, consider the null and alternative hypotheses
\begin{equation}
\mathbb H_{0,t}:Y_t(1)\leq b_0(\mathbf X_t),
\qquad
\mathbb H_{1,t}:Y_t(1)>b_0(\mathbf X_t).
\label{eq:reference-hypotheses}
\end{equation}
Rejection of $\mathbb H_{0,t}$ corresponds to treatment assignment, reflecting our goal of prioritizing units whose potential outcomes under treatment are likely to exceed their control reference.

\subsection{False discovery rate control and regret}
\label{subsec:treatment-error-regret}

The hypotheses in \eqref{eq:reference-hypotheses} turn sequential treatment assignment into a multiple-testing problem. We identify treatment assignment with rejection, so that $A_t=1$ corresponds to rejecting $H_{0,t}$. At horizon $T$, let $\mathcal R_T=\{t\le T: A_t=1\}$ and $\mathcal H_{0, T}=\{t\le T: Y_t(1)\le b_0(\bX_t)\}$ denote the rejection set and the set of true null hypotheses, respectively. The numbers of treated individuals and false discoveries are therefore
\begin{equation*}
    R_T=|\mathcal R_T|=\sum_{t=1}^T A_t,
    \qquad
    V_T=|\mathcal R_T\cap\mathcal H_{0,T}|=\sum_{t=1}^T A_t\ind\{Y_t(1)\le b_0(\bX_t)\}.
\end{equation*}
The corresponding false discovery proportion and false discovery rate are
\begin{equation*}
    \FDP_T=\frac{V_T}{R_T\vee 1},
    \qquad
    \FDR_T=\E(\FDP_T).
\end{equation*}
$\FDP_T$ is the realized proportion of treated units whose outcomes under treatment do not exceed their covariate-specific control reference, and $\FDR_T$ is its expectation. This criterion complements population-average policy value: a policy may perform well on average while treating an unacceptably large proportion of units who fail to achieve desired performance. For a prespecified level $\alpha\in(0,1)$, our primary objective is to control $\FDR_T$ at level $\alpha$.

We seek to maximize the treatment rate subject to FDR control at the prespecified level. To assess assignment efficiency, we introduce an oracle-based regret criterion. Motivated by local false discovery rate methods for large-scale multiple testing \citep{efron2001empirical,sun2007oracle,sun2012multiple}, we define the oracle local null risk by
\begin{equation}
    \ell(\bx)=\Pbb(Y_t(1)\le b_0(\bX_t)\mid \bX_t=\bx)=F_1\{b_0(\bx)\mid \bx\}.
    \label{eq:problem-oracle-local-risk}
\end{equation}
For the oracle comparison, we use the \emph{marginal false discovery rate} (mFDR; \citealp{storey2003positive,genovese2002operating}) defined, whenever $\Ebb(R_T)>0$, by
\begin{equation*}
    \operatorname{mFDR}_T
    :=\frac{\Ebb(V_T)}{\Ebb(R_T)}.
\end{equation*}
For a fixed rule with positive treatment probability, $\FDR_T=\operatorname{mFDR}_T\Pbb(R_T>0)$. Thus, $\operatorname{mFDR}_T\le\alpha$ implies $\FDR_T\le\alpha$ for every $T$, and $\lim_{T\to\infty}|\FDR_T-\operatorname{mFDR}_T|=0$. Analogous to the relationship between local and tail-area Bayesian FDR \citep{efron2010large}, averaging $\ell(\bX)$ over selected individuals gives the population selection-error rate: $\Ebb[\ell(\bX)\mid\ell(\bX)\le\lambda]=\Pbb\{Y(1)\le b_0(\bX)\mid\ell(\bX)\le\lambda\}$. We therefore define the oracle cutoff as the supremum of thresholds for which the average local null risk among selected units does not exceed $\alpha$:
$$\lambda^\star=\sup\{\lambda\in[0,1]:\Ebb[\ell(\bX)|\ell(\bX)\leq\lambda]\leq\alpha\}.$$ 
Define the oracle assignment by $A^\star=\ind\{\ell(\bX)\le\lambda^\star\}$. We consider covariate-based assignment rules, possibly randomized, satisfying $\operatorname{mFDR}_T\le\alpha$, and evaluate them by the population treatment rate $\Ebb(A)$ and the probability of a successful treatment assignment $\Ebb[\{1-\ell(\bX)\}A]$. The following lemma establishes the optimality of $A^\star$ under both criteria.

\begin{lemma}[Optimality of the oracle assignment]
\label{lem:oracle-assignment-optimality}
Suppose that $\lambda^\star>\alpha$, $\Pbb\{\ell(\bX)\le\lambda^\star\}>0$, $\Pbb\{\ell(\bX)=\lambda^\star\}=0$, and $\Ebb[\ell(\bX)\mid\ell(\bX)\le\lambda^\star]=\alpha$. Then $A^\star$ maximizes both criteria among all possibly randomized covariate-based assignment rules satisfying $\operatorname{mFDR}_T\le\alpha$.
\end{lemma}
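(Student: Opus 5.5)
Any possibly randomized covariate-based rule can be written as $A=\ind\{U\le \delta(\bX)\}$ with $\delta:\mathcal X\to[0,1]$ measurable and $U$ an independent uniform variable. By \eqref{eq:generic-sequential-randomization}, $\Ebb[A\ind\{Y(1)\le b_0(\bX)\}]=\Ebb[\delta(\bX)\ell(\bX)]$. Summing over $t\le T$ for a fixed rule, the mFDR constraint is therefore the linear constraint
\[
\Ebb[\delta(\bX)\{\ell(\bX)-\alpha\}]\le 0 ,
\]
and it does not depend on $T$. The two criteria are the linear functionals $\Ebb[\delta(\bX)]$ and $\Ebb[\delta(\bX)\{1-\ell(\bX)\}]$. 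Under this reduction the lemma becomes a Neyman--Pearson type fractional knapsack problem, and I will handle it by a Lagrangian (weak duality) argument.

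First, $A^\star$ is feasible. Its rule is $\delta^\star=\ind\{\ell\le\lambda^\star\}$. By the assumption $\Ebb[\ell(\bX)\mid\ell(\bX)\le\lambda^\star]=\alpha$ and $\Pbb\{\ell(\bX)\le\lambda^\star\}>0$, we get $\Ebb[\delta^\star(\ell-\alpha)]=0$, so the constraint binds.

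Next comes the treatment rate. For any feasible $\delta$ and any multiplier $\kappa\ge0$, we have $\Ebb\delta\le \Ebb[\delta\{1-\kappa(\ell-\alpha)\}]$. The plan is to choose $\kappa=1/(\lambda^\star-\alpha)$, which is positive because $\lambda^\star>\alpha$. Then $1-\kappa(\ell-\alpha)\ge0$ exactly when $\ell\le\lambda^\star$. Pointwise maximization over $\delta\in[0,1]$ then gives $\Ebb[\delta\{1-\kappa(\ell-\alpha)\}]\le \Ebb[\delta^\star\{1-\kappa(\ell-\alpha)\}]=\Ebb\delta^\star$, where the last equality uses the binding constraint. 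Ties at $\ell=\lambda^\star$ are irrelevant because $\Pbb\{\ell(\bX)=\lambda^\star\}=0$.

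For the success probability, I repeat the argument with objective integrand $1-\ell$ and multiplier $\kappa'=(1-\lambda^\star)/(\lambda^\star-\alpha)\ge0$. Then $1-\ell-\kappa'(\ell-\alpha)=(1+\kappa')(\lambda^\star-\ell)$, which again has sign equal to that of $\lambda^\star-\ell$. If $\lambda^\star=1$, then $\kappa'=0$ and the integrand $1-\ell$ is nonnegative everywhere. In that case $\delta^\star=1$ almost surely, so optimality is trivial, but I still need to check that this case is consistent.

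The main obstacle is the reduction step. I must justify that "covariate-based, possibly randomized" means $\Ebb[A\mid \bX,Y(0),Y(1)]=\delta(\bX)$, so that $\Ebb[A\,\ind\{Y(1)\le b_0\}]=\Ebb[\delta\ell]$. I also need the mFDR ratio to be well-defined with $\Ebb R_T>0$. Rules with $\Ebb\delta=0$ trivially achieve smaller criteria, so they need no separate treatment. Beyond that, the proof is a short duality computation.
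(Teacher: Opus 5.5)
Your proposal is correct and is essentially the paper's argument: the paper's identity $\Ebb[\{\ell(\bX)-\alpha\}A]=\Ebb[\{\ell(\bX)-\lambda^\star\}(A-A^\star)]+(\lambda^\star-\alpha)\{\Ebb(A)-\Ebb(A^\star)\}$ is your weak-duality bound with $\kappa=1/(\lambda^\star-\alpha)$, multiplied through by $\lambda^\star-\alpha$; and its decomposition $\Ebb[\{1-\ell(\bX)\}(A^\star-A)]=(1-\lambda^\star)\{\Ebb(A^\star)-\Ebb(A)\}+\Ebb[\{\ell(\bX)-\lambda^\star\}(A-A^\star)]$ is algebraically the same as your second multiplier $\kappa'$. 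Your explicit reduction to $\delta(\bX)$ just spells out the step $\Ebb[A\,\ind\{Y(1)\le b_0(\bX)\}]=\Ebb[A\,\ell(\bX)]$ that the paper uses implicitly, and the case $\lambda^\star=1$ is already covered by your general argument, so it needs no separate check.
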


FDR control limits selection errors but does not by itself characterize assignment efficiency. We therefore compare the procedure with the oracle rule. This comparison yields a gap-weighted regret criterion that measures cumulative departures from the oracle. Let $\mathcal F_{t-1}$ denote the information available before unit $t$ arrives, including the historical data. For a nonanticipating assignment sequence $\{A_t\}_{t\ge1}$, each decision uses only $\mathcal F_{t-1}$, the current covariates $\bX_t$, and independent randomization. Let $A_t^\star=\ind\{\ell(\bX_t)\le\lambda^\star\}$ denote the oracle assignment for unit $t$. Define the predictable and expected cumulative gap-weighted assignment regrets as
\begin{equation}
\begin{aligned}
    \widehat{\Regret}_T&=\sum_{t=1}^T\Ebb\bigg[|\ell(\bX_t)-\lambda^\star|\cdot|A_t-A_t^\star|\bigg|\mathcal F_{t-1}\bigg],
    \\
    \Regret_T&=\Ebb\bigg[\sum_{t=1}^T|\ell(\bX_t)-\lambda^\star|\cdot|A_t-A_t^\star|\bigg]=\Ebb[\widehat{\Regret}_T].
\end{aligned}
\label{eq:weighted-selection-regret}
\end{equation}
This definition is motivated by gap-weighted regret in contextual bandits \citep{perchet2013multi,agarwal2014taming}. An assignment incurs no loss when it agrees with the oracle and a loss of $|\ell(\bX_t)-\lambda^\star|$ otherwise. The criterion captures two departures from the oracle: treating an individual whose local null risk exceeds the selection threshold and withholding treatment from one whose risk falls below it. The gap measures the strength of the oracle's preference between the two assignments: disagreements far from the threshold are penalized more heavily, whereas those near the boundary incur little loss. Thus, regret accounts for both the frequency of disagreements and their importance under the oracle criterion. Because it is defined through the realized assignment $A_t$, it includes losses from both plug-in decisions and randomized exploration.

\section{Adaptive Individualized Treatment Selection under Partial Feedback}
\label{sec:method}

We propose \emph{Partial-Feedback Adaptive Individualized Treatment Selection} (PAITS), which combines adaptive nuisance estimation, adaptive cutoff updates, and randomized exploration. The cutoff updates target asymptotic FDR control, while suitable nuisance estimators and an appropriately chosen exploration schedule yield the constrained minimax regret rate up to logarithmic factors under uniform regularity conditions. We next detail these components and present the complete PAITS procedure.

\subsection{Local null risk estimation and plug-in treatment selection}
\label{subsec:plugin-selection}
\label{subsec:paits-selection-update}

The oracle local null risk \eqref{eq:problem-oracle-local-risk} is the conditional probability that treating a unit given its covariates produces a false discovery. Motivated by the oracle characterization, PAITS determines treatment selection by replacing $\ell(\mathbf x)$ with its plug-in estimate and comparing the estimate with an adaptively updated cutoff. Both the local null risk estimator and the cutoff are updated adaptively as outcomes accumulate.

Let $\cD_a$ denote the historical observations in $\mathcal D_{\mathrm{hist}}$ assigned treatment status $a\in\{0,1\}$. When unit $t$ arrives, the avaliable training sample for $a\in\{0,1\}$ is $\cD_{a,t}=\cD_a\cup\{(\mathbf X_s,Y_s):s<t,\ A_s=a\}$. Let $\widehat F_{1,t}(\cdot\mid\bx)$ and $\widehat b_{0,t}(\bx)$ denote estimators of the conditional treated-outcome distribution and the control reference, fitted using $\cD_{1,t}$ and $\cD_{0,t}$, respectively. Combining the two estimators gives the plug-in estimate of the local null risk,
\begin{equation}
    \widehat\ell_t(\bx)=\widehat F_{1,t}\{\widehat b_{0,t}(\bx)\mid \bx\}.
    \label{eq:paits-plugin-local-null-risk}
\end{equation}
To adjust for the selective sampling induced by adaptive treatment assignment, we use stabilized inverse-propensity-weighted spline or local-linear regression. Propensity weighting adjusts for covariate imbalance induced by adaptive treatment assignment, following weighted estimation methods for contextual bandits \citep{chen2021statistical}. Each observation is weighted using its treatment-assignment probability at the time of collection, allowing both exploratory and plug-in assignments to contribute to nuisance estimation. Appendix~\ref{app:weighted-nuisance-proofs} provides the estimator constructions and their uniform convergence guarantees.

We select units by thresholding their estimated local null risks. Let $\lambda_t\in[\alpha,1]$ denote the cutoff before the time-$t$. The resulting plug-in treatment recommendation is
\begin{equation}
    G_t=\ind\{\widehat\ell_t(\bX_t)\leq\lambda_t\}.
    \label{eq:paits-plugin-recommendation}
\end{equation}
Thus, $G_t=1$ recommends treatment when the estimated local null risk does not exceed $\lambda_t$. Using $A_t=G_t$ may prevent $\widehat\ell_t$ from consistently estimating $\ell$. Because we observe only the outcome under the assigned treatment, a deterministic plug-in rule may stop sampling one treatment in some regions of the covariate space. This is the classic exploration--exploitation problem in contextual bandits: without additional distributional conditions, a purely greedy assignment rule may fail to learn the optimal policy \citep{kannan2018smoothed}.

\subsection{Randomized exploration and adaptive data collection}
\label{subsec:epsilon-greedy-design}

To balance exploration and exploitation in sequential treatment assignment, PAITS incorporates $\varepsilon$-greedy randomization, which is widely used in reinforcement learning under action-dependent feedback \citep{langford2007epoch}. Fix $\varepsilon_0\in(0,1)$ and $\chi\in(0,1)$, and let $ \varepsilon_t=\varepsilon_0t^{-\chi}$. Then $\varepsilon_t\to0$, $\sum_{t\geq1}\varepsilon_t=\infty$, and $T^{-1}\sum_{t=1}^T\varepsilon_t=O(T^{-\chi})$. The expected number of explorations diverges, whereas their expected proportion vanishes, and the exponent $\chi$ governs the rate of this transition.

After observing $\bX_t$ and computing $G_t$, PAITS draws an exploration indicator $B_t$ using fresh randomization independent of the current potential outcomes, with $\Pbb(B_t=1)=\varepsilon_t$. If $B_t=1$, it independently draws $Z_t\sim\operatorname{Bernoulli}(\pi_{\mathrm{exp}})$ for a fixed $\pi_{\mathrm{exp}}\in(0,1)$; set $Z_t=0$ when $B_t=0$. Define the indicator of treatment assignment under the plug-in rule by $S_t=(1-B_t)G_t$, and the realized treatment assignment by $A_t=S_t+B_tZ_t$. The two terms correspond to mutually exclusive assignment mechanisms: $S_t=1$ indicates treatment assigned by the plug-in rule, whereas $B_tZ_t=1$ indicates treatment assigned through exploration. Conditional on exploration, we assign treatment and control with probabilities $\pi_{\mathrm{exp}}$ and $1-\pi_{\mathrm{exp}}$, respectively. All observed outcomes, whether obtained under the plug-in rule or through exploration, are used for nuisance estimation.

\subsection{Adaptive learning of the treatment selection cutoff}
\label{subsec:paits-cutoff-update}

Given the current estimate $\widehat\ell_t$, PAITS updates the selection cutoff using outcome feedback from units treated by the plug-in rule. Specifically, when $S_t=1$, we have $A_t=1$, and the observed treated outcome provides feedback for updating $\lambda_t$. PAITS summarizes this feedback through the plug-in null indicator
\begin{equation}
    \widehat E_t
    =
    \ind\{Y_t\leq\widehat b_{0,t}(\bX_t)\}.
    \label{eq:paits-plugin-null-indicator}
\end{equation}
Thus, $\widehat E_t=1$ records a plug-in null event: the observed treated outcome fails to exceed the estimated control reference. For a prespecified positive step-size sequence $\{\eta_t\}$, define  $\Pi_{[\alpha,1]}(u)=\min\{1,\max\{\alpha,u\}\}$ and update
\begin{equation}
    \lambda_{t+1}
    =
    \begin{cases}
    \Pi_{[\alpha,1]}\big[
        \lambda_t+\eta_t\{\alpha-\widehat E_t\}
    \big], & S_t=1,\\
    \lambda_t, & S_t=0,
    \end{cases}
    \label{eq:paits-cutoff-update}
\end{equation}
where $\eta_t>0$ is the step size. This update is a projected Robbins--Monro stochastic approximation \citep{robbins1951stochastic}. When $\widehat E_t=1$, indicating a plug-in null event, the cutoff decreases, and subsequent treatment recommendations become more conservative. When $\widehat E_t=0$, the cutoff increases and subsequent recommendations become more aggressive. Projection onto $[\alpha,1]$ enforces the admissible range of the cutoff.

\begin{remark}
     The cutoff recursion is related to adaptive conformal inference (ACI; \citealp{gibbs2021adaptive}) and its selective extension, OnlineSCI \citep{humbert2025online}, which uses sequential binary feedback. In their conformal settings, miscoverage is observable once the response is revealed. In our setting, however, the true null indicator is not directly observable after treatment, because the control reference $b_0$ is unknown. PAITS therefore updates its selection cutoff using an estimated null indicator. Beyond error control, PAITS addresses the exploration--exploitation tradeoff. To assess assignment efficiency, we characterize an oracle rule and measure gap-weighted regret relative to this rule.
\end{remark}

The cutoff is updated only when $S_t=1$, that is, when treatment is assigned by the plug-in rule. This restriction mirrors the definition of $\lambda^\star$, which depends on the average local null risk among selected units. On these rounds, $\widehat E_t$ provides plug-in feedback about the null frequency within the selected population. By contrast, exploratory treatment is assigned independently of the current cutoff and therefore provides no direct feedback about the null frequency among units selected by that cutoff. This exclusion only applies to the cutoff update. Units treated during exploration remain in the treated set and therefore contribute to the FDP and FDR, while all exploratory assignments enter the regret criterion through $A_t$. Algorithm~\ref{alg:paits} summarizes the complete PAITS procedure.

\begin{algorithm}[ht]
\caption{Partial-feedback adaptive individualized treatment selection (PAITS)}
\label{alg:paits}
\begin{algorithmic}[1]
\REQUIRE Historical samples $\mathcal D_{\mathrm{hist}}$; target FDR level $\alpha$; horizon $T$; initial cutoff $\lambda_1\in[\alpha,1]$; step sizes $\{\eta_t\}_{t=1}^T$; exploration parameters $(\varepsilon_0,\chi,\pi_{\mathrm{exp}})$.

\STATE Set $\cD_{a,1}\gets\cD_a$ for each $a\in\{0,1\}$ and fit $\widehat F_{1,1}$ using $\cD_{1,1}$ and $\widehat b_{0,1}$ using $\cD_{0,1}$.
\FOR{$t=1,\ldots,T$}

    \STATE Observe $\bX_t$, compute $\widehat\ell_t(\bX_t)$ by \eqref{eq:paits-plugin-local-null-risk}, and set $G_t\gets\ind\{\widehat\ell_t(\bX_t)\leq\lambda_t\}$.

    \STATE Set $\varepsilon_t\gets\varepsilon_0t^{-\chi}$ and draw $B_t\sim\operatorname{Bernoulli}(\varepsilon_t)$, $Z_t\sim\operatorname{Bernoulli}(\pi_{\mathrm{exp}})$.
 
    \STATE Set $S_t\gets(1-B_t)G_t$ and $A_t\gets S_t+B_tZ_t$. \STATE Assign treatment $A_t$ and observe $Y_t=Y_t(A_t)$.

    \IF{$S_t=1$}
        \STATE Construct $\widehat E_t$ according to
        \eqref{eq:paits-plugin-null-indicator} and update $\lambda_{t+1}$ according to \eqref{eq:paits-cutoff-update}.
    \ELSE
        \STATE Set $\lambda_{t+1}\gets\lambda_t$.
    \ENDIF

    \STATE Update $\cD_{1,t+1}$ and $\cD_{0,t+1}$ accordingly.
    \STATE Refit $\widehat F_{1,t+1}$ using $\cD_{1,t+1}$ and
    $\widehat b_{0,t+1}$ using $\cD_{0,t+1}$.
\ENDFOR
\end{algorithmic}
\end{algorithm}

\section{Theoretical Guarantees}
\label{sec:theory}

We first establish asymptotic FDR control and regret bounds for PAITS. We then examine the minimax optimality of the resulting regret rate. We begin by introducing the notation. Let $E_t=\ind\{Y_t(1)\leq b_0(\bX_t)\}$ denote the true null indicator for unit $t$. We write $N_T=\sum_{t=1}^T S_t$ and $Q_T=\sum_{t=1}^T B_tZ_t$ for the numbers of units treated by the plug-in rule and through exploration, respectively. The total number of treated units is $R_T=\sum_{t=1}^T A_t=N_T+Q_T$. We use the $\mathcal F_{t-1}=\sigma(\mathcal D_{\mathrm{hist}},(\bX_s,B_s,S_s,Z_s,A_s,Y_s)_{1\le s<t})$ to describe the information available before unit $t$ arrives. It contains the historical data and all information collected from previous online units. We use $\mathcal G_t=\mathcal F_{t-1}\vee\sigma(\bX_t)$ to denote the join of two sigma-fields. 

\subsection{FDR control and regret bounds}
\label{subsec:fdr-regret}

We establish asymptotic FDR control and regret bounds under the following regularity conditions.

\begin{assumption}
\label{ass:local-density}
Let $U=Y(1)-b_0(\bX)$. For $P_X$-almost every $\bx$, the conditional distribution of $U$ given $\bX=\bx$ has a density function $f_{U\mid \bX}(u\mid x)$. And there exist constants $\delta_U>0$ and
$0<C_U<\infty$ such that
$$
    \operatorname*{ess\,sup}_{\bx\sim P_\bX}\sup_{|u|\le\delta_U}f_{U\mid \bX}(u\mid \bx)\le C_U, \quad u\in[-\delta_U,\delta_U].
$$
\end{assumption}

Assumption~\ref{ass:local-density} limits the concentration of treated outcomes near the control reference, uniformly over covariate values. It holds for continuous-outcome models whose conditional densities are uniformly bounded in a neighborhood of $b_0(\bx)$, while imposing no density restrictions beyond that neighborhood.

\begin{assumption}[Oracle-score regularity]
\label{ass:oracle-score}
The score $\ell(\bX)$ has an absolutely continuous distribution on $[0,1]$, with distribution function $H(\cdot)$ and bounded density $h(\cdot)$. For some $\lambda_+\in(\alpha,1)$, we have $H(\alpha)>0$, $\Ebb\{\ell(\bX)\mid\ell(\bX)\le\lambda_+\}>\alpha$, and $h(\cdot)$ is continuous and strictly positive on $[\alpha,\lambda_+]$.
\end{assumption}

Assumption~\ref{ass:oracle-score} ensures that the oracle selection problem is nontrivial. The condition $H(\alpha)>0$ guarantees a positive probability of scores below $\alpha$, whereas $\Ebb\{\ell(\bX)\mid\ell(\bX)\le\lambda_+\}>\alpha$ implies $\Ebb\{\ell(\bX)\}>\alpha$, excluding the possibility of assigning treatment to the entire population. For $\lambda\in[\alpha,1]$, define $q(\lambda)=\Ebb\{\ell(\bX)\mid\ell(\bX)\le\lambda\}$ as the population average null risk among individuals selected at the threshold $\lambda$. The following proposition establishes the existence and uniqueness of an interior oracle threshold $\lambda^\star\in(\alpha,\lambda_+)$ and a global contraction inequality.

\begin{proposition}[Oracle threshold and global contraction]
\label{prop:oracle-threshold}
Under Assumption~\ref{ass:oracle-score}, there exists a unique
$\lambda^\star\in(\alpha,\lambda_+)$ satisfying $q(\lambda^\star)=\alpha$. The function $q$ is nondecreasing and Lipschitz on $[\alpha,1]$, and there exists a constant $c_g>0$ such that
\begin{equation*}
 (\lambda-\lambda^\star)H(\lambda)\{\alpha-q(\lambda)\}\le-c_g(\lambda-\lambda^\star)^2,
 \quad\text{for}\quad\lambda\in[\alpha,1]
\end{equation*}
\end{proposition}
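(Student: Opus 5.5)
The key object is $m(\lambda)=\Ebb[\ell(\bX)\ind\{\ell(\bX)\le\lambda\}]=\int_0^\lambda u\,h(u)\,du$, so that $q(\lambda)=m(\lambda)/H(\lambda)$ whenever $H(\lambda)>0$. On $[\alpha,1]$ we have $H(\lambda)\ge H(\alpha)>0$ by Assumption~\ref{ass:oracle-score}, so $q$ is well defined. Its derivative there is
\[
q'(\lambda)=\frac{h(\lambda)\{\lambda H(\lambda)-m(\lambda)\}}{H(\lambda)^2}=\frac{h(\lambda)\{\lambda-q(\lambda)\}}{H(\lambda)}.
\]
This holds almost everywhere, and everywhere $h$ is continuous. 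Since $q(\lambda)\le\lambda$, we get $q'\ge0$, so $q$ is nondecreasing. Since $0\le\lambda-q(\lambda)\le1$ and $h$ is bounded, $|q'|\le\|h\|_\infty/H(\alpha)$, which gives the Lipschitz claim. (Rigorously, $m$ and $H$ are absolutely continuous with bounded derivatives and $H$ is bounded below, so the quotient is Lipschitz.)

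\textbf{Existence and uniqueness of $\lambda^\star$.} At the left end, $q(\alpha)=\Ebb[\ell\mid\ell\le\alpha]\le\alpha$. To get strict inequality I will argue that $\ell$ has a continuous distribution with $H(\alpha)>0$; if $q(\alpha)=\alpha$, then $\ell=\alpha$ almost surely on $\{\ell\le\alpha\}$, contradicting absolute continuity. At the right end, $q(\lambda_+)>\alpha$ by assumption. Continuity and the intermediate value theorem then give a root in $(\alpha,\lambda_+)$.

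For uniqueness, I use strict monotonicity on $[\alpha,\lambda_+]$. There $h>0$, and $\lambda-q(\lambda)>0$ because $q(\lambda)=\lambda$ would again force a point mass. So $q'>0$ there, which rules out a second root in $[\alpha,\lambda_+]$. Beyond $\lambda_+$, $q$ is nondecreasing and $q(\lambda_+)>\alpha$, so $q>\alpha$ on $[\lambda_+,1]$. Hence the root is unique, and $q<\alpha$ on $[\alpha,\lambda^\star)$ while $q>\alpha$ on $(\lambda^\star,1]$.

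\textbf{Contraction inequality.} Set $g(\lambda)=H(\lambda)\{\alpha-q(\lambda)\}=\alpha H(\lambda)-m(\lambda)$. Note that $g'(\lambda)=(\alpha-\lambda)h(\lambda)$ and $g(\lambda^\star)=0$. It suffices to show $g(\lambda)/(\lambda-\lambda^\star)\le -c_g$ for $\lambda\ne\lambda^\star$, i.e.\ that the chord slope is uniformly negative. I split into three regions.
\begin{itemize}
\item \emph{Near $\lambda^\star$.} Since $\lambda^\star>\alpha$, on a neighbourhood $[\lambda^\star-\delta,\lambda^\star+\delta]\subset(\alpha,\lambda_+]$ we have $g'(\lambda)=-(\lambda-\alpha)h(\lambda)\le -(\lambda^\star-\delta-\alpha)\min_{[\alpha,\lambda_+]}h<0$, using continuity and positivity of $h$. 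The mean value theorem then gives a negative chord slope.
\item \emph{Left region $[\alpha,\lambda^\star-\delta]$.} Here $g(\lambda)=H(\lambda)\{\alpha-q(\lambda)\}$, with $H\ge H(\alpha)$ and $\alpha-q(\lambda)\ge\alpha-q(\lambda^\star-\delta)>0$ by monotonicity. So $g$ is bounded below by a positive constant while $|\lambda-\lambda^\star|\le1$, and the chord slope is at most a negative constant.
\item \emph{Right region $[\lambda^\star+\delta,1]$.} Similarly, $q(\lambda)-\alpha\ge q(\lambda^\star+\delta)-\alpha>0$ and $H\ge H(\alpha)$, so $-g$ is bounded below by a positive constant.
\end{itemize}
Taking $c_g$ as the minimum of the three constants completes the proof.

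The main obstacle is the strict inequality $q(\alpha)<\alpha$ together with strict monotonicity. Both rely on the absolute continuity of $\ell(\bX)$ to rule out mass concentrating at a point. I would handle this through $\lambda-q(\lambda)=\Ebb[(\lambda-\ell)\ind\{\ell\le\lambda\}]/H(\lambda)$, which is positive because $\Pbb(\ell<\lambda)=H(\lambda)>0$. The rest is routine.
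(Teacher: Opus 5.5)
Your proof is correct. The monotonicity and Lipschitz bounds, the strict inequality $q(\alpha)<\alpha$, and existence and uniqueness of $\lambda^\star$ follow essentially the paper's argument. The contraction inequality is where your route differs.

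\textbf{Your route.} You work with $g(\lambda)=H(\lambda)\{\alpha-q(\lambda)\}=\alpha H(\lambda)-\int_0^\lambda u\,h(u)\,du$. Its derivative $(\alpha-\lambda)h(\lambda)$ is very clean, but it vanishes at $\lambda=\alpha$. You therefore have to isolate a neighbourhood of $\lambda^\star$ bounded away from $\alpha$. On the two remaining pieces you use a strict sign of $g$ together with $|\lambda-\lambda^\star|\le1$.

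\textbf{The paper's route.} The paper avoids this degeneracy by bounding $q'$ itself from below uniformly on all of $(\alpha,\lambda_+)$. It writes $q'(\lambda)=h(\lambda)H(\lambda)^{-2}\int_0^\lambda(\lambda-u)h(u)\,du$. Using $H\le1$ and discarding the part of the integral over $(\alpha,\lambda)$, it gets $q'\ge c_{\mathrm{loc}}=\{\min_{[\alpha,\lambda_+]}h\}H(\alpha)\{\alpha-q(\alpha)\}$. This yields $(\lambda-\lambda^\star)\{q(\lambda)-\alpha\}\ge c_{\mathrm{loc}}(\lambda-\lambda^\star)^2$ on the whole of $[\alpha,\lambda_+]$ in one step. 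Only $[\lambda_+,1]$ remains, and there a linear bound from $q(\lambda)\ge q(\lambda_+)$ suffices, before multiplying by $H(\lambda)\ge H(\alpha)$.

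\textbf{Comparison.} The paper's version needs two cases instead of three. It also gives an explicit constant, $c_g=H(\alpha)\min\{c_{\mathrm{loc}},(q(\lambda_+)-\alpha)/(1-\lambda^\star)\}$, in terms of primitive quantities. Yours is the generic ``isolated zero with nonvanishing derivative, strict sign away from it'' argument. Its constant depends on an auxiliary $\delta$ and on $q(\lambda^\star\pm\delta)$. That is entirely adequate here, since the proposition only asserts that some $c_g>0$ exists.
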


We formulate the smoothness conditions on the nuisance functions using H\"older classes, a standard framework in nonparametric estimation \citep{tsybakov2009introduction}.

\begin{definition}[H\"older class]
\label{def:holder-class}
Let $\mathcal X\subset\mathbb R^d$ be a compact set, and fix a bounded open convex neighborhood $\mathcal U\supset\mathcal X$. For $s>0$ and $0<L<\infty$, set $r=\lceil s\rceil-1$. The H\"older class $\mathcal H^s(\mathcal X,L)$ consists of the restrictions to $\mathcal X$ of functions $g:\mathcal U\to\mathbb R$ having continuous partial derivatives through order $r$ and satisfying
\begin{equation*}
 \max_{|\nu|\le r}\sup_{\bx\in\mathcal U}|D^\nu g(\bx)|\le L,
 \qquad
 \max_{|\nu|=r}\sup_{\substack{\bx,\bz\in\mathcal U;\bx\ne\bz}}
 \frac{|D^\nu g(\bx)-D^\nu g(\bz)|}{\|\bx-\bz\|^{s-r}}\le L.
\end{equation*}
Here $\nu\in\mathbb N_0^d$ is a multi-index, $|\nu|=\sum_{j=1}^d\nu_j$, and $D^\nu$ denotes the corresponding partial derivative, with $D^0g=g$.
\end{definition}

\begin{assumption}[H\"older model]
\label{ass:nonparametric-model}
The covariate vector $\bX$ has compact support $\mathcal X\subset\mathbb R^d$ and a density satisfying $0<c_X\le f_X(\bx)\le C_X<\infty$ for Lebesgue-almost every $\bx\in\mathcal X$. For some fixed $s>0$ and $0<L<\infty$, $b_0\in\mathcal H^s(\mathcal X,L)$, $F_1(y\mid\cdot)\in\mathcal H^s(\mathcal X,L)$ uniformly in $y\in\mathbb R$.
\end{assumption}

Assumption~\ref{ass:nonparametric-model} imposes smoothness restrictions on the nuisance functions while leaving the choice of estimator open. We consider nuisance estimators satisfying the following time-averaged mean-square bounds on their uniform estimation errors:
\begin{align}
\frac1T\sum_{t=1}^T\Ebb[(\delta_{b,t}^{\infty})^2]=O(r_{b,T}^2),\quad
\frac1T\sum_{t=1}^T\Ebb[(\delta_{F,t}^{\infty})^2]=O(r_{F,T}^2),\quad T\to\infty,
\label{eq:nuisance_rate}
\end{align}
where $\delta_{b,t}^{\infty}=\operatorname*{ess\,sup}_{\bx\sim P_X}|\widehat b_{0,t}(\bx)-b_0(\bx)|$ and $\delta_{F,t}^{\infty}=\operatorname*{ess\,sup}_{\bx\sim P_X}\sup_{|u|\le\delta_U}|\widehat F_{1,t}\{b_0(\bx)+u\mid\bx\}-F_1\{b_0(\bx)+u\mid\bx\}|$. Appendix~\ref{app:weighted-nuisance-proofs} verifies these bounds using stabilized inverse-propensity-weighted spline regression \citep{chen2015optimal} and local-linear regression, respectively. The control reference is estimated by conditional-mean regression, and the treated-outcome CDF by regression of $\ind\{Y\le y\}$ followed by truncation and monotonicization. Under Assumption~\ref{ass:nonparametric-model} on $[0,1]^d$, uniformly sub-Gaussian control errors, and the stated sampling conditions, the weighted spline construction satisfies the bounds with
\begin{equation}\label{eq:ws_rates}
r_{b,T}=r_{F,T}=\bigg(\frac{\log T}{T^{1-\chi}}\bigg)^{s/(2s+d)}\text{ for }\chi\in(0,1).    
\end{equation}

\begin{assumption}[Step sizes]
\label{ass:step-sizes}
The prespecified step-size sequence $(\eta_t)_{t\ge1}$ is positive and nonincreasing. There exist $\zeta\in(1/2,1)$ and constants $0<c_\eta\le C_\eta<\infty$ such that
\begin{equation}
c_\eta t^{-\zeta}\le\eta_t\le C_\eta t^{-\zeta}\quad\text{for all sufficiently large }t.
\label{eq:step-size-order}
\end{equation}
\end{assumption}

Assumption~\ref{ass:step-sizes} adopts a standard polynomial step-size schedule widely used in stochastic approximation \citep{robbins1951stochastic}. It satisfies the classical Robbins--Monro conditions: $\eta_t\to0$, $\sum_{t\ge1}\eta_t=\infty$, and $\sum_{t\ge1}\eta_t^2<\infty$, which balance continued adaptation with control of stochastic fluctuations. The following theorem establishes the main asymptotic FDR control guarantee for the proposed treatment-assignment procedure.

\begin{theorem}[Asymptotic individual treatment FDR control]
\label{thm:fdr-one-pass}
Suppose Assumptions~\ref{ass:local-density}--\ref{ass:step-sizes} hold and the nuisance estimators satisfy \eqref{eq:nuisance_rate} with $r_{T}=r_{b,T}+r_{F,T}$. Let $(\lambda_t)_{t\ge1}$ be generated by \eqref{eq:paits-cutoff-update}. Then the cutoffs converge to $\lambda^\star$ in mean square, i.e.,
\begin{equation*}
 \frac1T\sum_{t=1}^T\Ebb[(\lambda_t-\lambda^\star)^2]\longrightarrow0.
\end{equation*}
Moreover,
\begin{equation}
\max\bigg\{\bigg|\frac{R_T}{T}-H(\lambda^\star)\bigg|,|\FDP_T-\alpha|\bigg\}=O_{\Pbb}\bigg(r_{T}+T^{-\zeta/2}+\frac{\mathfrak H_\chi(T)}{T}\bigg),
\label{eq:one-pass-frequency-fdp-rates}
\end{equation}
where $R_T=\sum_{t=1}^T A_t$ is the count of all treatment assignments, including those made during exploration, and $\mathfrak H_\chi(T)=\sum_{t=1}^T t^{-\chi}$. Therefore, when $r_{T}\to0$, we have
$$
\FDR_T=\Ebb(\FDP_T)\longrightarrow\alpha.
$$
\end{theorem}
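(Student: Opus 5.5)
We analyze the recursion \eqref{eq:paits-cutoff-update} as a projected stochastic approximation driven by a biased mean field, then transfer the cutoff convergence to the counts $R_T$ and $V_T$.

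\textbf{Step 1: drift decomposition.} Condition on $\mathcal F_{t-1}$. Since $B_t$ is independent of $\bX_t$ and of the potential outcomes, we have $\Ebb[S_t(\alpha-\widehat E_t)\mid\mathcal F_{t-1}]=(1-\varepsilon_t)\,\Ebb[G_t(\alpha-\widehat E_t)\mid\mathcal F_{t-1}]$. On $\{S_t=1\}$ the observed outcome is $Y_t=Y_t(1)$, so $\Ebb[\widehat E_t\mid\mathcal G_t]=F_1\{\widehat b_{0,t}(\bX_t)\mid\bX_t\}$. Assumption~\ref{ass:local-density} then gives $|\Ebb[\widehat E_t-E_t\mid\mathcal G_t]|\le C_U\delta_{b,t}^\infty$, once $\delta_{b,t}^\infty\le\delta_U$; the complementary event is handled by Markov's inequality through \eqref{eq:nuisance_rate}. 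Next, $G_t=\ind\{\widehat\ell_t(\bX_t)\le\lambda_t\}$, and $|\widehat\ell_t-\ell|\le\delta_{F,t}^\infty+C_U\delta_{b,t}^\infty$. Because $h$ is bounded (Assumption~\ref{ass:oracle-score}), replacing $G_t$ by $\ind\{\ell(\bX_t)\le\lambda_t\}$ costs $O(\delta_{F,t}^\infty+\delta_{b,t}^\infty)$ in probability. Combining these,
\[
\Ebb[S_t(\alpha-\widehat E_t)\mid\mathcal F_{t-1}]=H(\lambda_t)\{\alpha-q(\lambda_t)\}+\beta_t,
\qquad |\beta_t|\lesssim \delta_{b,t}^\infty+\delta_{F,t}^\infty+\varepsilon_t .
\]

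\textbf{Step 2: Lyapunov recursion.} Projection onto $[\alpha,1]$ is nonexpansive and $\lambda^\star\in[\alpha,1]$. Writing $D_t=\lambda_t-\lambda^\star$, we get
\[
D_{t+1}^2\le D_t^2+2\eta_tD_tS_t(\alpha-\widehat E_t)+\eta_t^2 .
\]
Taking expectations and applying Proposition~\ref{prop:oracle-threshold} yields
\[
\Ebb D_{t+1}^2\le(1-2c_g\eta_t)\Ebb D_t^2+2\eta_t\Ebb|D_t\beta_t|+\eta_t^2 .
\]
Young's inequality gives $2|D_t\beta_t|\le c_gD_t^2+\beta_t^2/c_g$, so
\[
\Ebb D_{t+1}^2\le(1-c_g\eta_t)\Ebb D_t^2+C\eta_t\{\Ebb\beta_t^2+\eta_t\}.
\]
Unrolling this with $\eta_t\asymp t^{-\zeta}$ in the standard way, the fixed point tracks $\Ebb\beta_t^2+\eta_t$. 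Averaging over $t$ and using \eqref{eq:nuisance_rate} together with $T^{-1}\sum_t\varepsilon_t^2\to0$ gives
\[
\frac1T\sum_{t=1}^T\Ebb D_t^2=O\Big(r_T^2+T^{-\zeta}+T^{-1}\sum_{t}\varepsilon_t^2\Big)\to0,
\]
which is the first claim.

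\textbf{Step 3: counts and FDP.} Decompose $R_T/T-H(\lambda^\star)$ into three pieces. The first is a martingale term $T^{-1}\sum_t(A_t-\Ebb[A_t\mid\mathcal F_{t-1}])=O_\Pbb(T^{-1/2})$. The second is the plug-in error, bounded by $T^{-1}\sum_t\Ebb[\delta^\infty_{F,t}+\delta^\infty_{b,t}]=O(r_T)$ by Cauchy--Schwarz. The third is $T^{-1}\sum_t|H(\lambda_t)-H(\lambda^\star)|\le \|h\|_\infty T^{-1}\sum_t|D_t|=O(r_T+T^{-\zeta/2})$ by Cauchy--Schwarz and Step 2. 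Exploration contributes an additional $T^{-1}\sum_t\varepsilon_t=\mathfrak H_\chi(T)\varepsilon_0/T$.

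For the FDP, write $V_T-\alpha R_T=\sum_tS_t(E_t-\alpha)+\sum_tB_tZ_t(E_t-\alpha)$. The exploration sum is $O_\Pbb(\mathfrak H_\chi(T))$. For the plug-in sum, telescoping the unprojected recursion gives
\[
\sum_tS_t(\widehat E_t-\alpha)=\sum_t(\lambda_t-\lambda_{t+1})/\eta_t+\text{(projection corrections)}.
\]
Abel summation bounds the right-hand side by $\lambda$-range$/\eta_T\lesssim T^{\zeta}$, so dividing by $T$ gives $T^{\zeta-1}\le T^{-\zeta/2}$ whenever $\zeta\ge 2/3$. Otherwise one uses $\Ebb D_t^2$ and the drift identity directly: the average of $S_t(E_t-\alpha)$ equals the average of $H(\lambda_t)\{q(\lambda_t)-\alpha\}$ plus a martingale term plus the bias, and $|H(\lambda)\{q(\lambda)-\alpha\}|\lesssim|\lambda-\lambda^\star|$. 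I would use the latter route. The projection corrections vanish in the limit because $\lambda^\star$ is interior, and the difference between $\widehat E_t$ and $E_t$ is handled as in Step 1. Dividing by $R_T/T\to H(\lambda^\star)>0$, which is nonzero since $H(\alpha)>0$, yields \eqref{eq:one-pass-frequency-fdp-rates}.

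Finally, $\FDP_T\in[0,1]$, so convergence in probability combined with bounded convergence gives $\FDR_T\to\alpha$.

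The main obstacle is Step 1's control of the bias $\beta_t$. The estimators $\widehat\ell_t$ and $\widehat b_{0,t}$ are built from adaptively collected data, but they are $\mathcal F_{t-1}$-measurable and $(\bX_t,Y_t(1))$ is fresh, so conditional expectations given $\mathcal F_{t-1}$ remain valid. The delicate part is handling the event where $\delta^\infty_{b,t}>\delta_U$, and converting the bounded-density perturbation of the threshold into an $L^2$ bound that is compatible with \eqref{eq:nuisance_rate}.
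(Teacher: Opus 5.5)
Your proposal is correct. Steps 1 and 2 are essentially the paper's argument: the same drift decomposition into a threshold-crossing term and a $\widehat b_{0,t}$-bias term, and the same projected Lyapunov recursion closed with Young's inequality.

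Three details in Steps 1 and 2 should be tightened:
\begin{itemize}
\item Because \eqref{eq:nuisance_rate} controls only time averages, the unrolling cannot be a pointwise ``fixed point tracks'' argument. You must swap the order of summation and use $\sup_k\eta_k\sum_{t>k}\prod_{j=k+1}^{t-1}(1-c\eta_j)<\infty$, which is what the paper does.
\item The event $\{\delta^\infty_{b,t}>\delta_U\}$ needs no Markov step. There $1\le\delta_U^{-1}\delta^\infty_{b,t}$ deterministically, which is how the paper gets the constant $C_U+\delta_U^{-1}$.
\item By pushing $\varepsilon_t$ into the bias instead of keeping the factor $(1-\varepsilon_t)$ with contraction constant $(1-\varepsilon_0)c_g$, you create an extra term $T^{-1}\sum_t\varepsilon_t^2$. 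You then drop it silently in Step 3. It is harmless, since its square root is $O(T^{-\chi}+(\log T/T)^{1/2})$ and is absorbed into $T^{-\zeta/2}+\mathfrak H_\chi(T)/T$, but you should say so.
\end{itemize}

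The genuine difference is Step 3.

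\emph{The paper's route.} It couples pathwise with the oracle, using $|A_t-A_t^\star|\le|S_t-S_t^0(\lambda^\star)|+B_t$ and $\Ebb[|S_t-S_t^0(\lambda^\star)|\mid\mathcal F_{t-1}]\le 2\overline h|\lambda_t-\lambda^\star|+\Gamma_t$. It then compares $R_T/T$ and $V_T/T$ separately with the i.i.d.\ averages of $A_t^\star$ and $A_t^\star E_t$, whose means are $H(\lambda^\star)$ and $\alpha H(\lambda^\star)$. This needs no martingale argument and never touches $\widehat E_t$. The same boundary inequality is reused in the regret proof.

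\emph{Your drift route.} You center $R_T$ and $V_T-\alpha R_T$ at their predictable compensators. You then use $\Ebb[S_t(E_t-\alpha)\mid\mathcal F_{t-1}]=(1-\varepsilon_t)H(\lambda_t)\{q(\lambda_t)-\alpha\}+O(\Gamma_t)$, together with the fact that $\lambda\mapsto\int_0^\lambda(u-\alpha)h(u)\,du$ is $\overline h$-Lipschitz and vanishes at $\lambda^\star$. This is equally valid and gives the same rate, since the martingale parts are $O_{\Pbb}(T^{-1/2})\le T^{-\zeta/2}$. It bounds $V_T-\alpha R_T$ directly, which is slightly more economical for the FDP.

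Drop the telescoping alternative. It yields the rate only for $\zeta\ge 2/3$ and would need a quantitative bound on the projection corrections. In the drift route your remark about $\widehat E_t$ versus $E_t$ is moot, since $E_t$ enters directly.
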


The convergence rate \eqref{eq:one-pass-frequency-fdp-rates} in Theorem~\ref{thm:fdr-one-pass} separates three error sources. The term $r_{T}$ captures estimation error arising from estimation of both the treated-outcome distribution and the control reference. The term $T^{-\zeta/2}$ reflects stochastic error in the adaptive cutoff update. Finally, $\mathfrak H_\chi(T)/T$ accounts for exploratory treatments, which enter the FDP despite being assigned independently of the plug-in recommendation. Exploration therefore improves nuisance estimation but contributes directly to selection error.

For weighted spline regression, using the convergence rates \eqref{eq:ws_rates} gives, for $0<\chi<1$,
$$
|\FDP_T-\alpha|=O_{\Pbb}\big\{(\log T)^{s/(2s+d)}T^{-(1-\chi)s/(2s+d)}+T^{-\zeta/2}+T^{-\chi}\big\}.
$$
The nuisance term decays at rate $T^{-(1-\chi)s/(2s+d)}$, which slows as
$\chi$ increases, whereas the exploration term decays at rate $T^{-\chi}$.
Equating the two exponents yields $\chi=s/(3s+d)$, which maximizes the polynomial rate of this bound. Thus, the exploration schedule $\varepsilon_t=\varepsilon_0t^{-s/(3s+d)}$ optimizes the polynomial order of this bound, provided that $\zeta/2\ge s/(3s+d)$. With this choice,
$$
|\FDP_T-\alpha|=O_{\Pbb}\big\{(\log T)^{s/(2s+d)}T^{-s/(3s+d)}\big\}.
$$

In addition, if the uniform nuisance estimation errors vanish in mean square, the following corollary establishes mean-square convergence of the adaptive cutoff.

\begin{corollary}[Mean-square convergence of the cutoff]
\label{cor:paits-cutoff-mean-square}
Under the assumptions of Theorem~\ref{thm:fdr-one-pass}, suppose in addition that $\Ebb[(\delta_{b,t}^{\infty})^2]+\Ebb[(\delta_{F,t}^{\infty})^2]\longrightarrow0$ as $t\to\infty$. Then $\lambda_t$ converges to $\lambda^\star$ in mean square: $\Ebb[(\lambda_t-\lambda^\star)^2]\longrightarrow0$.
\end{corollary}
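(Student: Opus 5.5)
We track $D_t=\Ebb[(\lambda_t-\lambda^\star)^2]$ through a one-step recursion for the projected Robbins--Monro update \eqref{eq:paits-cutoff-update}. The aim is an inequality of the form
\begin{equation*}
D_{t+1}\le(1-c\,\eta_t)D_t+C\eta_t^2+C\eta_t\,\Ebb[\delta_t],
\end{equation*}
where $\delta_t$ collects the uniform nuisance errors $\delta_{b,t}^\infty$, $\delta_{F,t}^\infty$ and the exploration probability $\varepsilon_t$. Under the extra hypothesis, $\Ebb[\delta_t]\to0$, so a standard Chung/Robbins--Siegmund-type lemma for deterministic sequences gives $D_t\to0$. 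This uses $\sum\eta_t=\infty$ and $\eta_t\to0$ from Assumption~\ref{ass:step-sizes}.

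The recursion is derived as follows.

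\emph{Nonexpansiveness.} Since $\lambda^\star\in[\alpha,1]$, projection is nonexpansive, so
\begin{equation*}
(\lambda_{t+1}-\lambda^\star)^2\le(\lambda_t-\lambda^\star)^2+2\eta_tS_t(\lambda_t-\lambda^\star)(\alpha-\widehat E_t)+\eta_t^2.
\end{equation*}

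\emph{Conditioning on $\mathcal F_{t-1}$.} The estimators $\widehat\ell_t$ and $\widehat b_{0,t}$ and the cutoff $\lambda_t$ are $\mathcal F_{t-1}$-measurable. The randomization \eqref{eq:generic-sequential-randomization} and independence of $B_t$ then give
\begin{equation*}
\Ebb[S_t(\alpha-\widehat E_t)\mid\mathcal F_{t-1}]=(1-\varepsilon_t)\,\Ebb\Big[\ind\{\widehat\ell_t(\bX)\le\lambda_t\}\big(\alpha-F_1\{\widehat b_{0,t}(\bX)\mid\bX\}\big)\Big].
\end{equation*}

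\emph{Comparison with the oracle drift.} We compare this drift with $H(\lambda_t)\{\alpha-q(\lambda_t)\}=\Ebb[\ind\{\ell(\bX)\le\lambda_t\}(\alpha-\ell(\bX))]$, incurring two errors.

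First, Assumption~\ref{ass:local-density} gives $|F_1\{\widehat b_{0,t}(\bx)\mid\bx\}-\ell(\bx)|\le C_U\delta_{b,t}^\infty$ whenever $\delta_{b,t}^\infty\le\delta_U$.

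Second, $|\widehat\ell_t-\ell|\le\delta_{F,t}^\infty+C_U\delta_{b,t}^\infty=:\Delta_t$ on the same event. Hence the indicator sets differ only on $\{|\ell(\bX)-\lambda_t|\le\Delta_t\}$, whose probability is at most $2\|h\|_\infty\Delta_t$ by Assumption~\ref{ass:oracle-score}.

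On the complementary event $\delta_{b,t}^\infty>\delta_U$, we bound the drift crudely by $1$ and use Markov's inequality, $\Pbb(\delta_{b,t}^\infty>\delta_U)\le\Ebb[(\delta_{b,t}^\infty)^2]/\delta_U^2\to0$.

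\emph{Closing the recursion.} Proposition~\ref{prop:oracle-threshold} bounds the oracle part by $-c_g(\lambda_t-\lambda^\star)^2$. Multiplying by the factor $(1-\varepsilon_t)\ge1/2$ for large $t$, and using $|\lambda_t-\lambda^\star|\le1$, we obtain the recursion with
\begin{equation*}
c=c_g,\qquad \Ebb[\delta_t]\lesssim\Ebb[\Delta_t]+\Pbb(\delta_{b,t}^\infty>\delta_U)+\varepsilon_t.
\end{equation*}
By Jensen's inequality, $\Ebb[\Delta_t]\le(\Ebb[\Delta_t^2])^{1/2}\to0$.

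\emph{Deterministic lemma.} We apply the following: if $D_{t+1}\le(1-c\eta_t)D_t+\eta_t g_t$ with $g_t\to0$, $\eta_t\to0$ and $\sum\eta_t=\infty$, then $D_t\to0$. Here $g_t=C\eta_t+C\Ebb[\delta_t]$. The proof fixes $\epsilon>0$ and takes $t_0$ with $g_t\le c\epsilon$ for $t\ge t_0$. Then $D_t-\epsilon$ contracts by the factor $\prod(1-c\eta_s)\to0$ whenever positive, giving $\limsup D_t\le\epsilon$.

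\emph{Main obstacle.} The hardest step is the comparison of the indicator sets. The function $\widehat\ell_t$ is random but $\mathcal F_{t-1}$-measurable, while $\bX_t$ is independent of $\mathcal F_{t-1}$. Integrating over $\bX_t$ with $\widehat\ell_t$ and $\lambda_t$ held fixed therefore justifies the band-probability bound. The bounded density $h$ must hold on all of $[0,1]$, which Assumption~\ref{ass:oracle-score} provides.
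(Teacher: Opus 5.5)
Your argument is correct and follows the paper's route. You rederive the mean-square recursion \eqref{eq:one-pass-mean-square-recursion} of Proposition~\ref{prop:paits-cutoff-moments} (projection nonexpansiveness, the oracle contraction of Proposition~\ref{prop:oracle-threshold}, and the drift-perturbation bound) and then close with a Chung-type deterministic lemma. The differences are minor: you keep the nuisance perturbation linear (bounding $|\lambda_t-\lambda^\star|\le1$ and using Jensen) rather than quadratic via $2uv\le cu^2+c^{-1}v^2$; you handle $\{\delta_{b,t}^\infty>\delta_U\}$ by Markov rather than the pointwise bound $1\le\delta_U^{-1}|\widehat b_{0,t}-b_0|$; and your $\epsilon$-shift $D_{t+1}-\epsilon\le(1-c\eta_t)(D_t-\epsilon)$, valid once $c\eta_t<1$, replaces the paper's telescoping total-weight bound $\sum_{k\ge m}\eta_kK_{k,t}\le1/c$.
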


Having established asymptotic FDR control, we next quantify the cost of learning and exploration through the gap-weighted assignment regret.

\begin{theorem}[Gap-weighted assignment regret]
\label{thm:one-pass-weighted-selection-regret}
Under the assumptions of Theorem~\ref{thm:fdr-one-pass}, let $\widehat{\Regret}_T$ and $\Regret_T$ denote the predictable and expected regrets defined in \eqref{eq:weighted-selection-regret}. There exists a constant $C<\infty$, independent of $T$, such that
\begin{align}
\widehat{\Regret}_T&\le C\sum_{t=1}^T\{(\delta_{b,t}^{\infty})^2+(\delta_{F,t}^{\infty})^2+|\lambda_t-\lambda^\star|^2+\varepsilon_t\}\quad\text{a.s.},
\label{eq:one-pass-weighted-selection-quadratic-transfer}\\
\Regret_T &\le CT\{r_{b,T}^2+r_{F,T}^2+\bar\eta_T+\bar\varepsilon_T+T^{-1}\}
\label{eq:one-pass-weighted-selection-quadratic-rate}
\end{align}
for all sufficiently large $T$, where $r_{b,T}$ and $r_{F,T}$ are defined in \eqref{eq:nuisance_rate}, $\bar\eta_T=\sum_{t=1}^T\eta_t/T$, and $\bar\varepsilon_T=\sum_{t=1}^T\varepsilon_t/T$. Moreover, $\widehat{\Regret}_T=O_{\Pbb}[T\{r_{b,T}^2+r_{F,T}^2+\bar\eta_T+\bar\varepsilon_T+T^{-1}\}]$. Consequently,
$$
\frac{\Regret_T}{T}\longrightarrow0,
\qquad
\frac{\widehat{\Regret}_T}{T}\xrightarrow{\Pbb}0.
$$
\end{theorem}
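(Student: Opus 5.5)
We will prove the bound one round at a time and then sum. Fix $t$ and condition on $\mathcal F_{t-1}$. Then $\widehat\ell_t$, $\lambda_t$ and $\varepsilon_t$ are fixed, while $\bX_t$, $B_t$ and $Z_t$ are fresh and independent of $\mathcal F_{t-1}$. Since $A_t=(1-B_t)G_t+B_tZ_t$, we have
\[
|A_t-A_t^\star|\le (1-B_t)|G_t-A_t^\star|+B_t,
\]
so the conditional per-round regret is at most
\[
\Ebb[|\ell(\bX_t)-\lambda^\star|\,|G_t-A_t^\star|\mid\mathcal F_{t-1}]+\varepsilon_t,
\]
using $|\ell-\lambda^\star|\le1$. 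This already produces the $\varepsilon_t$ term. It remains to bound the plug-in disagreement term quadratically in the nuisance and cutoff errors.

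\textbf{Step 1: uniform bound on the score error.} We show that, uniformly in $\bx$,
\[
|\widehat\ell_t(\bx)-\ell(\bx)|\le \Delta_t:=C_U\delta_{b,t}^\infty+\delta_{F,t}^\infty \quad\text{whenever } \delta_{b,t}^\infty\le\delta_U.
\]
To see this, write $u=\widehat b_{0,t}(\bx)-b_0(\bx)$ and decompose
\[
\widehat F_{1,t}\{b_0+u\mid\bx\}-F_1\{b_0+u\mid\bx\}+F_1\{b_0+u\mid\bx\}-F_1\{b_0\mid\bx\}.
\]
The first difference is at most $\delta_{F,t}^\infty$ by definition, since $|u|\le\delta_U$. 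The second is at most $C_U|u|$ by Assumption~\ref{ass:local-density}, because $F_1(b_0(\bx)+u\mid\bx)=\Pbb(U\le u\mid\bx)$. When $\delta_{b,t}^\infty>\delta_U$, the per-round regret is trivially at most $1\le(\delta_{b,t}^\infty/\delta_U)^2$, which is absorbed into the constant.

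\textbf{Step 2: margin argument.} A disagreement $G_t\ne A_t^\star$ requires $\ell(\bX_t)$ to lie between $\lambda^\star$ and $\widehat\ell_t(\bX_t)$ shifted by $\lambda_t-\lambda^\star$. Concretely, it forces
\[
|\ell(\bX_t)-\lambda^\star|\le \Delta_t+|\lambda_t-\lambda^\star|=:D_t.
\]
Hence the disagreement term is at most
\[
\Ebb[|\ell(\bX)-\lambda^\star|\ind\{|\ell(\bX)-\lambda^\star|\le D_t\}]\le D_t\,\Pbb(|\ell(\bX)-\lambda^\star|\le D_t)\le 2\|h\|_\infty D_t^2,
\]
using the bounded density of $\ell(\bX)$ from Assumption~\ref{ass:oracle-score}. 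This is a margin condition with exponent one. Applying $(a+b+c)^2\le3(a^2+b^2+c^2)$ then gives \eqref{eq:one-pass-weighted-selection-quadratic-transfer} almost surely.

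\textbf{Step 3: expected rate.} Taking expectations and using \eqref{eq:nuisance_rate} converts the nuisance terms into $T(r_{b,T}^2+r_{F,T}^2)$, and the exploration term is $\sum_t\varepsilon_t=T\bar\varepsilon_T$. The cutoff term needs the non-asymptotic bound
\[
\sum_{t=1}^T\Ebb(\lambda_t-\lambda^\star)^2\le C\Big\{\sum_t\eta_t+\sum_t\Ebb[(\delta_{b,t}^\infty)^2+(\delta_{F,t}^\infty)^2]+\sum_t\varepsilon_t+1\Big\}.
\]
I expect this stochastic-approximation bound to be the main obstacle. The argument I would try first goes as follows.

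Expand $(\lambda_{t+1}-\lambda^\star)^2$ using the fact that projection onto $[\alpha,1]$ is nonexpansive and $\lambda^\star\in(\alpha,1)$. This gives
\[
\Ebb[(\lambda_{t+1}-\lambda^\star)^2\mid\mathcal F_{t-1}]\le(\lambda_t-\lambda^\star)^2+2\eta_t(\lambda_t-\lambda^\star)\Ebb[S_t(\alpha-\widehat E_t)\mid\mathcal F_{t-1}]+\eta_t^2.
\]
Next compare the conditional drift with $(1-\varepsilon_t)H(\lambda_t)\{\alpha-q(\lambda_t)\}$. The drift equals
\[
(1-\varepsilon_t)\Ebb[\ind\{\widehat\ell_t\le\lambda_t\}(\alpha-\widehat\ell_t^{\,\dagger})],
\]
where $\widehat\ell_t^{\,\dagger}=F_1(\widehat b_{0,t}\mid\bx)$. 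By Step 1 and the bounded density $h$, this differs from the oracle drift by $O(\Delta_t)$. Proposition~\ref{prop:oracle-threshold} then gives
\[
\text{drift}\le -c\,\eta_t(\lambda_t-\lambda^\star)^2+C\eta_t|\lambda_t-\lambda^\star|\Delta_t+\eta_t^2.
\]
Applying Young's inequality to the cross term yields
\[
a_{t+1}\le(1-c'\eta_t)a_t+C\eta_t\Ebb\Delta_t^2+C\eta_t^2,\qquad a_t=\Ebb(\lambda_t-\lambda^\star)^2.
\]
Rearranging to $c'\eta_t a_t\le a_t-a_{t+1}+\dots$ and dividing by $\eta_t$ is delicate because $\eta_t$ decreases. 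Summation by parts with $1/\eta_t$ nondecreasing, and $1/\eta_t-1/\eta_{t-1}=O(t^{\zeta-1})$, controls the telescoped terms by $O(\sum_t\eta_t)$ plus lower order. This gives the bound $T\bar\eta_T+T(r_{b,T}^2+r_{F,T}^2)+O(1)$, where the $O(1)$ term accounts for the initial value and the $T^{-1}$ term.

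\textbf{Step 4: in-probability statement and conclusion.} Since $\widehat{\Regret}_T\ge0$ and $\Ebb\widehat{\Regret}_T=\Regret_T$, Markov's inequality gives the $O_\Pbb$ statement. Finally $\bar\eta_T=O(T^{-\zeta})\to0$, $\bar\varepsilon_T=O(T^{-\chi})\to0$ and $r_T\to0$, so $\Regret_T/T\to0$ and $\widehat{\Regret}_T/T\to0$ in probability.
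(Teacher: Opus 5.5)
Your per-round analysis (Steps 1, 2 and 4) follows the paper's argument. It uses the same split $|A_t-A_t^\star|\le(1-B_t)|G_t-A_t^\star|+B_t$, the same margin event $|\ell(\bX_t)-\lambda^\star|\le\delta_{\ell,t}^\infty+|\lambda_t-\lambda^\star|$ integrated against the bounded density $h$, and Markov's inequality at the end. Your separate treatment of $\{\delta_{b,t}^\infty>\delta_U\}$ plays the role of the $\delta_U^{-1}$ term in the paper's bound $\delta_{\ell,t}^\infty\le\delta_{F,t}^\infty+(C_U+\delta_U^{-1})\delta_{b,t}^\infty$.

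\textbf{The gap is in Step 3.} It occurs in going from $a_{t+1}\le(1-c'\eta_t)a_t+C\eta_td_t+C\eta_t^2$, with $d_t=\Ebb[(\delta_{b,t}^\infty)^2+(\delta_{F,t}^\infty)^2]$, to $\sum_{t\le T}a_t\le C(1+\sum_t d_t+\sum_t\eta_t)$. After dividing by $\eta_t$ and summing by parts, the remainder is $\sum_{t=2}^T a_t(1/\eta_t-1/\eta_{t-1})$. Your claim that this term is $O(\sum_t\eta_t)$ does not follow. With only the a priori bound $a_t\le1$ and increments of order $t^{\zeta-1}$, you get $O(T^{\zeta})$. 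That is larger than $T\bar\eta_T\asymp T^{1-\zeta}$ because $\zeta>1/2$, and it can also exceed the $T^{(2s+d)/(4s+d)}$ rate used later.

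For smooth schedules, absorption does work. Since the increments tend to zero, $\sum_{t\ge t_0}a_t(1/\eta_t-1/\eta_{t-1})\le(c'/2)\sum_t a_t$, and this can be moved to the left-hand side. But absorption requires $1/\eta_t-1/\eta_{t-1}\to0$, which Assumption~\ref{ass:step-sizes} does not give. That assumption only imposes monotonicity and $c_\eta t^{-\zeta}\le\eta_t\le C_\eta t^{-\zeta}$. A piecewise-constant schedule that halves on geometrically spaced blocks satisfies it, yet $1/\eta_t-1/\eta_{t-1}$ is of order $t^{\zeta}$ at the jump times. A smaller point: the drift comparison in Step 3 also relies on Step 1, so it needs the same case split. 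On $\{\delta_{b,t}^\infty>\delta_U\}$ the discrepancy is at most $2\le2\delta_{b,t}^\infty/\delta_U$.

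\textbf{How the paper handles this step.} It avoids summation by parts altogether, in Proposition~\ref{prop:paits-cutoff-moments}, which the proof of the theorem simply invokes. It unrolls the recursion to $a_t\le a_n\prod_{j=n}^{t-1}(1-c\eta_j)+\sum_{k=n}^{t-1}\eta_kK_{k,t}(C'd_k+\eta_k)$, with $K_{k,t}=\prod_{j=k+1}^{t-1}(1-c\eta_j)$. It then sums over $t$, exchanges the order of summation, and uses $\sup_{k}\eta_k\sum_{t>k}K_{k,t}<\infty$ together with $\sum_{t}\prod_{j=n}^{t-1}(1-c\eta_j)<\infty$. Both facts follow from $1-x\le e^{-x}$ and the two-sided polynomial bounds alone. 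This gives $\sum_{t\le T}a_t\le C_1+C_2\sum_t d_t+C_3\sum_t\eta_t$ under exactly the stated hypotheses.

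To close the gap, either adopt that argument, or restrict to schedules such as $\eta_t=\eta_0t^{-\zeta}$ and state the absorption step explicitly.
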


Theorem~\ref{thm:one-pass-weighted-selection-regret} shows that nuisance-estimation and cutoff errors contribute quadratically to regret, whereas exploration contributes linearly. For weighted spline regression, substituting the convergence rates \eqref{eq:ws_rates} gives, for $0<\chi<1$,
$$
\Regret_T=O\bigg\{T\bigg(\frac{\log T}{T^{1-\chi}}\bigg)^{2s/(2s+d)}+T^{1-\zeta}+T^{1-\chi}\bigg\}.
$$
Balancing the polynomial orders of the nuisance-estimation
and exploration contributions yields $\chi=(2s)/(4s+d)$. The cutoff contribution is then of smaller order, and 
$$\Regret_T=O\{(\log T)^{2s/(2s+d)}T^{(2s+d)/(4s+d)}\}.$$
The same rate holds for $\widehat{\Regret}_T$ in probability. This exploration choice optimizes the polynomial order of the regret bound and differs from $\chi=s/(3s+d)$, which optimizes the FDP bound. Section~\ref{subsec:minimax} compares this polynomial order with a minimax lower bound under the same uniform learning requirement.

\subsection{Minimax regret under uniform nuisance learning}
\label{subsec:minimax}

We next characterize the tradeoff between assignment regret and nuisance-estimation accuracy. We establish a lower bound on the regret incurred by procedures that attain a prescribed uniform nuisance-estimation accuracy while controlling asymptotic FDR. Under the uniform regularity conditions stated below, PAITS achieves the corresponding minimax rate up to logarithmic factors.

Let $\mathcal A$ denote a nonanticipating assignment procedure defined for all $t\ge1$, together with its nuisance estimators. At time $t$, the estimators are based on $\mathcal F_{t-1}$, and the assignment uses only $\mathcal F_{t-1}$, $\bX_t$, and independent randomization. The resulting feedback consists solely of $Y_t(A_t)$. For a model class $\mathcal P$, deterministic sequences $r_{b,T},r_{F,T}>0$ tending to zero, and a fixed constant $C_0>0$, let $\mathfrak A_T(\mathcal P;r_{b,T},r_{F,T},C_0)$ denote the class of such procedures satisfying asymptotic FDR control,
\begin{equation}
\limsup_{T\to\infty}\FDR_T(\mathcal A,P)\le\alpha
\qquad\text{for every }P\in\mathcal P,
\label{eq:minimax-admissible-fdr}
\end{equation}
and the uniform time-averaged mean-square error bounds
\begin{equation}
\begin{aligned}
\sup_{P\in\mathcal P}\frac1T\sum_{t=1}^T\Ebb_{P,\mathcal A}\big[(\delta_{b,t}^{\infty})^2\big]&\le C_0r_{b,T}^2,\\
\sup_{P\in\mathcal P}\frac1T\sum_{t=1}^T\Ebb_{P,\mathcal A}\big[(\delta_{F,t}^{\infty})^2\big]&\le C_0r_{F,T}^2.
\end{aligned}
\label{eq:minimax-uniform-nuisance-constraint}
\end{equation}
Here $\delta_{b,t}^{\infty}$ and $\delta_{F,t}^{\infty}$ are the uniform estimation errors defined after \eqref{eq:nuisance_rate}, with population quantities evaluated under $P$. The same procedure must satisfy these requirements throughout $\mathcal P$ and cannot depend on the unknown distribution.

Writing $\Regret_T(\mathcal A,P)$ for the expected gap-weighted
assignment regret in \eqref{eq:weighted-selection-regret}, define
the constrained minimax regret as
\begin{equation}
\mathfrak R_T(\mathcal P;r_{b,T},r_{F,T})
:=\inf_{\mathcal A\in\mathfrak A_T(\mathcal P;r_{b,T},r_{F,T},C_0)}
  \sup_{P\in\mathcal P}\Regret_T(\mathcal A,P).
\label{eq:minimax-regret-definition}
\end{equation}
We suppress the fixed constant $C_0$ in the notation. This criterion characterizes the tradeoff between assignment regret and nuisance learning: it is the smallest worst-case regret attainable at the prescribed estimation accuracy while maintaining asymptotic FDR control.

Let $\mathcal P^{\mathrm{np}}(s)$ denote the collection of all distributions satisfying Assumptions~\ref{ass:local-density}, \ref{ass:oracle-score}, and \ref{ass:nonparametric-model} on $\mathcal X=[0,1]^d$, with the same regularity and oracle-stability constants. Throughout this class, take $b_0(\bx)=\Ebb_P\{Y(0)\mid\bX=\bx\}$ and assume that the control errors are uniformly sub-Gaussian: for a fixed $1\le\sigma^2<\infty$,
\begin{equation}
\Ebb_P[\exp\{u(Y(0)-b_0(\bX))\}\mid\bX]
\le \exp(\sigma^2u^2/2)\qquad\text{a.s.},
\quad u\in\mathbb R,\quad P\in\mathcal P^{\mathrm{np}}(s).
\label{eq:minimax-control-subgaussian}
\end{equation}
The historical sample follows the randomized design in Section~\ref{subsec:causal-setting}, with expected control and treatment sample sizes $n_{0,\mathrm{hist}}=(1-\pi)n_{\mathrm{hist}}$ and $n_{1,\mathrm{hist}}=\pi n_{\mathrm{hist}}$, respectively.

\begin{theorem}[Regret lower bound under prescribed learning accuracy]
\label{thm:minimax-nonparametric-fixed}
For sufficiently small $r_{b,T}$ and $r_{F,T}$, there exist constants $c,C_{\mathrm{hist}}>0$, depending only on $C_0$ and the fixed model-class constants, such that
\begin{equation}
\begin{aligned}
\mathfrak R_T(\mathcal P^{\mathrm{np}}(s);r_{b,T},r_{F,T})\ge
c\min\{T,\{r_{b,T}^{-(2s+d)/s}-C_{\mathrm{hist}}n_{0,\mathrm{hist}}\big\}_++\{r_{F,T}^{-(2s+d)/s}-C_{\mathrm{hist}}n_{1,\mathrm{hist}}\big\}_+\},
\end{aligned}
\label{eq:minimax-nonparametric-fixed}
\end{equation}
where $z_+=\max\{z,0\}$. 
\end{theorem}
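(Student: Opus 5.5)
The plan is a reduction: uniform nuisance accuracy forces many observations of a particular arm in a region where that arm is strictly oracle-suboptimal, and each such observation costs a fixed amount of gap-weighted regret. I treat the $b_0$ and $F_1$ terms separately and add the resulting bounds. The bound for the class is at least the larger of the two, hence at least half their sum. The $\min\{T,\cdot\}$ enters because at most $T$ online observations exist.

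\emph{Step 1 (base model with gap regions).} I construct a base distribution $P_0\in\mathcal P^{\mathrm{np}}(s)$ with Gaussian control errors of variance $\sigma^2$ and $\ell$ satisfying Assumption~\ref{ass:oracle-score}. I choose it so that there are two cubes $B_1,B_2\subset[0,1]^d$ of fixed volume with $\ell\le\lambda^\star-2c_1$ on $B_1$ and $\ell\ge\lambda^\star+2c_1$ on $B_2$. On $B_1$ the oracle treats, so every control assignment there costs at least $2c_1$. On $B_2$ the oracle withholds, so every treatment assignment there costs at least $2c_1$. Writing $N_{0,T}(B_1)$ and $N_{1,T}(B_2)$ for the online counts of control assignments in $B_1$ and treatment assignments in $B_2$, this gives
\[
\Regret_T(\mathcal A,P)\ge 2c_1\,\Ebb_P\{N_{0,T}(B_1)+N_{1,T}(B_2)\}.
\]
This inequality must survive the perturbations below, so it suffices that the gaps stay at least $c_1$ under every perturbed $P$.

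\emph{Step 2 (perturbations for $b_0$).} I partition $B_1$ into $M\asymp h^{-d}$ cells and set $b_0^{(\omega)}=b_0+\sum_j\omega_j L h^s\psi\{(\bx-\bx_j)/h\}$ for $\omega$ in a Varshamov--Gilbert packing of $\{0,1\}^M$, with a smooth compactly supported bump $\psi$. I leave $F_1$ unchanged. Then $\ell$ moves by $O(h^s)$ only on $B_1$, so $\lambda^\star$ moves by $O(h^s)$ by Proposition~\ref{prop:oracle-threshold} (the Lipschitz property of $q$). The gaps stay at least $c_1$, and Assumptions~\ref{ass:local-density}--\ref{ass:nonparametric-model} continue to hold with the same constants once $h$ is small. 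Distinct hypotheses are $\gtrsim h^s$ apart in sup-norm. Only control observations in $B_1$ carry information about $\omega$.

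\emph{Step 3 (Fano with adaptive sampling; main obstacle).} I compare each hypothesis with $P_0$. By the chain rule for KL under the nonanticipating design,
\[
\KL(P_\omega^{\le t},P_0^{\le t})\lesssim h^{2s}\sigma^{-2}\sum_j\Ebb_{P_0}[n_{0,j,t}],
\]
where $n_{0,j,t}$ counts control observations in cell $j$ before time $t$, including historical ones. Averaging over $\omega$ bounds the mutual information by $h^{2s}\,\Ebb_{P_0}[n_{0,t}(B_1)]$ up to constants. Fano then gives $\sup_P\Ebb[(\delta_{b,t}^\infty)^2]\gtrsim h^{2s}$ unless
\[
\Ebb_{P_0}[n_{0,t}(B_1)]\gtrsim h^{-2s}\log M\ge h^{-(2s+d)}.
\]
The difficulty is that counts are taken under $P_0$ while the regret is evaluated under each $P_\omega$. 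I would handle this by keeping the per-step KL bounded so that the expectations are comparable, or simply by evaluating the regret at $P_0$, which lies in the class.

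\emph{Step 4 (from accuracy to counts).} I set $h^s=K r_{b,T}$ with $K$ large. The time-averaged constraint \eqref{eq:minimax-uniform-nuisance-constraint} forces some $t\le T$ with $\sup_P\Ebb(\delta_{b,t}^\infty)^2\le C_0r_{b,T}^2<c\,h^{2s}$. Step~3 then yields $\Ebb[n_{0,t}(B_1)]\ge c\,r_{b,T}^{-(2s+d)/s}$. Subtracting the historical control observations in $B_1$, whose expected number is at most $C_{\mathrm{hist}}n_{0,\mathrm{hist}}$, gives
\[
\Ebb N_{0,T}(B_1)\ge\{c\,r_{b,T}^{-(2s+d)/s}-C_{\mathrm{hist}}n_{0,\mathrm{hist}}\}_+.
\]

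\emph{Step 5 (the $F_1$ term).} I repeat Steps~2--4 for $F_1$ by perturbing $F_1(\cdot\mid\bx)$ near $b_0(\bx)$ on $B_2$ by bumps of size $h^s$. These perturbations are informative only through treated observations in $B_2$, which yields the $r_{F,T}$ term with $n_{1,\mathrm{hist}}$. The FDR constraint \eqref{eq:minimax-admissible-fdr} is not needed for the lower bound beyond restricting the infimum.
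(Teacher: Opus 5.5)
Your architecture matches the paper's: regret is at least a gap times the expected number of oracle-opposite assignments in two gap regions, with bumps of height $\asymp r$ on cells of side $h\asymp r^{1/s}$, the adaptive KL chain rule, and subtraction of historical information. The paper instead runs per-coordinate two-point tests between hypercube neighbors $\omega,\omega^{k,j}$ (Pinsker on each pair, then summing over $j$); replacing this by Fano over an index set $\Omega$, with separate $b$- and $F$-families combined through a maximum, would be a legitimate variant. As written, however, Steps~3--4 contain errors that break the argument.

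First, the chain rule puts the expectations under the \emph{first} argument: $\KL(P_\omega^{\le t},P_0^{\le t})\lesssim h^{2s}\sum_j\omega_j\Ebb_{P_\omega}[n_{0,j,t}]$, not $\Ebb_{P_0}$. With the correct measure, Fano's bound $I(\omega;\mathrm{data})\le|\Omega|^{-1}\sum_\omega\KL(P_\omega,P_0)$ lower-bounds the $\omega$-average of $\Ebb_{P_\omega}N_{0,T}(B_1)$. Some member $P_\omega$ of the class then has large regret, and the obstacle you describe disappears. Neither of your remedies works as stated. A bounded per-step KL does not make counts comparable under $P_0$ and $P_\omega$, because in the relevant regime the total KL must be of order $\log|\Omega|\asymp M\to\infty$. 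Evaluating regret at $P_0$ needs $\Ebb_{P_0}$-counts; these come from $\KL(P_0,P_\omega)$ and fit two-point tests of $P_0$ against single-bump alternatives (a valid repair, close to the paper's argument), not Fano's mutual-information bound.

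Second, Step~4 infers from $\sup_P T^{-1}\sum_t\Ebb_P(\delta_{b,t}^\infty)^2\le C_0r_{b,T}^2$ that some single $t$ has $\sup_P\Ebb_P(\delta_{b,t}^\infty)^2\le C_0r_{b,T}^2$. This interchanges $\sup$ and average in the wrong direction; no single time need be uniformly accurate. You must either also average over the finite family, or use the paper's uniformly random time $\tau$. In the first case some $t$ has $|\Omega|^{-1}\sum_\omega\Ebb_{P_\omega}(\delta_{b,t}^\infty)^2\le C_0r_{b,T}^2$, which suffices for the average-error form of Fano. In the second, $\Ebb_{P_\omega}(\delta_{b,\tau}^\infty)^2\le C_0r_{b,T}^2$ for every $\omega$, and the KL between the laws of $(\tau,\mathsf H_{\tau-1})$ is at most the horizon-$T$ KL.

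Finally, $h^{-2s}\log M\ge h^{-(2s+d)}$ is false with $M\asymp h^{-d}$ cells. What you need is $\log|\Omega|\asymp M$, which the full hypercube gives; sup-norm separation needs only one differing coordinate, so Varshamov--Gilbert is unnecessary.

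The construction also needs repair. Perturbing $b_0$ while $F_1(\cdot\mid\bx)$ has positive density near $b_0(\bx)$ moves $\ell$ by bumps of height $\asymp h^s$ but slope $\asymp h^{s-1}$. For $s<1$ these slopes dominate the base gradient of $\ell$ and create critical points. For instance when $d=1$, the density of $\ell(\bX)$ then becomes unbounded, so the alternatives leave $\mathcal P^{\mathrm{np}}(s)$, whose Assumption~\ref{ass:oracle-score} constants are fixed. The same happens in Step~5 if $F_1\{b_0(\bx)\mid\bx\}$ is changed. (Bounding the shift of $\lambda^\star$ also needs more than Lipschitz continuity of $q$.)

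The paper sidesteps all of this by giving every alternative the same score $\ell_0$:
\begin{itemize}
\item On the oracle-treated cube, $F_1(\cdot\mid\bx)$ is flat on $[-2\delta_U,2\delta_U]$, so perturbing $b_0$ by at most $\delta_U/2$ leaves $\ell$ unchanged.
\item On the oracle-control cube, $b_0=0$ and $F_1$ is perturbed by moving mass between two intervals inside $(0,\delta_U)$. This preserves $F_1(0\mid\bx)=\ell_0(\bx)$ while changing $F_1(u_0\mid\bx)$ by $\pm\Delta_F$ with $u_0<\delta_U$, so the change is detectable through $\delta_{F,t}^\infty$.
\end{itemize}
Keeping the treated density bounded below on those intervals also yields the $\chi^2$ bound on the treated-outcome KL, which your Step~5 leaves implicit. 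With these repairs your Fano route gives the same rate. The paper's neighbor-testing argument has the mild advantage of delivering both count bounds under every single $P_\omega$ of one product family, rather than only on average.
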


The theorem characterizes the tradeoff between assignment regret and nuisance estimation accuracy under partial feedback. In the worst case, learning the control reference requires control observations in regions where the oracle assigns treatment, whereas learning the treated-outcome distribution requires treatment observations in regions where the oracle assigns control. These informative assignments incur a nonvanishing regret per observation, so greater estimation accuracy entails a larger assignment cost. The two terms in \eqref{eq:minimax-nonparametric-fixed} separate the costs of learning the two nuisance functions, with the historical-sample corrections accounting for information available before online assignment begins. Because the lower bound also holds without the FDR restriction, it identifies a cost intrinsic to nuisance learning under partial feedback.

To compare this lower bound with the preceding PAITS regret guarantee, we use the uniform nuisance-estimation rate attained by the nonparametric estimators in Appendix~\ref{app:weighted-nuisance-proofs}. The following corollary establishes a matching polynomial regret rate, provided that the PAITS guarantees hold uniformly over the same model class.

\begin{corollary}[Minimax optimality under prescribed learning accuracy]
\label{cor:minimax-nonparametric-rate}
Suppose $n_{\mathrm{hist}}=O(1)$. Consider $\mathcal A_{\mathrm{PAITS}}$ using nonparametric nuisance estimators satisfying the uniform accuracy condition with $\varepsilon_t\asymp t^{-2s/(4s+d)}$, $\eta_t\asymp t^{-\zeta}$ and $\zeta\in(1/2,1)$. Define
\begin{equation}
 \rho_T=(\log T)^{s/(2s+d)}T^{-s/(4s+d)},\qquad T\ge2.
 \label{eq:minimax-common-radius}
\end{equation}
Suppose the conditions of Theorems~\ref{thm:fdr-one-pass} and \ref{thm:one-pass-weighted-selection-regret} hold with the same constants over $\mathcal P^{\mathrm{np}}(s)$, and
\begin{equation}
\sup_{P\in\mathcal P^{\mathrm{np}}(s)}\frac1T\sum_{t=1}^T
\Ebb_{P,\mathcal A_{\mathrm{PAITS}}}
[(\delta_{b,t}^{\infty})^2+(\delta_{F,t}^{\infty})^2]
\le C_{\mathrm{nuis}}\rho_T^2
\label{eq:minimax-paits-uniform-nuisance}
\end{equation}
for all $T\ge T_0$, where $C_{\mathrm{nuis}}>0$ and $T_0$ are independent of $P$ and $T$. For any fixed $C_0\ge C_{\mathrm{nuis}}$, PAITS attains the minimax regret rate $T^{(2s+d)/(4s+d)}$ up to logarithmic factors over $\mathcal P^{\mathrm{np}}(s)$, among admissible procedures with prescribed estimation accuracy $r_{b,T}=r_{F,T}=\rho_T$.
\end{corollary}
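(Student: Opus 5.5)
The corollary follows by sandwiching $\mathfrak R_T(\mathcal P^{\mathrm{np}}(s);\rho_T,\rho_T)$ between a lower bound from Theorem~\ref{thm:minimax-nonparametric-fixed} and an upper bound from Theorem~\ref{thm:one-pass-weighted-selection-regret} applied uniformly over the class. The lower and upper rates will agree in their polynomial order and differ only by logarithmic factors.

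\emph{Admissibility.} I first check that $\mathcal A_{\mathrm{PAITS}}\in\mathfrak A_T(\mathcal P^{\mathrm{np}}(s);\rho_T,\rho_T,C_0)$.

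The uniform nuisance constraint \eqref{eq:minimax-uniform-nuisance-constraint} follows directly from \eqref{eq:minimax-paits-uniform-nuisance}. Each of the two time-averaged errors is bounded by their sum, hence by $C_{\mathrm{nuis}}\rho_T^2\le C_0\rho_T^2$ for $T\ge T_0$. For $T<T_0$ the constraint holds after enlarging constants, since $\rho_T$ is bounded below on that finite range and the errors are bounded.

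The FDR requirement \eqref{eq:minimax-admissible-fdr} comes from Theorem~\ref{thm:fdr-one-pass}. Since $\rho_T\to0$, we have $r_T\to0$, and so $\FDR_T\to\alpha$ for every $P$. The theorem applies because its conditions are assumed to hold over the class with the same constants.

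\emph{Upper bound.} Take $\varepsilon_t\asymp t^{-\chi}$ with $\chi=2s/(4s+d)$. Then $T\bar\varepsilon_T\asymp T^{1-\chi}=T^{(2s+d)/(4s+d)}$.

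For the step sizes, $T\bar\eta_T\asymp T^{1-\zeta}$. The theorem places no restriction on $\zeta$ beyond $\zeta\in(1/2,1)$. If $1-\zeta$ exceeds $(2s+d)/(4s+d)$, I would restrict to $\zeta\ge 2s/(4s+d)$. This is always compatible with $\zeta\in(1/2,1)$ because $2s/(4s+d)<1/2$, so one may take $\zeta\in(1/2,1)$ arbitrarily and the cutoff term $T^{1-\zeta}\le T^{1/2}$ is of smaller order. This holds because $(2s+d)/(4s+d)>1/2$.

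The nuisance term is $T\rho_T^2=(\log T)^{2s/(2s+d)}T^{1-2s/(4s+d)}=(\log T)^{2s/(2s+d)}T^{(2s+d)/(4s+d)}$.

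Plugging these into \eqref{eq:one-pass-weighted-selection-quadratic-rate}, with the constant $C$ uniform over $P$ by hypothesis, gives
\[
\sup_{P\in\mathcal P^{\mathrm{np}}(s)}\Regret_T(\mathcal A_{\mathrm{PAITS}},P)\lesssim(\log T)^{2s/(2s+d)}T^{(2s+d)/(4s+d)}.
\]
By admissibility, this also bounds $\mathfrak R_T$ from above.

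\emph{Lower bound.} Apply Theorem~\ref{thm:minimax-nonparametric-fixed} with $r_{b,T}=r_{F,T}=\rho_T$, which is small for large $T$. We have
\[
\rho_T^{-(2s+d)/s}=(\log T)^{-1}T^{(2s+d)/(4s+d)}\to\infty.
\]
Because $n_{\mathrm{hist}}=O(1)$, the corrections $C_{\mathrm{hist}}n_{a,\mathrm{hist}}$ are bounded, so each positive part is at least $\tfrac12\rho_T^{-(2s+d)/s}$ for large $T$. Moreover, this quantity is $o(T)$, so the minimum with $T$ is inactive. Hence
\[
\mathfrak R_T\ge c\,T^{(2s+d)/(4s+d)}/\log T.
\]

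Combining the two bounds, the upper and lower rates share the polynomial order $T^{(2s+d)/(4s+d)}$ and differ by a factor of at most $(\log T)^{1+2s/(2s+d)}$. This is the claim.

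\emph{Main obstacle.} The delicate point is uniformity. Theorem~\ref{thm:one-pass-weighted-selection-regret} is stated pointwise in $P$, with an unspecified "sufficiently large $T$". I would rely on the stated hypothesis that these theorems hold "with the same constants" over the class, so that $C$ and the threshold $T$ are independent of $P$. I would also check that the $T^{-1}$ and finite-$t$ terms are uniformly controlled, using the boundedness of the loss $|\ell-\lambda^\star|\le1$.
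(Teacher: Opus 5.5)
Your proposal is correct and follows the paper's proof essentially step for step. It establishes admissibility of PAITS via \eqref{eq:minimax-paits-uniform-nuisance} and Theorem~\ref{thm:fdr-one-pass}, then the uniform upper bound from \eqref{eq:one-pass-weighted-selection-quadratic-rate} with the step-size term negligible because $1-\zeta<1/2<(2s+d)/(4s+d)$, and finally the lower bound $cT^{(2s+d)/(4s+d)}/\log T$ from Theorem~\ref{thm:minimax-nonparametric-fixed} with the historical corrections and the minimum with $T$ eventually inactive. One small point: since $C_0$ is fixed, you cannot ``enlarge constants'' to cover $T<T_0$, but you do not need to, because membership in the admissible class for all sufficiently large $T$ (as in the paper) suffices for a rate statement.
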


Accurate assignment and accurate estimation are distinct objectives. A procedure may achieve small assignment regret without reliably estimating the control reference or the treated-outcome probabilities that inform treatment selection. This distinction is particularly important under partial feedback, where favorable assignment performance may coexist with limited information about one treatment condition in parts of the covariate space. We therefore treat nuisance learning as an explicit objective alongside treatment assignment. The prescribed estimation requirement ensures that procedures are compared at a the same level of statistical accuracy, rather than solely by their agreement with the oracle rule. The resulting minimax criterion quantifies the assignment cost of acquiring sufficiently accurate information about both treatment conditions. Thus, the learning constraint is essential to the joint estimation-and-assignment objective considered here, although it is not required for asymptotic FDR control alone.

\section{Simulation Studies}
\label{sec:simulation-settings}

We first conduct a simulation study to evaluate the finite-sample performance of the proposed method in terms of FDR control and assignment regret. We generate independent units with $\bX\sim\operatorname{Unif}([0,1]^d)$ and $Y(a)=f_a(\bX)+\epsilon_a$ for $a\in\{0,1\}$, where $\epsilon_0,\epsilon_1\stackrel{\mathrm{iid}}{\sim}N(0,1)$ are independent of the covariates and assignment randomization. Historical and online samples share the same distribution. Set $b_0(\bx)=\E(Y(0)\mid\bx)=f_0(\bx)$, $f_1(\bx)=f_0(\bx)+\Delta(\bx)$, and hence $\ell(\bx)=\Phi\{-\Delta(\bx)\}$, where $\Phi(\cdot)$ is the standard normal distribution function and $\Delta(\cdot)$ is the signal strength. We consider the following three models:
\begin{enumerate}
    \item[M1] (linear): $d=1$, $f_0(x)=x$, and $\Delta(x)=6x-3$.

    \item[M2] (nonlinear): $d=1$, $f_0(x)=\sin(2\pi x)+0.2\cos(4\pi x)$, $\Delta(x)=6x-3+0.6\sin(2\pi x)$.

    \item[M3] (nonlinear with interaction): $d=2$, $f_0(\bx)=0.4\sin(2\pi x_1)+0.4\cos(2\pi x_2)+1.2(x_1-0.5)(x_2-0.5)$, $\Delta(\bx)=4x_1-2+0.5\sin(2\pi x_2)$.
\end{enumerate}

For the comparisons in Figures~\ref{fig:simulation-fdr} and \ref{fig:simulation-regret}, we implement PAITS with four weighted regression methods for estimating the control reference $b_0$: local-linear regression (LL), B-spline regression with a growing basis (B-spline), random forest regression (RF), and nearest-neighbor regression (kNN). For the control-reference estimator $\widehat b_{0,t}$, LL combines kernel and observation weights in local linear fits; B-spline uses weighted least squares over an expanding basis; RF averages weighted predictions across trees; and kNN averages outcomes within neighborhoods defined by a prescribed cumulative-weight threshold. Appendix~\ref{app:simulation-estimators} provides further details on these four estimators. In all cases, we estimate the treated-outcome conditional distribution $F_1$ directly via weighted B-spline regression, following Proposition~\ref{prop:spline-nuisance} in Appendix~\ref{app:weighted-nuisance-proofs}. Treatment selection uses the resulting score $\widehat\ell_t(\bx)=\widehat F_{1,t}\{\widehat b_{0,t}(\bx)\mid\bx\}$. Unless otherwise specified, we set $\alpha=0.1$ and historical sample size $n_{\mathrm{hist}}=2000$, and use $T=50{,}000$ online arrivals and $100$ independent replications per configuration. Historical assignments and assignments during exploration each have treatment probability $0.5$.

\begin{figure}[t]
    \centering
    \includegraphics[width=\textwidth]{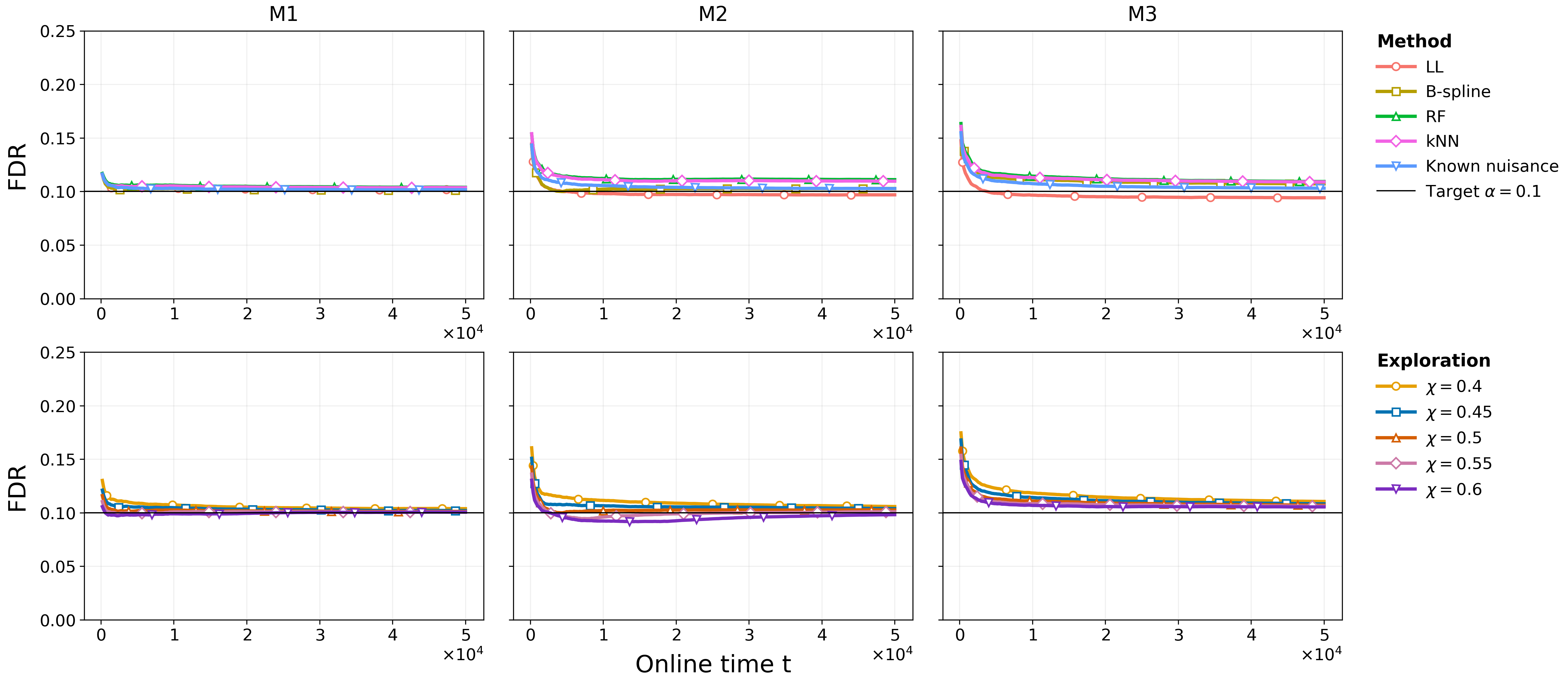}
    \caption{Empirical FDR over online time for Models M1--M3. The top row compares four control-reference estimators at $\chi=0.5$, and includes the known-nuisance setting. The bottom row varies $\chi$, using weighted B-spline estimation for both nuisance functions. The horizontal line denotes the nominal FDR level $\alpha=0.1$.}
    \label{fig:simulation-fdr}
\end{figure}

We first examine the FDR performance of PAITS. Figure~\ref{fig:simulation-fdr} plots the empirical FDR over online time for the three simulation models. The top row compares the four control-reference estimators at $\chi=0.5$, with weighted B-spline estimation of $F_1$ in each case. It also includes a known-nuisance setting in which both $b_0(\cdot)$ and $F_1(\cdot\mid\cdot)$ are set to their true data-generating values. The bottom row varies the exploration exponent $\chi\in\{0.4,0.45,0.5,0.55,0.6\}$, using weighted B-spline estimation for both nuisance functions. In M1, the empirical FDR approaches the nominal level rapidly for all four control-reference estimators. The differences are more pronounced in M2 and M3. In M2, LL and B-spline remain closest to the target, whereas RF and kNN exhibit larger upward deviations. In M3, LL settles below the nominal level, while B-spline, RF, and kNN remain moderately above it. A small upward deviation is also present in the known-nuisance setting, indicating that nuisance estimation does not fully account for the finite-sample discrepancy. This is consistent with the theoretical decomposition in Theorem~\ref{thm:fdr-one-pass}, where exploratory treatment assignments contribute directly to the FDP. In the bottom row, the FDR trajectories approach the nominal level across the values of $\chi$ considered, although M3 still shows upward deviations over the simulated horizon.

\begin{figure}[ht]
    \centering
    \includegraphics[width=\textwidth]{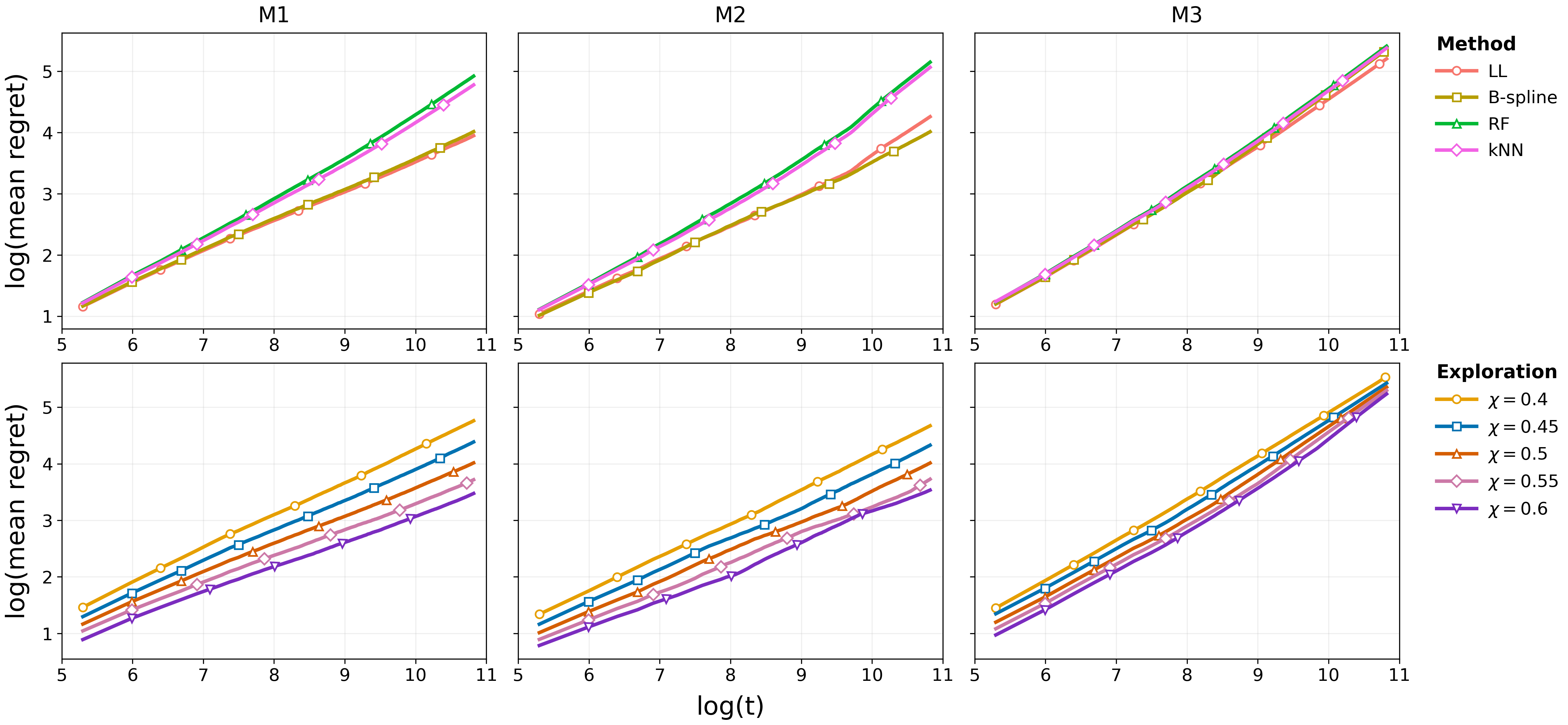}
    \caption{Log--log plots of the mean cumulative gap-weighted assignment regret for Models M1--M3. The top row compares four control-reference estimators at $\chi=0.5$. The bottom row varies $\chi$, using weighted B-spline estimation for both nuisance functions. }
    \label{fig:simulation-regret}
\end{figure}

We next examine assignment efficiency through the cumulative gap-weighted regret. Figure~\ref{fig:simulation-regret} shows the empirical trajectories of mean cumulative regret on a log--log scale. The approximately linear trajectories on the log--log scale are consistent with the polynomial dependence on $T$ in our theoretical regret bounds. Supplementary log--log linear fits and the corresponding slopes are reported in Appendix~\ref{app:simulation-loglog-fits}. In M1 and M2, LL and B-spline yield lower cumulative regret than RF and kNN, with the differences becoming more pronounced as online observations accumulate. In M3, the four methods exhibit similar regret trajectories, although LL yields somewhat lower regret toward the end of the simulated horizon. The bottom row of Figure~\ref{fig:simulation-regret} shows that the exploration effect depends on the model. Larger values of $\chi$ yield lower cumulative regret in all three models over the plotted range. The separation between the exploration settings is more pronounced in M1 and M2 than in M3 toward the end of the horizon. Together with Figure~\ref{fig:simulation-fdr}, these results illustrate the finite-sample tradeoff between FDR performance and assignment efficiency.

\begin{figure}[ht]
    \centering
    \includegraphics[width=\textwidth]{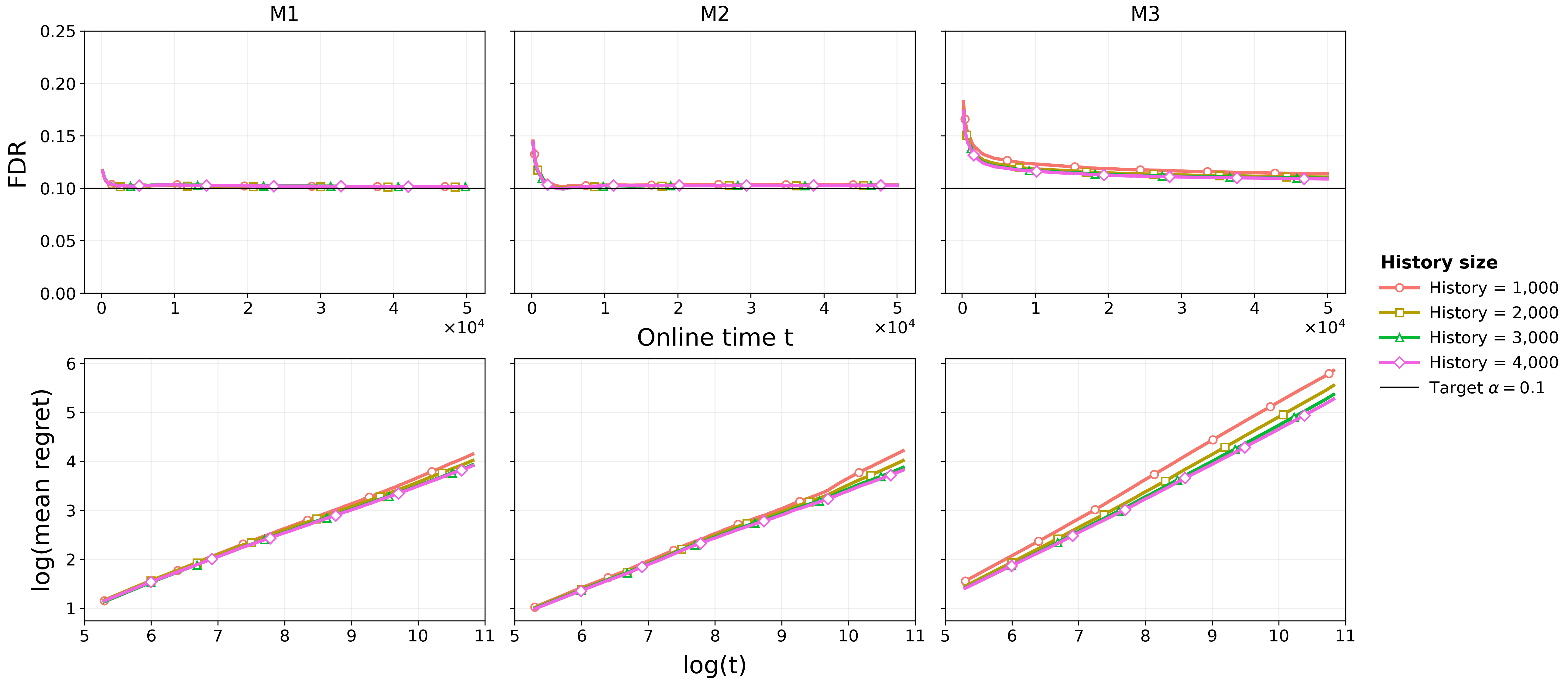}
    \caption{Sensitivity to the historical sample size for Models M1--M3. The historical sample sizes are $n_{\mathrm{hist}}\in\{1000,2000,3000,4000\}$. The top row reports the empirical FDR over online time, and the bottom row reports the log--log plots of the mean cumulative gap-weighted assignment regret.}
    \label{fig:history-sensitivity}
\end{figure}

We further examine the sensitivity of PAITS to the amount of historical information available at initialization and to the step-size schedule used in the adaptive cutoff update. Both analyses use weighted B-spline estimators for $b_0$ and $F_1$, with $\chi=0.5$ in M1 and M2 and $\chi=0.4$ in M3.

Figure~\ref{fig:history-sensitivity} examines the effect of the historical sample used to initialize PAITS. In M1 and M2, the empirical FDR is close to the nominal level across all four historical sample sizes, indicating relatively little sensitivity to initialization in these simpler settings. The effect is more pronounced in M3, where increasing $n_{\mathrm{hist}}$ reduces the finite-sample FDR deviation. This is consistent with the role of the historical sample in providing more accurate initial nuisance estimates before substantial online feedback has accumulated. Larger historical samples yield lower cumulative regret in all three models, with a more pronounced reduction in M3.

\begin{figure}[t]
    \centering
    \includegraphics[width=\textwidth]{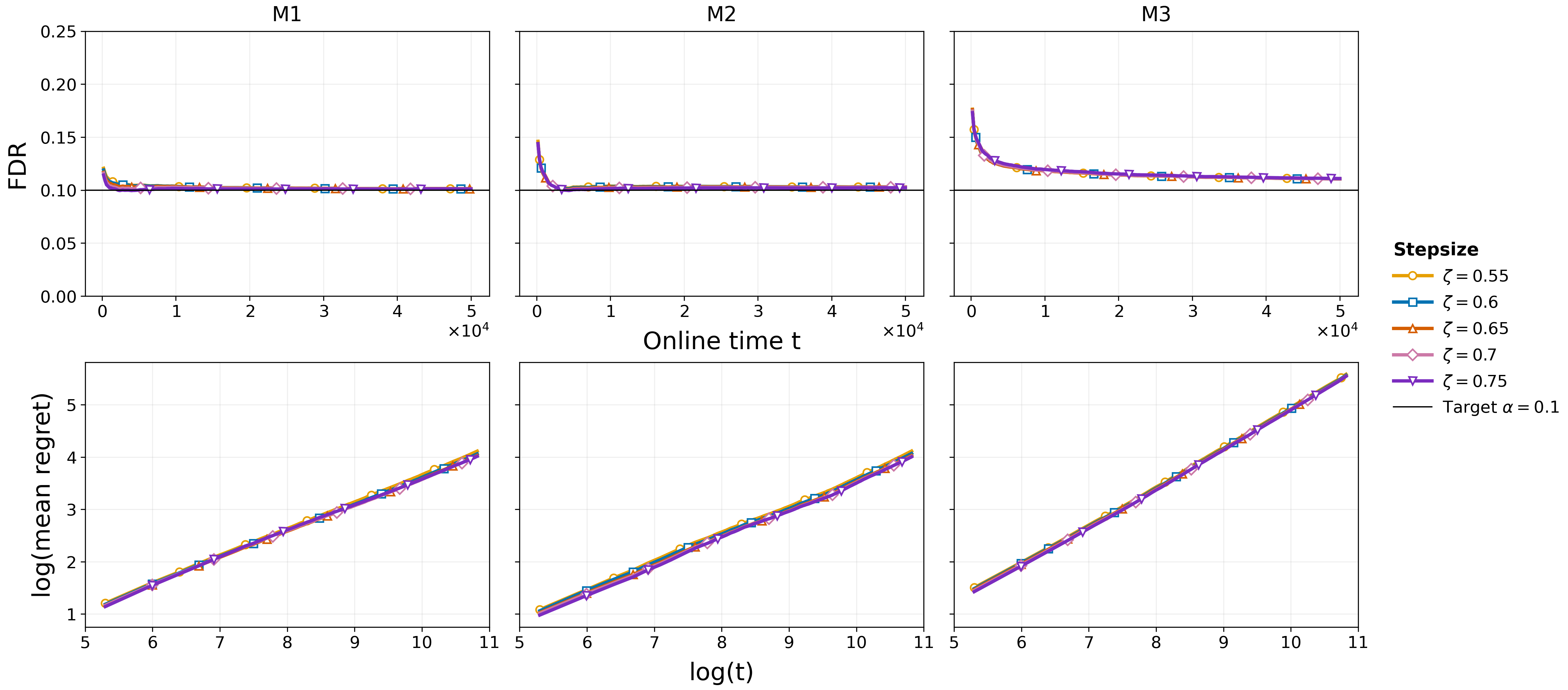}
    \caption{Sensitivity to the step-size exponent $\zeta$ for Models M1--M3. The adaptive cutoff uses a polynomial step-size schedule $\eta_t\asymp t^{-\zeta}$. The top row reports the empirical FDR over online time, and the bottom row reports the log--log plots of the mean cumulative gap-weighted assignment regret.}
    \label{fig:stepsize-sensitivity}
\end{figure}

Figure~\ref{fig:stepsize-sensitivity} investigates sensitivity to the step-size exponent $\zeta$ in Assumption~\ref{ass:step-sizes}. For the five values $\zeta\in\{0.55,0.60,0.65,0.70,0.75\}$, the FDR trajectories are very similar within each model. Thus, the FDR performance is relatively insensitive to the choice of $\zeta$ over the range considered. The regret results also exhibit limited sensitivity to the step-size exponent, with modest variation across the values of $\zeta$ considered. This suggests that PAITS is relatively robust to the choice of $\zeta$ within the admissible range.

\section{Real-data analysis: Jester recommendation}
\label{subsec:jester-real-data}

We further evaluate PAITS using the Jester recommendation dataset \citep{goldberg2001eigentaste}, which contains anonymous joke ratings from 73,421 users. We retain the 70,434 users with all four ratings available and treat this subset as the finite population for evaluation. Let $R_{ij}\in[-10,10]$ denote the rating given by user $i$ to joke $j$. We consider a recommendation problem that chooses between jokes 17 and 50, using the user's ratings for jokes 5 and 13 as pre-decision contextual information. Specifically, we define
\begin{equation*}
    \bX_i=\bigg(\frac{R_{i5}+10}{20},\frac{R_{i13}+10}{20}\bigg)^\top,
    \qquad
    Y_i(0)=R_{i17},
    \qquad
    Y_i(1)=R_{i50}.
\end{equation*}
The ratings are linearly rescaled from $[-10,10]$ to $[0,1]$ for numerical convenience. Then, $A_i=0$ corresponds to recommending joke 17 and $A_i=1$ to recommending joke 50. Hence, a recommendation of joke 50 is counted as a false discovery when its realized rating fails to exceed the conditional mean rating for joke 17 at the same covariate value.

Although both potential outcomes are available in the original data, we retain only the outcome under the assigned treatment to reproduce the partial-feedback setting. We initialize PAITS with $10,000$ randomly assigned historical observations and then run the procedure for $T=10{,}000$ online arrivals. The historical sample is fixed across repeated runs, while the online sample, arrival order, and exploration randomization are varied. For PAITS, we estimate the control reference $b_0(\bx)$ by weighted B-spline regression, and the outcome distribution $F_1(\cdot\mid\bx)$ by a weighted local empirical distribution function. For offline evaluation, we use the complete finite population to estimate the corresponding population quantities and thereby obtain the oracle policy defined in Lemma~\ref{lem:oracle-assignment-optimality}.  Sections~\ref{app:jester-online-nuisance}--\ref{app:jester-evaluation}
in Appendix~\ref{app:jester-details} provide further implementation and evaluation details. We set the target FDR level to $\alpha=0.10$ and perform five replay replications, each using a randomized ordering of the evaluation population. 

\begin{figure}[ht]
    \centering
    \includegraphics[width=0.98\textwidth]{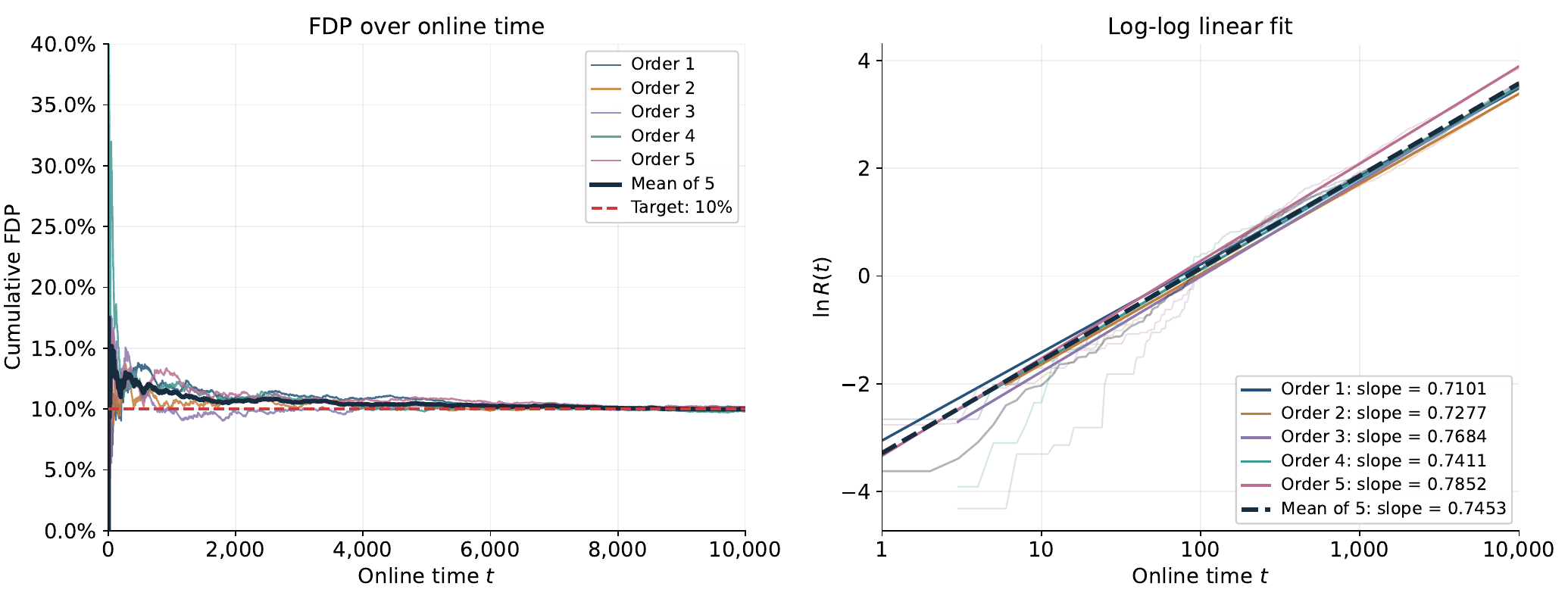}
    \caption{
    Performance of PAITS on the Jester data over five randomized replay orders. Left: cumulative false discovery proportion over online time, where the thick solid line represents the pointwise average over the five orders and the dashed line denotes the target level $\alpha=0.10$. Right: log--log trajectories of cumulative gap-weighted assignment regret relative to the finite-population oracle, together with the fitted log--log slopes.
    }
    \label{fig:jester-results}
\end{figure}

Figure~\ref{fig:jester-results} summarizes the resulting FDP and regret performance. The left panel shows the cumulative FDP over online time. The FDP exhibits noticeable fluctuations early in the sequence, when relatively few treatment assignments have been made, but becomes increasingly stable as feedback accumulates. By the end of the online horizon, all five trajectories, along with their pointwise average, are close to the target level of $10\%$.

The right panel of Figure~\ref{fig:jester-results} shows the log--log trajectories of the cumulative gap-weighted assignment regret relative to the finite-population oracle. The fitted slopes range from $0.710$ to $0.785$ across the five replay orders, with a slope of $0.745$ for the average regret trajectory. The slopes are consistently below one, indicating sublinear regret growth over the observed online horizon. These results demonstrate favorable performance of PAITS in the Jester replay evaluation and suggest its potential for online recommendation systems and related sequential decision-making applications.

\section{Conclusion}
\label{sec:con}
We develop PAITS, an online procedure for individualized treatment selection with asymptotic FDR control under partial outcome feedback. The procedure integrates adaptive nuisance estimation, randomized exploration, and feedback-driven threshold adaptation. Our theoretical results establish asymptotic FDR control over all treated individuals. Under the prescribed uniform nuisance-estimation requirements, PAITS attains the constrained minimax regret rate up to logarithmic factors.

Several directions are worth further investigation. One is to develop finite-sample FDR control at the nominal level while accounting for uncertainty in the estimated control reference and conditional null risk. Another is to extend the framework to delayed outcomes and changing outcome distributions. A further direction is to characterize optimal assignment regret under weaker, decision-focused learning requirements. This could clarify the gains from concentrating exploration where information is most relevant to treatment selection, while providing insight into how nuisance learning, treatment assignment, and false discovery control jointly shape the design of adaptive procedures.

\bibliographystyle{chicago}
\bibliography{ref}

\clearpage

\appendix

\section{Proofs of the Theoretical Results}

\subsection{Oracle Threshold}

We first consider the oracle threshold.

\begin{proof}[Proof of Lemma \ref{lem:oracle-assignment-optimality}]

By the definition of $\lambda^\star$, we have
$$
    \Ebb[\{\ell(\bX)-\alpha\}A^\star]=0,
$$
which indicates that the oracle assignment rule satisfies the marginal FDR constraint
$$\Ebb \bigg[\sum_{t=1}^TA_t^*\ind\{Y_t(1)\le b_0(X_t)\}\bigg]\le\alpha\Ebb \bigg[\sum_{t=1}^T A_t^* \bigg].$$

For any competing assignment $A$, the definition of $A^\star$ implies that $\{\ell(\bX)-\lambda^\star\}(A-A^\star)\geq0$ almost surely. We also have the following decomposition 
\begin{align*}
    \Ebb[\{\ell(\bX)-\alpha\}A]=\Ebb[\{\ell(\bX)-\lambda^\star\}(A-A^\star)]+(\lambda^\star-\alpha)\{\Ebb(A)-\Ebb(A^\star)\}.
\end{align*}

If $\Ebb(A)>\Ebb(A^\star)$, the right-hand side is strictly positive, contradicting the marginal FDR constraint $\Ebb[\{\ell(\bX)-\alpha\}A]\le0$. Hence no feasible assignment treats a larger proportion of the population than $A^\star$. If a feasible assignment satisfies $\Ebb(A)=\Ebb(A^\star)$, the first term in the preceding display must equal zero. Because its integrand is nonnegative and $\Pbb\{\ell(\bX)=\lambda^\star\}=0$, this implies $A=A^\star$ almost surely, establishing uniqueness. Finally, under the oracle assignment,
$$
   \operatorname{mFDR}_T(A^\star)= \Pbb\{Y(1)\leq b_0(\bX)|A^\star=1\}=\Ebb[\ell(\bX)|\ell(\bX)\leq\lambda^\star]=\alpha.
$$
Therefore, using the fixed-rule identity above,
$$
\FDR_T=\operatorname{mFDR}_T(A^\star)\Pbb(R_T>0)=\alpha\Pbb(R_T>0)\leq\alpha.
$$

Moreover, since $\lambda^\star\le1$, for any feasible assignment rule $A$,
\begin{align*}
\Ebb[\{1-\ell(\bX)\}(A^\star-A)]=(1-\lambda^\star)\{\Ebb(A^\star)-\Ebb(A)\}+\Ebb[\{\ell(\bX)-\lambda^\star\}(A-A^\star)]\ge0.
\end{align*}
Thus, $A^\star$ also maximizes the probability of a successful treatment assignment, $\Ebb[\{1-\ell(\bX)\}A]$, subject to the marginal FDR constraint.
\end{proof}

\begin{proof}[Proof of Proposition~\ref{prop:oracle-threshold}]
Note that we have $H(\lambda)\ge H(\alpha)>0$ for $\lambda\in[\alpha,1]$, and that
\begin{equation}\label{eq:def_qlamb}
    q(\lambda)=\frac{\int_0^\lambda u h(u)\,du}{H(\lambda)},
\end{equation}
is absolutely continuous on $[\alpha,1]$, with derivative
$$
q'(\lambda)=\frac{h(\lambda)}{H(\lambda)}\{\lambda-q(\lambda)\}=\frac{h(\lambda)}{H(\lambda)^2}\int_0^\lambda(\lambda-u)h(u)du
$$
almost everywhere. Thus $0\le q'(\lambda)\le \|h\|_\infty/H(\alpha)$ almost everywhere, so $q$ is nondecreasing and Lipschitz continuous. Absolute continuity of the score distribution and $H(\alpha)>0$ imply $q(\alpha)<\alpha$. Since $h(\cdot)$ is continuous and strictly positive on $[\alpha,\lambda_+]$, the derivative formula gives
$$
q'(\lambda)\ge\{\min_{u\in[\alpha,\lambda_+]}h(u)\}H(\alpha)\{\alpha-q(\alpha)\}=:c_{\mathrm{loc}}>0,\quad\text{for}\quad \lambda\in(\alpha,\lambda_+).
$$
Here we used $H(\lambda)\le1$ and restricted the integral to $[0,\alpha]$. Consequently, $q(\cdot)$ is strictly increasing on $[\alpha,\lambda_+]$. Together with $q(\alpha)<\alpha<q(\lambda_+)$, this establishes a unique $\lambda^\star\in(\alpha,\lambda_+)$ satisfying $q(\lambda^\star)=\alpha$. Monotonicity excludes additional roots above $\lambda_+$.

For $\lambda\in[\alpha,\lambda_+]$, the derivative bound yields
$$
(\lambda-\lambda^\star)\{q(\lambda)-\alpha\}\ge c_{\mathrm{loc}}(\lambda-\lambda^\star)^2.
$$
For $\lambda\in[\lambda_+,1]$, monotonicity instead gives
$$
q(\lambda)-\alpha\ge q(\lambda_+)-\alpha\ge\frac{q(\lambda_+)-\alpha}{1-\lambda^\star}(\lambda-\lambda^\star).
$$
Combining these inequalities with $H(\lambda)\ge H(\alpha)$ establishes
$$
(\lambda-\lambda^\star)H(\lambda)\{\alpha-q(\lambda)\}\le -c_g(\lambda-\lambda^\star)^2,
\quad\text{for}\quad\lambda\in[\alpha,1],
$$
where
$$
c_g=H(\alpha)\min\bigg\{c_{\mathrm{loc}},\frac{q(\lambda_+)-\alpha}{1-\lambda^\star}\bigg\}>0.
$$
\end{proof}

\subsection{Proofs of Theorem~\ref{thm:fdr-one-pass} and Corollary~\ref{cor:paits-cutoff-mean-square}}
\label{app:individual-fdr-proof}

Throughout this subsection, all assignments, estimators, and cutoffs are those defined in Section~\ref{sec:method}. Write $r_{T}=r_{b,T}+r_{F,T}$ and let $\overline h=\|h\|_\infty$. Theorem~\ref{thm:fdr-one-pass} uses the time-averaged nuisance bounds in \eqref{eq:nuisance_rate}, with $r_{b,T}+r_{F,T}\to0$.

The following lemma shows that randomized exploration supplies infinitely many observations from each treatment condition, while the proportion of exploratory assignments vanishes. It also quantifies the exploration contribution to the treatment-frequency and FDP bounds. Throughout, $C$ denotes a generic positive constant whose value may vary from line to line.

\begin{lemma}[Exploration counts]
\label{lem:epsilon-greedy-exposure}
Let $p_1=\pi_{\mathrm{exp}}$, $p_0=1-\pi_{\mathrm{exp}}$, and $s_T=\sum_{t=1}^T\varepsilon_t$. For $N_{\mathrm{exp},a}(T)=\sum_{t=1}^T B_t\ind\{Z_t=a\}$,
\begin{equation*}
 \frac{N_{\mathrm{exp},a}(T)}{p_as_T}\longrightarrow1
 \quad\text{a.s.},\qquad a\in\{0,1\}.
\end{equation*}
In particular, with $\bar\varepsilon_T=s_T/T$, the following equation holds
\begin{equation}
 \bar\varepsilon_T
 =O\{\mathfrak H_\chi(T)/T\}
 =\begin{cases}
 O(T^{-\chi}),&0<\chi<1,\\
 O\{\log(T\vee2)/T\},&\chi=1.
 \end{cases}
 \label{eq:exploration-sample-size-rate}
\end{equation}
\end{lemma}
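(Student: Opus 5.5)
The plan is to view $N_{\mathrm{exp},a}(T)$ as a sum of independent Bernoulli variables and apply a strong law for sums with diverging means. By construction, $B_t\sim\operatorname{Bernoulli}(\varepsilon_t)$ is drawn with fresh randomization, and $Z_t\sim\operatorname{Bernoulli}(\pi_{\mathrm{exp}})$ is drawn independently when $B_t=1$. Hence the indicators $\xi_{t,a}=B_t\ind\{Z_t=a\}$, $t\ge1$, are mutually independent, do not depend on covariates, outcomes, or the adaptive cutoff, and satisfy $\Pbb(\xi_{t,a}=1)=p_a\varepsilon_t$. Thus $N_{\mathrm{exp},a}(T)$ is a sum of independent Bernoulli$(p_a\varepsilon_t)$ variables with mean $p_as_T$. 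Because $\chi\in(0,1)$ (or $\chi\le1$ for the stated second case), $s_T\asymp\mathfrak H_\chi(T)\to\infty$.

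\textbf{Almost sure ratio convergence.} Write $M_T=\sum_{t\le T}(\xi_{t,a}-p_a\varepsilon_t)$, a martingale with respect to the natural filtration. Its predictable quadratic variation is $\langle M\rangle_T=\sum_{t\le T}p_a\varepsilon_t(1-p_a\varepsilon_t)\le p_as_T$, which diverges. The standard martingale strong law then gives $M_T/\langle M\rangle_T\to0$ a.s. on $\{\langle M\rangle_\infty=\infty\}$; for example, one can use Kronecker's lemma applied to $\sum_t(\xi_{t,a}-p_a\varepsilon_t)/(p_as_t)$, whose increments have summable variances $\sum_t\varepsilon_t/s_t^2<\infty$ since $\varepsilon_t/s_t^2\le 1/s_{t-1}-1/s_t$. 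Because $\langle M\rangle_T\asymp p_as_T$, it follows that $M_T/(p_as_T)\to0$ a.s., which is exactly $N_{\mathrm{exp},a}(T)/(p_as_T)\to1$ a.s.

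\textbf{Rate of $\bar\varepsilon_T$.} Since $\varepsilon_t=\varepsilon_0t^{-\chi}$, we have $s_T=\varepsilon_0\mathfrak H_\chi(T)$ exactly, so $\bar\varepsilon_T=O\{\mathfrak H_\chi(T)/T\}$. An integral comparison then gives $\mathfrak H_\chi(T)\le1+\int_1^Tx^{-\chi}dx\le 1+T^{1-\chi}/(1-\chi)$ for $\chi<1$, and $\mathfrak H_1(T)\le1+\log T$, which yields the two cases in \eqref{eq:exploration-sample-size-rate}.

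The only delicate point is the summability $\sum_t\varepsilon_t/s_t^2<\infty$ needed in the Kronecker step. I would handle it by the telescoping bound above, using $s_t-s_{t-1}=\varepsilon_t$, together with $s_{t-1}\ge s_t/2$ for large $t$, which holds because $\varepsilon_t\to0$ while $s_t\to\infty$. An alternative route is a Bernstein inequality along the geometric subsequence $T_k$ where $s_{T_k}\approx 2^k$, combined with Borel--Cantelli and monotone interpolation between consecutive $T_k$.
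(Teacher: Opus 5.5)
Your proposal is correct and follows essentially the same route as the paper: centered martingale differences with conditional variance at most $p_a\varepsilon_t$, the telescoping bound $\varepsilon_t/s_t^2\le 1/s_{t-1}-1/s_t$ giving $L^2$-boundedness of the normalized martingale $\sum_t(\xi_{t,a}-p_a\varepsilon_t)/s_t$, Kronecker's lemma (which the paper proves inline by summation by parts), and then an integral comparison for the rate of $s_T$. The extra condition $s_{t-1}\ge s_t/2$ you invoke is unnecessary, since $s_{t-1}\le s_t$ alone already yields the telescoping bound.
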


\begin{proof}
Fix $a\in\{0,1\}$. Since the exploration randomization is independent of $\mathcal F_{t-1}$, we have
\begin{align*}
 \Pbb(B_t=1,Z_t=a\mid\mathcal F_{t-1})=\Pbb(B_t=1\mid\mathcal F_{t-1})\Pbb(Z_t=a\mid B_t=1,\mathcal F_{t-1})=\varepsilon_t p_a.
\end{align*}
Thus $B_t\ind\{Z_t=a\}$ is conditionally Bernoulli with parameter $p_a\varepsilon_t$. Define $D_{a,t}=B_t\ind\{Z_t=a\}-p_a\varepsilon_t$. These centered increments
are $\mathcal F_t$-measurable and satisfy
\begin{equation*}
 \Ebb(D_{a,t}\mid\mathcal F_{t-1})=0,\qquad
 \Ebb(D_{a,t}^2\mid\mathcal F_{t-1})
 =p_a\varepsilon_t(1-p_a\varepsilon_t)\le p_a\varepsilon_t,
\end{equation*}
indicating that $(D_{a,t}:t\ge1)$ is a sequence of martingale difference with respect to $(\mathcal F_T)_{T\ge0}$. In particular, $N_{\mathrm{exp},a}(T)-p_as_T=\sum_{t=1}^T D_{a,t}$ forms a martingale with respect to $(\mathcal F_T)_{T\ge0}$.

We first verify the growth and summability properties of the normalizing sequence $s_T$. Since $\varepsilon_t=\varepsilon_0t^{-\chi}$, integral comparison gives
\begin{equation*}
 \varepsilon_0\int_1^{T+1}u^{-\chi}\,du
 \le s_T\le\varepsilon_0\bigg(1+\int_1^T u^{-\chi}du\bigg).
\end{equation*}
Consequently, we have $s_T\to\infty$ and
\begin{equation}\label{eq:asymp_s_T}
 s_T\asymp
 \begin{cases}
 T^{1-\chi},&0<\chi<1,\\
 \log(T\vee2),&\chi=1.
 \end{cases}
\end{equation}
Moreover, $s_t-s_{t-1}=\varepsilon_t$ and $s_t\ge s_{t-1}>0$ for $t\ge2$, so
\begin{align*}
 \sum_{t=2}^T\frac{\varepsilon_t}{s_t^2}\le\sum_{t=2}^T\frac{s_t-s_{t-1}}{s_{t-1}s_t}=\sum_{t=2}^T\bigg(\frac1{s_{t-1}}-\frac1{s_t}\bigg)
 =\frac1{s_1}-\frac1{s_T}\le\frac1{s_1}.
\end{align*}
Including the finite term at $t=1$ yields $\sum_{t=1}^{\infty}\varepsilon_t/s_t^2\le2/s_1<\infty$.

Now define the normalized martingale $M_{a,T}=\sum_{t=1}^T D_{a,t}/s_t$, with $M_{a,0}=0$. For $u<t$, the martingale-difference property gives $\Ebb(D_{a,u}D_{a,t})=\Ebb\{D_{a,u}\Ebb(D_{a,t}\mid\mathcal F_{t-1})\}=0$. Hence
\begin{equation*}
 \sup_{T\ge1}\Ebb(M_{a,T}^2)
 =\sup_{T\ge1}\sum_{t=1}^T\frac{\Ebb(D_{a,t}^2)}{s_t^2}
 \le p_a\sum_{t=1}^{\infty}\frac{\varepsilon_t}{s_t^2}
 <\infty.
\end{equation*}
The $L^2$ martingale convergence theorem therefore implies $M_{a,T}\to M_{a,\infty}$ almost surely for a finite random variable $M_{a,\infty}$. Since $M_{a,0}=0$, summation by parts gives
\begin{align*}
\frac1{s_T}\sum_{t=1}^T D_{a,t}
&=\frac1{s_T}\sum_{t=1}^T s_t(M_{a,t}-M_{a,t-1})=\frac1{s_T}\bigg\{s_TM_{a,T}-\sum_{t=1}^{T-1}(s_{t+1}-s_t)M_{a,t}\bigg\}\\
&=M_{a,T}-\frac1{s_T}\sum_{t=1}^{T-1}(s_{t+1}-s_t)M_{a,t}.
\end{align*}
Fix a sample path on which $M_{a,t}\to M_{a,\infty}$. For any $\delta>0$, there exists a finite integer $n$ such that
$$
|M_{a,t}-M_{a,\infty}|\le\delta,\qquad t\ge n.
$$
Since $s_t$ is nondecreasing and $\sum_{t=1}^{T-1}(s_{t+1}-s_t)=s_T-s_1$, for $T>n$,
\begin{align*}
&\bigg|\frac1{s_T}\sum_{t=1}^{T-1}(s_{t+1}-s_t)M_{a,t}-M_{a,\infty}\bigg|=\bigg|\frac1{s_T}\sum_{t=1}^{T-1}(s_{t+1}-s_t)(M_{a,t}-M_{a,\infty})-\frac{s_1}{s_T}M_{a,\infty}\bigg|\\
\le&\frac1{s_T}\sum_{t=1}^{n-1}(s_{t+1}-s_t)|M_{a,t}-M_{a,\infty}|+\delta\frac{s_T-s_n}{s_T}+\frac{s_1}{s_T}|M_{a,\infty}|.
\end{align*}
The first term tends to zero because its numerator is a fixed finite sum and $s_T\to\infty$. The last term also vanishes, whereas the middle term is at most $\delta$. Thus,
$$
\limsup_{T\to\infty}\bigg|\frac1{s_T}\sum_{t=1}^{T-1}(s_{t+1}-s_t)M_{a,t}-M_{a,\infty}\bigg|\le\delta.
$$
Since $\delta>0$ is arbitrary, the weighted sum converges to $M_{a,\infty}$. Combining this conclusion with $M_{a,T}\to M_{a,\infty}$ gives
$$
\frac1{s_T}\sum_{t=1}^T D_{a,t}\longrightarrow0
$$
on every sample path where the martingale converges. Because this event has probability one, the convergence holds almost surely. Thus
\begin{equation*}
 \frac{N_{\mathrm{exp},a}(T)-p_as_T}{s_T}
 =\frac1{s_T}\sum_{t=1}^T D_{a,t}\longrightarrow0
 \quad\text{a.s.}
\end{equation*}
Since $p_a>0$, this proves $N_{\mathrm{exp},a}(T)/(p_as_T)\to1$ almost surely for each treatment condition for $a\in\{0,1\}$. Adding the two count limits also gives
\begin{equation*}
 \frac{\sum_{t=1}^T B_t}{s_T}
 =\frac{N_{\mathrm{exp},0}(T)+N_{\mathrm{exp},1}(T)}{s_T}
 \longrightarrow p_0+p_1=1\quad\text{a.s.}
\end{equation*}
Finally, dividing the bounds for $s_T$ in \eqref{eq:asymp_s_T} by $T$ proves \eqref{eq:exploration-sample-size-rate}. In particular,
\begin{equation*}
 \frac1T\sum_{t=1}^T B_t
 =\frac{s_T}{T}\{1+o(1)\}\longrightarrow0\quad\text{a.s.}
\end{equation*}
\end{proof}

For later use, recall the mean-square bounds in \eqref{eq:nuisance_rate}:
\begin{equation}
 \frac1T\sum_{t=1}^T\Ebb[(\delta_{b,t}^{\infty})^2]
 =O(r_{b,T}^2),\qquad
 \frac1T\sum_{t=1}^T\Ebb[(\delta_{F,t}^{\infty})^2]
 =O(r_{F,T}^2).
 \label{eq:nuisance-average-bounds}
\end{equation}
When polynomial upper bounds are available, the corresponding notation is
\begin{equation}
 r_{b,T}=O(T^{-\rho_b}),\qquad r_{F,T}=O(T^{-\rho_F}),
 \qquad \rho_b,\rho_F>0.
 \label{eq:nuisance-radius-orders}
\end{equation}
The following proposition shows how nuisance estimation errors affect the plug-in local null risk and the conditional mean of the update direction $S_t(\alpha-\widehat E_t)$. The following proposition supplies the perturbation bounds needed to apply the oracle contraction in Proposition~\ref{prop:oracle-threshold}.

\begin{proposition}[Plug-in risk and predictable drift]
\label{prop:score-rate}
Under Assumptions~\ref{ass:local-density} and \ref{ass:oracle-score}, let
\begin{align*}
 \delta_{\ell,t}^{\infty}
 &=\operatorname*{ess\,sup}_{\bx\sim P_X}
       |\widehat\ell_t(\bx)-\ell(\bx)|,\\
 \Gamma_t
 &=\sup_{\lambda\in[\alpha,1]}
   \Pbb\left\{[\widehat\ell_t(\bX_t)-\lambda]
                  [\ell(\bX_t)-\lambda]\le0
                 \mid\mathcal F_{t-1}\right\}.
\end{align*}
Then, there are positive constants $C_U,\delta_U,C<\infty$ such that
\begin{equation}
 \delta_{\ell,t}^{\infty}
 \le\delta_{F,t}^{\infty}
       +(C_U+\delta_U^{-1})\delta_{b,t}^{\infty},
 \qquad
 \Gamma_t\le2\overline h\delta_{\ell,t}^{\infty}.
 \label{eq:score-uniform-bound}
\end{equation}
Moreover, the predictable drift of the cutoff satisfies
\begin{equation}
 \big|\Ebb[S_t(\alpha-\widehat E_t)\mid\mathcal F_{t-1}]
       -(1-\varepsilon_t)H(\lambda_t)\{\alpha-q(\lambda_t)\}\big|
 \le\Gamma_t+C\delta_{b,t}^{\infty}.
 \label{eq:paits-drift-error}
\end{equation}
Consequently, under \eqref{eq:nuisance_rate},
\begin{align}
 \frac1T\sum_{t=1}^T\Ebb[(\delta_{\ell,t}^{\infty})^2+\Gamma_t^2]
 &=O(r_{b,T}^2+r_{F,T}^2),
 \label{eq:score-average-second-moment-rate}\\
 \frac1T\sum_{t=1}^T\Gamma_t
 &=O_{\Pbb}(r_{T}).
 \label{eq:score-crossing-average-rate}
\end{align}
\end{proposition}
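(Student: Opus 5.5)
For the first inequality in \eqref{eq:score-uniform-bound}, I decompose at each $\bx$:
\[
\widehat\ell_t(\bx)-\ell(\bx)=\bigl[\widehat F_{1,t}\{\widehat b_{0,t}(\bx)\mid\bx\}-F_1\{\widehat b_{0,t}(\bx)\mid\bx\}\bigr]+\bigl[F_1\{\widehat b_{0,t}(\bx)\mid\bx\}-F_1\{b_0(\bx)\mid\bx\}\bigr].
\]
Two cases arise, depending on the size of $\delta_{b,t}^{\infty}$.

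\emph{Case $\delta_{b,t}^{\infty}\le\delta_U$.} The point $\widehat b_{0,t}(\bx)=b_0(\bx)+u$ has $|u|\le\delta_U$. The first bracket is then at most $\delta_{F,t}^{\infty}$ by definition. The second bracket is at most $C_U\delta_{b,t}^{\infty}$ by Assumption~\ref{ass:local-density}, since $U=Y(1)-b_0(\bX)$ has conditional density bounded by $C_U$ on $[-\delta_U,\delta_U]$.

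\emph{Case $\delta_{b,t}^{\infty}>\delta_U$.} I use the trivial bound $|\widehat\ell_t-\ell|\le1\le\delta_U^{-1}\delta_{b,t}^{\infty}$, assuming $\widehat F_{1,t}$ is truncated to $[0,1]$.

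Combining the two cases gives $\delta_{\ell,t}^{\infty}\le\delta_{F,t}^{\infty}+(C_U+\delta_U^{-1})\delta_{b,t}^{\infty}$.

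For $\Gamma_t$: the event $\{(\widehat\ell_t(\bX_t)-\lambda)(\ell(\bX_t)-\lambda)\le0\}$ forces $|\ell(\bX_t)-\lambda|\le\delta_{\ell,t}^{\infty}$, up to a $P_X$-null set. Since $\bX_t$ is independent of $\mathcal F_{t-1}$ and $\widehat\ell_t$ is $\mathcal F_{t-1}$-measurable, the conditional probability is at most $H(\lambda+\delta)-H(\lambda-\delta)\le2\overline h\,\delta_{\ell,t}^{\infty}$. This bound is uniform in $\lambda$, using Assumption~\ref{ass:oracle-score}.

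For the drift \eqref{eq:paits-drift-error}, I condition on $\mathcal G_t$. The exploration indicator $B_t$ is independent of everything, so
\[
\Ebb[S_t(\alpha-\widehat E_t)\mid\mathcal F_{t-1}]=(1-\varepsilon_t)\,\Ebb[G_t(\alpha-\widehat E_t)\mid\mathcal F_{t-1}].
\]
On $S_t=1$ we have $Y_t=Y_t(1)$. Then \eqref{eq:generic-sequential-randomization} together with the independence of $\bX_t,Y_t(1)$ from $\mathcal F_{t-1}$ gives
\[
\Ebb[\widehat E_t\mid\mathcal G_t]=F_1\{\widehat b_{0,t}(\bX_t)\mid\bX_t\}\quad\text{on }G_t=1.
\]
I then perform two replacements:
\begin{enumerate}
\item replace $F_1\{\widehat b_{0,t}(\bX_t)\mid\bX_t\}$ by $\ell(\bX_t)$, at cost $C\delta_{b,t}^{\infty}$ by the argument of the first step;
\item replace $G_t$ by $\ind\{\ell(\bX_t)\le\lambda_t\}$, at cost at most $\Pbb(\text{disagreement}\mid\mathcal F_{t-1})\cdot\sup|\alpha-\ell|\le\Gamma_t$, since $\lambda_t$ is $\mathcal F_{t-1}$-measurable and $\Gamma_t$ takes a supremum over $\lambda$.
\end{enumerate}
After these replacements the expectation equals $\Ebb[\ind\{\ell(\bX)\le\lambda_t\}(\alpha-\ell(\bX))]=H(\lambda_t)\{\alpha-q(\lambda_t)\}$. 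Multiplying through by $(1-\varepsilon_t)\le1$ preserves the error bound.

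Finally, the rate statements follow directly. Squaring the bounds and using $(a+b)^2\le2a^2+2b^2$ together with \eqref{eq:nuisance-average-bounds} gives \eqref{eq:score-average-second-moment-rate}. For \eqref{eq:score-crossing-average-rate}, Cauchy--Schwarz gives $\Ebb[T^{-1}\sum_t\Gamma_t]\le\{T^{-1}\sum_t\Ebb\Gamma_t^2\}^{1/2}=O(r_T)$, and Markov's inequality then yields $O_{\Pbb}(r_T)$.

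The main delicate step is the conditioning argument in the drift bound. One must justify that $\widehat E_t$ only involves $Y_t(1)$ on $\{S_t=1\}$, and that the conditional law of $Y_t(1)$ given $\mathcal G_t$ is $F_1(\cdot\mid\bX_t)$ despite adaptive sampling. The other delicate point is handling $\widehat b_{0,t}$ falling outside the $\delta_U$-window, which the case split addresses.
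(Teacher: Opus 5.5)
Your proposal is correct and follows essentially the same route as the paper: the same two-bracket decomposition with the $C_U$/$\delta_U^{-1}$ case split, the same density-integration bound for $\Gamma_t$, and the same two drift error terms ($\Gamma_t$ for the selection mismatch, $C\delta_{b,t}^{\infty}$ for the plug-in null indicator, merely taken in the opposite order), followed by Cauchy--Schwarz and Markov for the rates. Your global split on whether $\delta_{b,t}^{\infty}\le\delta_U$ is slightly more careful than the paper's, which bounds the first bracket by $\delta_{F,t}^{\infty}$ even when $\widehat b_{0,t}(\bx)$ leaves the $\delta_U$-band; the only thing to tidy is that after factoring out $(1-\varepsilon_t)$ you should write $\ind\{Y_t(1)\le\widehat b_{0,t}(\bX_t)\}$ in place of $\widehat E_t$, since on $\{B_t=1,Z_t=0\}$ the realized outcome is $Y_t(0)$.
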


\begin{proof}

We start by decomposing the difference between $\widehat\ell_t(\bx)$ and $\ell(\bx)$:
\begin{align*}
 |\widehat\ell_t(\bx)-\ell(\bx)|
 &\le
 |\widehat F_{1,t}\{\widehat b_{0,t}(\bx)\mid\bx\}
      -F_1\{\widehat b_{0,t}(\bx)\mid\bx\}|+
 |F_1\{\widehat b_{0,t}(\bx)\mid\bx\}-F_1\{b_0(\bx)\mid\bx\}|\\
 &\le \delta_{F,t}^{\infty} + |F_1\{\widehat b_{0,t}(\bx)\mid\bx\}-F_1\{b_0(\bx)\mid\bx\}|.
\end{align*}

If $|\widehat b_{0,t}(\bx)-b_0(\bx)|\le\delta_U$, then 
$$|F_1\{\widehat b_{0,t}(\bx)\mid\bx\}-F_1\{b_0(\bx)\mid\bx\}|\le
       C_U|\widehat b_{0,t}(\bx)-b_0(\bx)|.$$

Otherwise, as both $F_1\{\widehat b_{0,t}(\bx)\mid\bx\}$ and $F_1\{b_0(\bx)\mid\bx\}$ lie in $[0,1]$, their difference is at most $1\le\delta_U^{-1}|\widehat b_{0,t}(\bx)-b_0(\bx)|$. Taking essential suprema proves the first inequality in \eqref{eq:score-uniform-bound}. 

On the event $\{\{\widehat\ell_t(\bX_t)-\lambda\}\{\ell(\bX_t)-\lambda\}\le0\}$, we must have 
$$|\ell(\bX_t)-\lambda| \le |\widehat\ell_t(\bX_t)-\ell(\bX_t)| \le \delta_{\ell,t}^{\infty}.$$
As $\bX_t$ is independent of $\mathcal F_{t-1}$, the conditional distribution of $\bX_t$ given $\mathcal F_{t-1}$ is $P_X$. Integrating the bounded density $h$ over an interval of length at most $2\delta_{\ell,t}^{\infty}$ proves the second inequality in \eqref{eq:score-uniform-bound}.

Define the oracle selection map
\begin{equation}
 S_t^0(\lambda)=(1-B_t)\ind\{\ell(\bX_t)\le\lambda\},
 \qquad \lambda\in[\alpha,1].
 \label{eq:one-pass-oracle-selection-map}
\end{equation}
For this comparison, keep the current cutoff $\lambda_t$ fixed and replace the plug-in selection and null indicator with $S_t^0(\lambda_t)$ and $E_t$. The resulting oracle increment before projection is $\eta_tS_t^0(\lambda_t)(\alpha-E_t)$. Since $\lambda_t$ is $\mathcal F_{t-1}$-measurable and $\Ebb(E_t\mid\mathcal G_t)=\ell(\bX_t)$, the conditional mean of its update direction is
\begin{equation}    \label{eq:E_update}
\begin{aligned}
 \Ebb[S_t^0(\lambda_t)(\alpha-E_t)\mid\mathcal F_{t-1}]
 &=(1-\varepsilon_t)
   \Ebb[\ind\{\ell(\bX_t)\le\lambda_t\}
                  \{\alpha-\ell(\bX_t)\}\mid\mathcal F_{t-1}]\\
 &=(1-\varepsilon_t)\int_0^{\lambda_t}(\alpha-u)h(u)\,du\\
 &=(1-\varepsilon_t)H(\lambda_t)\{\alpha-q(\lambda_t)\}.
\end{aligned}
\end{equation}
Here $(1-\varepsilon_t)H(\lambda_t)$ is the conditional probability that oracle selection triggers an update, and $q(\lambda_t)$ defined by \eqref{eq:def_qlamb} is the average null risk among units selected by that rule. Multiplying the product in \eqref{eq:E_update} by $\eta_t$ gives the conditional mean of $\eta_tS_t^0(\lambda_t)(\alpha-E_t)$ before projection.

We further have
\begin{align}
 &\Ebb[S_t(\alpha-\widehat E_t)\mid\mathcal F_{t-1}]
       -\Ebb[S_t^0(\lambda_t)(\alpha-E_t)\mid\mathcal F_{t-1}]
       \notag\\
 &=\Ebb[\{S_t-S_t^0(\lambda_t)\}(\alpha-E_t)
                       \mid\mathcal F_{t-1}]
          -\Ebb[S_t(\widehat E_t-E_t)\mid\mathcal F_{t-1}].
 \label{eq:one-pass-drift-decomposition}
\end{align}
The first term on the right side of \eqref{eq:one-pass-drift-decomposition} has absolute value at most $\Gamma_t$. For the second term, as $S_t=1$ implies $A_t=1$ and $Y_t=Y_t(1)$, we have
\begin{equation*}
 \Ebb[S_t(\widehat E_t-E_t)\mid\mathcal G_t]
 =(1-\varepsilon_t)G_t
   [F_1\{\widehat b_{0,t}(\bX_t)\mid\bX_t\}-\ell(\bX_t)].
\end{equation*}
To bound the difference between $F_1\{\widehat b_{0,t}(\bX_t)\mid\bX_t\}$ and $F_1\{ b_{0}(\bX_t)\mid\bX_t\} = \ell(\bX_t)$, we consider two cases.
\begin{itemize}
    \item If $|\widehat b_{0,t}(\bX_t)-b_0(\bX_t)|\le\delta_U$, Assumption~\ref{ass:local-density} gives
\begin{equation*}
 |F_1\{\widehat b_{0,t}(\bX_t)\mid\bX_t\}-\ell(\bX_t)|
 \le C_U|\widehat b_{0,t}(\bX_t)-b_0(\bX_t)|.
\end{equation*}
\item If $|\widehat b_{0,t}(\bX_t)-b_0(\bX_t)|>\delta_U$, boundedness of distribution functions instead gives
\begin{equation*}
 |F_1\{\widehat b_{0,t}(\bX_t)\mid\bX_t\}-\ell(\bX_t)|
 \le1\le\delta_U^{-1}|\widehat b_{0,t}(\bX_t)-b_0(\bX_t)|.
\end{equation*}
\end{itemize}

Combining the two cases above yields
\begin{equation*}
 |F_1\{\widehat b_{0,t}(\bX_t)\mid\bX_t\}-\ell(\bX_t)|
 \le\max\{C_U,\delta_U^{-1}\}\delta_{b,t}^{\infty}.
\end{equation*}
Since $0\le(1-\varepsilon_t)G_t\le1$ and $\delta_{b,t}^{\infty}$ is $\mathcal F_{t-1}$-measurable, we have
\begin{align*}
 \big|\Ebb[S_t(\widehat E_t-E_t)\mid\mathcal F_{t-1}]\big|\le\Ebb\big[\big|\Ebb[S_t(\widehat E_t-E_t)\mid\mathcal G_t]\big|\big|\mathcal F_{t-1}\big]\leq\max\{C_U,\delta_U^{-1}\}\delta_{b,t}^{\infty},
\end{align*}
while the first term in \eqref{eq:one-pass-drift-decomposition} is bounded by $\Gamma_t$. Applying the triangle inequality proves \eqref{eq:paits-drift-error}.

Next, the two inequalities in \eqref{eq:score-uniform-bound} and $(u+v)^2\le2u^2+2v^2$ give
\begin{align*}
 (\delta_{\ell,t}^{\infty})^2+\Gamma_t^2\le(1+4\overline h^2)(\delta_{\ell,t}^{\infty})^2\le2(1+4\overline h^2)
   \big\{(\delta_{F,t}^{\infty})^2
     +(C_U+\delta_U^{-1})^2(\delta_{b,t}^{\infty})^2\big\}.
\end{align*}
Taking expectations and averaging over $t=1,\ldots,T$ yields
\begin{align*}
 \frac1T\sum_{t=1}^T
   \Ebb[(\delta_{\ell,t}^{\infty})^2+\Gamma_t^2]
\le\frac CT\sum_{t=1}^T
   \big\{\Ebb[(\delta_{F,t}^{\infty})^2]
+\Ebb[(\delta_{b,t}^{\infty})^2]\big\}=O(r_{F,T}^2+r_{b,T}^2),
\end{align*}
where the last equality follows from \eqref{eq:nuisance_rate}, and the constant $C$ depends on $\overline{h},C_U,\delta_U$. This establishes \eqref{eq:score-average-second-moment-rate}.

Finally, we apply the Cauchy--Schwarz inequality twice to obtain
\begin{align*}
 \Ebb\bigg[\frac1T\sum_{t=1}^T\Gamma_t\bigg]
 \le\frac1T\sum_{t=1}^T\{\Ebb(\Gamma_t^2)\}^{1/2}\le\bigg\{\frac1T\sum_{t=1}^T\Ebb(\Gamma_t^2)\bigg\}^{1/2}= O\big((r_{b,T}^2+r_{F,T}^2)^{1/2} \big)
 = O(r_{T}).
\end{align*}
Since $\Gamma_t\ge0$, the Markov's inequality implies, for every $M>0$,
\begin{equation*}
 \Pbb\bigg\{\frac1T\sum_{t=1}^T\Gamma_t
                    >Mr_{T}\bigg\}
 \le\frac{\Ebb[T^{-1}\sum_{t=1}^T\Gamma_t]}
           {Mr_{T}}
 = O(M^{-1}).
\end{equation*}
Taking $M$ arbitrarily large proves
\eqref{eq:score-crossing-average-rate}.
\end{proof}

The following proposition establishes upper bounds for the difference between $\lambda_t$ and $\lambda^\star$, which are then used to compare the realized assignments with the oracle assignments in the proof of Theorem~\ref{thm:fdr-one-pass}.

\begin{proposition}[Time-averaged bounds for the cutoff]
\label{prop:paits-cutoff-moments}
Under the assumptions of Theorem~\ref{thm:fdr-one-pass},
\begin{equation}
 \begin{aligned}
 \frac1T\sum_{t=1}^T\Ebb[(\lambda_t-\lambda^\star)^2]
 &=O(r_{b,T}^2+r_{F,T}^2+T^{-\zeta}),\\
 \frac1T\sum_{t=1}^T|\lambda_t-\lambda^\star|
 &=O_{\Pbb}(r_{T}+T^{-\zeta/2}).
 \end{aligned}
 \label{eq:one-pass-threshold-average-rates}
\end{equation}
\end{proposition}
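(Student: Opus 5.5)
We will run a Lyapunov (squared-distance) recursion for $D_t=(\lambda_t-\lambda^\star)^2$ and then average it over $t$. Since $\lambda^\star\in(\alpha,\lambda_+)\subset[\alpha,1]$ and the projection $\Pi_{[\alpha,1]}$ is nonexpansive, the update \eqref{eq:paits-cutoff-update} gives, writing $\xi_t=S_t(\alpha-\widehat E_t)$ (which is also zero when $S_t=0$),
\begin{equation*}
D_{t+1}\le D_t+2\eta_t(\lambda_t-\lambda^\star)\xi_t+\eta_t^2\xi_t^2 .
\end{equation*}
We take conditional expectations given $\mathcal F_{t-1}$. The drift is handled by Proposition~\ref{prop:score-rate}, which writes $\Ebb[\xi_t\mid\mathcal F_{t-1}]=(1-\varepsilon_t)H(\lambda_t)\{\alpha-q(\lambda_t)\}+e_t$ with $|e_t|\le\Gamma_t+C\delta_{b,t}^\infty$. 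The contraction inequality of Proposition~\ref{prop:oracle-threshold} then yields $2\eta_t(\lambda_t-\lambda^\star)(1-\varepsilon_t)H(\lambda_t)\{\alpha-q(\lambda_t)\}\le-2c_g(1-\varepsilon_0)\eta_tD_t$. We control the perturbation by Young's inequality, $2\eta_t|\lambda_t-\lambda^\star||e_t|\le c\,\eta_tD_t+c^{-1}\eta_te_t^2$ with $c=c_g(1-\varepsilon_0)$. This gives
\begin{equation*}
\Ebb[D_{t+1}\mid\mathcal F_{t-1}]\le(1-c\,\eta_t)D_t+C\eta_t\{\Gamma_t^2+(\delta_{b,t}^\infty)^2\}+\eta_t^2 .
\end{equation*}

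Next we sum this rather than solve the recursion pointwise, because \eqref{eq:nuisance_rate} only controls time averages. Rearranging gives $c\,\eta_tD_t\le D_t-\Ebb[D_{t+1}\mid\mathcal F_{t-1}]+C\eta_tw_t+\eta_t^2$, where $w_t=\Gamma_t^2+(\delta_{b,t}^\infty)^2$. We then take expectations and divide by $\eta_t$. Summing by parts with the nonincreasing $\eta_t$ gives
\begin{equation*}
\sum_{t\le T}\Ebb D_t\le\frac{\Ebb D_1}{c\,\eta_1}+\sum_{t=2}^T\Big(\frac1{\eta_t}-\frac1{\eta_{t-1}}\Big)\frac{\Ebb D_t}{c}+C\sum_{t\le T}\Ebb w_t+\frac1c\sum_{t\le T}\eta_t .
\end{equation*}
Since $D_t\le1$, the telescoping term is at most $c^{-1}/\eta_T=O(T^{\zeta})$. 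The $\eta$ sum is $O(T^{1-\zeta})$. By \eqref{eq:score-average-second-moment-rate} and \eqref{eq:nuisance_rate}, the $w$ sum is $O(T(r_{b,T}^2+r_{F,T}^2))$. Dividing by $T$ gives $O(r_{b,T}^2+r_{F,T}^2+T^{\zeta-1}+T^{-\zeta})$.

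The main obstacle is the $T^{\zeta-1}/\eta_T$ boundary term. When $\zeta<1/2$ it would dominate $T^{-\zeta}$, but here $\zeta>1/2$ so $T^{\zeta-1}\le T^{-\zeta}$ fails in general. We therefore need a sharper treatment. The first attempt would be the standard weighted-sum trick: multiply by weights and use $1/\eta_t-1/\eta_{t-1}=O(t^{\zeta-1})=o(\eta_t)$, which lets this term be absorbed into the left side as $\sum_t(1/\eta_t-1/\eta_{t-1})D_t\le\frac c2\sum_t D_t$ for large $t$ (valid since $t^{\zeta-1}\ll1$). The boundary contribution then only involves $\Ebb D_{T+1}/\eta_T\ge0$, which is dropped, and a finite initial segment contributing $O(1)$. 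With this the average becomes $O(r_{b,T}^2+r_{F,T}^2+T^{-\zeta}+T^{-1})$. Here the $T^{-\zeta}$ term comes from $\frac1T\sum\eta_t$, and we absorb $T^{-1}$ since $\zeta<1$. If this absorption is delicate, we would instead unroll the recursion $a_{t+1}\le(1-c\eta_t)a_t+C\eta_t\Ebb w_t+\eta_t^2$ to get $a_t\lesssim\eta_t+\text{(discounted nuisance)}$, and then average. Here a Chung-type lemma gives $\sum_t\prod_{s=u+1}^t(1-c\eta_s)\le C/\eta_u$, which keeps the nuisance contribution at $O(\sum_u\Ebb w_u)$.

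Finally, the second claim in \eqref{eq:one-pass-threshold-average-rates} follows from the first. By Cauchy--Schwarz, $\Ebb\big[\frac1T\sum_t|\lambda_t-\lambda^\star|\big]\le\{\frac1T\sum_t\Ebb D_t\}^{1/2}=O(r_T+T^{-\zeta/2})$. Markov's inequality then converts this expectation bound into the stated $O_{\Pbb}$ rate, exactly as at the end of the proof of Proposition~\ref{prop:score-rate}.
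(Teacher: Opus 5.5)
Your proposal is correct, and its main route differs from the paper's. Both arguments start from the same recursion $\Ebb D_{t+1}\le(1-c\eta_t)\Ebb D_t+C\eta_t\Ebb w_t+\eta_t^2$ with $c=(1-\varepsilon_0)c_g$, which you derive exactly as the paper does from Proposition~\ref{prop:score-rate}, the contraction in Proposition~\ref{prop:oracle-threshold}, and Young's inequality.

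\textbf{The paper's route.} It unrolls the recursion from a fixed $n$ with $c\eta_t<1$ and writes $\Ebb D_t$ in terms of the discount products $K_{k,t}=\prod_{j=k+1}^{t-1}(1-c\eta_j)$. By integral comparison it proves $\sup_k\eta_k\sum_{t>k}K_{k,t}<\infty$ and $\sum_t\prod_{j=n}^{t-1}(1-c\eta_j)<\infty$, and then interchanges the double sum. This is precisely your fallback bound $\sum_{t>u}K_{u,t}\le C/\eta_u$.

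\textbf{Your route.} You divide by $\eta_t$, sum by parts, and absorb $\sum_t(1/\eta_t-1/\eta_{t-1})\Ebb D_t$ into the left-hand side. This is shorter, avoids all product estimates, and does not need $c\eta_t<1$.

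\textbf{The caveat.} The absorption needs $1/\eta_t-1/\eta_{t-1}\to0$. This holds for $\eta_t=\eta_0t^{-\zeta}$ but is not implied by Assumption~\ref{ass:step-sizes}. The dyadic-block schedule $\eta_t=2^{-\zeta k}$ for $2^k\le t<2^{k+1}$ is nonincreasing and satisfies the two-sided bounds, yet $1/\eta_t-1/\eta_{t-1}\asymp t^{\zeta}$ at $t=2^k$. The repair is one line. Since $D_t\ge0$, you may replace $\eta_t$ by $\tilde\eta_t=c_\eta t^{-\zeta}$ in the contraction term for $t\ge n$ and divide by $\tilde\eta_t$ instead. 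This costs only the bounded factor $\eta_t/\tilde\eta_t\le C_\eta/c_\eta$ on the nuisance and $\eta_t^2$ terms, while $1/\tilde\eta_t-1/\tilde\eta_{t-1}=O(t^{\zeta-1})\to0$.

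\textbf{What the paper's route buys.} Its unrolled representation is reused verbatim in the proof of Corollary~\ref{cor:paits-cutoff-mean-square} to get pointwise mean-square convergence. Your averaged argument does not give that.

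\textbf{Two small slips.} First, $T^{\zeta-1}$ dominates $T^{-\zeta}$ exactly when $\zeta>1/2$, not when $\zeta<1/2$. Second, $t^{\zeta-1}$ is $o(1)$ but not $o(\eta_t)$ when $\zeta>1/2$. Only $o(1)$ is used in the absorption, so neither slip affects the argument.
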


\begin{proof}
The update rule \eqref{eq:paits-cutoff-update} can be written as $\lambda_{t+1}=\Pi_{[\alpha,1]}[\lambda_t+\eta_t S_t(\alpha-\widehat E_t)]$. Since $\Pi_{[\alpha,1]}(\lambda^\star)=\lambda^\star$ and $|S_t(\alpha-\widehat E_t)|\le1$, we always have
\begin{align*}
 (\lambda_{t+1}-\lambda^\star)^2\le(\lambda_t-\lambda^\star)^2+2\eta_t(\lambda_t-\lambda^\star)S_t(\alpha-\widehat E_t)+\eta_t^2.
\end{align*}
Let $c=(1-\varepsilon_0)c_g>0$. For the second term of the right-hand side, Proposition~\ref{prop:oracle-threshold} and \eqref{eq:paits-drift-error} in Proposition~\ref{prop:score-rate} imply
\begin{align*}
 (\lambda_t-\lambda^\star)\Ebb[S_t(\alpha-\widehat E_t)\mid\mathcal F_{t-1}]\le-c(\lambda_t-\lambda^\star)^2+|\lambda_t-\lambda^\star|(\Gamma_t+C\delta_{b,t}^{\infty}).
\end{align*}
Applying the inequality $2uv\le cu^2+c^{-1}v^2$ and \eqref{eq:score-uniform-bound} taking expectations can yield
\begin{align}
 \Ebb[(\lambda_{t+1}-\lambda^\star)^2]\le(1-c\eta_t)\Ebb[(\lambda_t-\lambda^\star)^2]
 +\eta_t^2 +C'\eta_t\Ebb[(\delta_{b,t}^{\infty})^2
+(\delta_{F,t}^{\infty})^2],
 \label{eq:one-pass-mean-square-recursion}
\end{align}
for some finite $C'>0$.

Choose a fixed $n$ such that \eqref{eq:step-size-order} and $0<c\eta_t<1$ hold for every $t\ge n$, and write $K_{k,t}=\prod_{j=k+1}^{t-1}(1-c\eta_j)$, with empty products equal to one. Iterating \eqref{eq:one-pass-mean-square-recursion} gives
\begin{align}
 \Ebb[(\lambda_t-\lambda^\star)^2]\le\Ebb[(\lambda_n-\lambda^\star)^2]
          \prod_{j=n}^{t-1}(1-c\eta_j)+\sum_{k=n}^{t-1}\eta_k K_{k,t}
    \big\{C'\Ebb[(\delta_{b,k}^{\infty})^2+(\delta_{F,k}^{\infty})^2]+\eta_k\big\}.
 \label{eq:one-pass-mean-square-kernel}
\end{align}
By Assumption \ref{ass:step-sizes}, we claim that
\begin{equation*}
 \sup_{k\ge n}\eta_k\sum_{t=k+1}^{\infty}K_{k,t}<\infty,
 \qquad
 \sum_{t=n+1}^{\infty}\prod_{j=n}^{t-1}(1-c\eta_j)<\infty.
\end{equation*}
To verify the first inequality, let $p=1-\zeta\in(0,1)$ and $a=k+1$. Integral comparison and $1-x\le e^{-x}$ give $K_{k,t}\le\exp\{-c_1(t^p-a^p)\}$ for a constant $c_1>0$. Hence
\begin{align*}
 \sum_{t=k+1}^{\infty}K_{k,t}\le1+\int_a^{\infty}e^{-c_1(x^p-a^p)}dx=1+\frac1p\int_0^{\infty}e^{-c_1u}(a^p+u)^{\zeta/p}du\le C a^\zeta.
\end{align*}
Since $a=k+1$ and $\eta_k\le C_\eta k^{-\zeta}$, this estimate gives
\begin{equation*}
 \eta_k\sum_{t=k+1}^{\infty}K_{k,t}
 \le CC_\eta k^{-\zeta}(k+1)^\zeta
 =CC_\eta\left(1+\frac1k\right)^\zeta
 \le CC_\eta 2^\zeta,\qquad k\ge n.
\end{equation*}
The upper bound is independent of $k$, proving the first assertion.

For the second assertion, keep $n$ fixed. The lower step-size bound and integral comparison imply, for $t\ge n+1$,
\begin{equation*}
 \sum_{j=n}^{t-1}\eta_j
 \ge c_\eta\sum_{j=n}^{t-1}j^{-\zeta}
 \ge c_\eta\int_n^t x^{-\zeta}dx
 =\frac{c_\eta}{p}(t^p-n^p).
\end{equation*}
Using $1-x\le e^{-x}$ again, we obtain
\begin{align*}
 \prod_{j=n}^{t-1}(1-c\eta_j)\le\exp\bigg(-c\sum_{j=n}^{t-1}\eta_j\bigg)\le\exp\bigg\{-\frac{cc_\eta}{p}(t^p-n^p)\bigg\} =C_n e^{-c_2t^p},
\end{align*}
where $c_2=cc_\eta/p>0$ and $C_n=e^{c_2n^p}$ is finite because $n$ is fixed. Since $x\mapsto e^{-c_2x^p}$ is decreasing,
\begin{align*}
 \sum_{t=n+1}^{\infty}\prod_{j=n}^{t-1}(1-c\eta_j)\le C_n\sum_{t=n+1}^{\infty}e^{-c_2t^p}\le C_n\int_n^{\infty}e^{-c_2x^p}dx=\frac{C_n}{p}\int_{n^p}^{\infty} e^{-c_2u}u^{1/p-1}du<\infty.
\end{align*}
The last equality uses $u=x^p$; the integral is finite because $p>0$ and the exponential factor dominates the fixed power of $u$. This proves the second assertion.

Summing \eqref{eq:one-pass-mean-square-kernel} over $t\le T$ gives
\begin{equation}
 \sum_{t=1}^T\Ebb[(\lambda_t-\lambda^\star)^2]\le C_1+C_2 \sum_{t=1}^T\Ebb[(\delta_{b,t}^{\infty})^2+(\delta_{F,t}^{\infty})^2]+C_3 \sum_{t=1}^T\eta_t,
 \label{eq:one-pass-cutoff-cumulative-second-moment}
\end{equation}
for some finite $C_1,C_2,C_3>0$. We also have
\begin{equation}
 \bar\eta_T:=\frac1T\sum_{t=1}^T\eta_t=O(T^{-\zeta}).
 \label{eq:step-size-calendar-average-endpoint}
\end{equation}
Combining \eqref{eq:nuisance_rate} and \eqref{eq:step-size-calendar-average-endpoint}, the first equation in \eqref{eq:one-pass-threshold-average-rates} is proved. And
\begin{equation*}
 \Ebb\bigg[\frac1T\sum_{t=1}^T|\lambda_t-\lambda^\star|\bigg]\le\bigg\{\frac1T\sum_{t=1}^T\Ebb[(\lambda_t-\lambda^\star)^2]\bigg\}^{1/2},
\end{equation*}
therefore Markov's inequality gives the second equation of \eqref{eq:one-pass-threshold-average-rates}.
\end{proof}

Then, we are in a position to show the main FDR theorem.

\begin{proof}[Proof of Theorem~\ref{thm:fdr-one-pass}]
By the first conclusion in \eqref{eq:one-pass-threshold-average-rates}, together with $r_{b,T}+r_{F,T}\to0$ and $\zeta>0$, we have the time-averaged mean-square convergence in Theorem~\ref{thm:fdr-one-pass}, i.e.,
$$ \frac1T\sum_{t=1}^T\Ebb[(\lambda_t-\lambda^\star)^2]\longrightarrow0.$$

Next, we continue to prove \eqref{eq:one-pass-frequency-fdp-rates} and the convergence of FDR. We first compare the plug-in selection rule $S_t$ with $S_t^0(\lambda^\star)=(1-B_t)A_t^\star$:
\begin{equation*}
 |S_t-S_t^0(\lambda^\star)|
 \le |S_t-S_t^0(\lambda_t)|
      +|S_t^0(\lambda_t)-S_t^0(\lambda^\star)|.
\end{equation*}
When $B_t=0$, the first difference $|S_t-S_t^0(\lambda_t)|$ can be nonzero only if $\widehat\ell_t(\bX_t)$ and $\ell(\bX_t)$ lie on different sides of $\lambda_t$, which implies $(\widehat\ell_t(\bX_t)-\lambda_t)(\ell(\bX_t)-\lambda_t)\le0$. The second difference $|S_t^0(\lambda_t)-S_t^0(\lambda^\star)|$ can be nonzero only if $\ell(\bX_t)$ lies between $\lambda_t$ and $\lambda^\star$, indicating $|\ell(\bX_t)-\lambda^\star|\le|\lambda_t-\lambda^\star|$. Both differences vanish when $B_t=1$. Consequently,
\begin{align}
 |S_t-S_t^0(\lambda^\star)|
 &\le(1-B_t)\ind\{|\ell(\bX_t)-\lambda^\star|
                         \le|\lambda_t-\lambda^\star|\}\notag\\
 &\quad+(1-B_t)\ind\big\{
      (\widehat\ell_t(\bX_t)-\lambda_t)
      (\ell(\bX_t)-\lambda_t)\le0\big\}.
 \label{eq:one-pass-selector-decomposition}
\end{align}
Conditional on $\mathcal F_{t-1}$, the cutoff $\lambda_t$ and the estimated function $\widehat\ell_t$ are fixed, and $\bX_t$ has distribution $P_X$. Independence of the exploration randomization gives
\begin{align*}
 &\Ebb[(1-B_t)
   \ind\{|\ell(\bX_t)-\lambda^\star|
                   \le|\lambda_t-\lambda^\star|\}
                   \mid\mathcal F_{t-1}]=(1-\varepsilon_t)
   \Pbb\{|\ell(\bX_t)-\lambda^\star|
                   \le|\lambda_t-\lambda^\star|
                   \mid\mathcal F_{t-1}\}\\
 &\le2\overline h|\lambda_t-\lambda^\star|.
\end{align*}
The last inequality integrates the density $h\le\overline h$ over an interval of length at most $2|\lambda_t-\lambda^\star|$. For the crossing term, by the definition of $\Gamma_t$ in Proposition \ref{prop:score-rate}, we have
\begin{align*}
 &\Ebb[(1-B_t)\ind\big\{
   (\widehat\ell_t(\bX_t)-\lambda_t)
   (\ell(\bX_t)-\lambda_t)\le0\big\}
   \mid\mathcal F_{t-1}]\\
 &=(1-\varepsilon_t)
   \Pbb\big\{(\widehat\ell_t(\bX_t)-\lambda_t)
             (\ell(\bX_t)-\lambda_t)\le0
             \mid\mathcal F_{t-1}\big\}\le\Gamma_t,
\end{align*}
as $\lambda_t\in[\alpha,1]$ is measurable with respect to $\mathcal F_{t-1}$ and $\Gamma_t$ takes the supremum over that interval. Taking conditional expectations in \eqref{eq:one-pass-selector-decomposition} therefore gives
\begin{equation}
 \Ebb\Big[|S_t-S_t^0(\lambda^\star)|\Big|\mathcal F_{t-1}\Big]
 \le2\overline h|\lambda_t-\lambda^\star|+\Gamma_t.
 \label{eq:one-pass-selector-conditional-bound}
\end{equation}

We next derive the upper bounds of the averages on the right-hand side of \eqref{eq:one-pass-selector-conditional-bound}. The first conclusion in \eqref{eq:one-pass-threshold-average-rates} implies
\begin{align*}
 \frac1T\sum_{t=1}^T\Ebb|\lambda_t-\lambda^\star|
 \le\bigg\{\frac1T\sum_{t=1}^T
                   \Ebb[(\lambda_t-\lambda^\star)^2]\bigg\}^{1/2}= O(r_{b,T}^2+r_{F,T}^2+T^{-\zeta})^{1/2}= O(r_{T}+T^{-\zeta/2}).
\end{align*}
Similarly, the second-moment bound \eqref{eq:score-average-second-moment-rate} in Proposition~\ref{prop:score-rate} gives
\begin{equation*}
 \frac1T\sum_{t=1}^T\Ebb(\Gamma_t)
 \le\bigg\{\frac1T\sum_{t=1}^T\Ebb(\Gamma_t^2)\bigg\}^{1/2}
 = O\big((r_{b,T}^2+r_{F,T}^2)^{1/2}\big)
 = O(r_{T}).
\end{equation*}
Taking expectations in \eqref{eq:one-pass-selector-conditional-bound} and summing over $t$ now yields
\begin{align*}
 \Ebb\bigg[\frac1T\sum_{t=1}^T
                   |S_t-S_t^0(\lambda^\star)|\bigg]
 \le\frac{2\overline h}{T}\sum_{t=1}^T
                   \Ebb|\lambda_t-\lambda^\star|
       +\frac1T\sum_{t=1}^T\Ebb(\Gamma_t)= O(r_{T}+T^{-\zeta/2}).
\end{align*}
Since the average inside the expectation is nonnegative, for every $M>0$ and all sufficiently large $T$, Markov's inequality gives
\begin{align*}
 &\Pbb\bigg\{\frac1T\sum_{t=1}^T|S_t-S_t^0(\lambda^\star)|
          >M(r_{T}+T^{-\zeta/2})\bigg\}\le
 \frac{\Ebb[T^{-1}\sum_{t=1}^T|S_t-S_t^0(\lambda^\star)|]}
      {M(r_{T}+T^{-\zeta/2})}
 \le\frac CM.
\end{align*}
Therefore,
\begin{equation*}
 \frac1T\sum_{t=1}^T|S_t-S_t^0(\lambda^\star)|
 =O_{\Pbb}(r_{T}+T^{-\zeta/2}).
\end{equation*}

Now we turn to exploratory treatments; recall that
\begin{equation*}
 A_t=S_t+B_tZ_t,\qquad R_T=N_T+Q_T.
\end{equation*}
Since $A_t^\star=S_t^0(\lambda^\star)+B_tA_t^\star$, we have
\begin{equation*}
 A_t-A_t^\star
 =S_t-S_t^0(\lambda^\star)+B_t(Z_t-A_t^\star),
\end{equation*}
where both $Z_t$ and $A_t^\star$ take values in $\{0,1\}$,  implying $|Z_t-A_t^\star|\le1$. The triangle inequality therefore gives
\begin{equation*}
 |A_t-A_t^\star|
 \le|S_t-S_t^0(\lambda^\star)|+B_t|Z_t-A_t^\star|
 \le|S_t-S_t^0(\lambda^\star)|+B_t.
\end{equation*}
By Lemma~\ref{lem:epsilon-greedy-exposure}, we have
\begin{equation*}
 \Ebb\bigg[\frac1T\sum_{t=1}^T B_t\bigg]
 =\frac1T\sum_{t=1}^T\varepsilon_t
 =O\{\mathfrak H_\chi(T)/T\},
\end{equation*}
and consequently we have
\begin{equation}\label{eq:bound_e_ave_A}
\begin{aligned}
 \Ebb\bigg[\frac1T\sum_{t=1}^T|A_t-A_t^\star|\bigg]
 &\le\Ebb\bigg[\frac1T\sum_{t=1}^T
                   |S_t-S_t^0(\lambda^\star)|
       +\frac1T\sum_{t=1}^T\varepsilon_t \bigg]= O\bigg(r_{T}+T^{-\zeta/2}
                       +\frac{\mathfrak H_\chi(T)}T\bigg).
\end{aligned}
\end{equation}

We now compute the oracle treatment and null counts. Each pair $(A_t^\star,A_t^\star E_t)$ is a function of $(\bX_t,Y_t(1))$ and $\lambda^\star$. Moreover,
\begin{equation*}
 \Ebb(A_t^\star)
 =\Pbb\{\ell(\bX_t)\le\lambda^\star\}
 =H(\lambda^\star)\ge H(\alpha)>0.
\end{equation*}
Using the facts that $\Ebb(E_t\mid\bX_t)=\ell(\bX_t)$ and the identity $q(\lambda^\star)=\alpha$ from Proposition~\ref{prop:oracle-threshold}, we obtain
\begin{align*}
 \Ebb(A_t^\star E_t)
 &=\Ebb[\ind\{\ell(\bX_t)\le\lambda^\star\}
                         \Ebb(E_t\mid\bX_t)]=\Ebb[\ind\{\ell(\bX_t)\le\lambda^\star\}
                         \ell(\bX_t)]\\
 &=\int_0^{\lambda^\star}u h(u)du
 =H(\lambda^\star)q(\lambda^\star)
 =\alpha H(\lambda^\star).
\end{align*}
Since $A_t^\star$ is an indicator and the observations are independent across $t$,
\begin{align*}
 \Ebb\bigg[\bigg\{\frac1T\sum_{t=1}^T A_t^\star
-H(\lambda^\star)\bigg\}^2\bigg]=\frac1{T^2}\sum_{t=1}^T\Var(A_t^\star)\le\frac1{4T}.
\end{align*}
Similarly, we have
\begin{equation*}
 \Ebb\bigg[\bigg\{\frac1T\sum_{t=1}^T A_t^\star E_t
                         -\alpha H(\lambda^\star)\bigg\}^2\bigg]
 \le\frac1{4T}.
\end{equation*}
By the Cauchy--Schwarz inequality, we further have
\begin{equation*}
 \Ebb\bigg|\frac1T\sum_{t=1}^T A_t^\star-H(\lambda^\star)\bigg|
 \le\frac1{2\sqrt T},
\end{equation*}
and
\begin{equation*}
 \Ebb\bigg|\frac1T\sum_{t=1}^T A_t^\star E_t
                         -\alpha H(\lambda^\star)\bigg|
 \le\frac1{2\sqrt T}.
\end{equation*}
Recall that $R_T=\sum_{t=1}^T A_t$ and $V_T=\sum_{t=1}^T A_tE_t$. We have the following decomposition
\begin{equation*}
 \bigg|\frac{R_T}T-H(\lambda^\star)\bigg|
 \le\frac1T\sum_{t=1}^T|A_t-A_t^\star|
       +\bigg|\frac1T\sum_{t=1}^T A_t^\star-H(\lambda^\star)\bigg|.
\end{equation*}
Furthermore, since $0\le E_t\le1$, we have
\begin{align*}
 \bigg|\frac{V_T}T-\alpha H(\lambda^\star)\bigg|
 &\le\frac1T\sum_{t=1}^T|A_t-A_t^\star|E_t
       +\bigg|\frac1T\sum_{t=1}^T A_t^\star E_t
                            -\alpha H(\lambda^\star)\bigg|\\
 &\le\frac1T\sum_{t=1}^T|A_t-A_t^\star|
       +\bigg|\frac1T\sum_{t=1}^T A_t^\star E_t
                            -\alpha H(\lambda^\star)\bigg|.
\end{align*}
Combining the above inequalities and \eqref{eq:bound_e_ave_A} yields
\begin{equation}\label{eq:bound42counts}
    \begin{aligned}
 &\Ebb\bigg|\frac{R_T}T-H(\lambda^\star)\bigg|
  +\Ebb\bigg|\frac{V_T}T-\alpha H(\lambda^\star)\bigg|\\
 &\le2\Ebb\bigg[\frac1T\sum_{t=1}^T|A_t-A_t^\star|\bigg]+T^{-1/2}= O\bigg(r_{T}+T^{-\zeta/2}+\frac{\mathfrak H_\chi(T)}T\bigg).
\end{aligned}
\end{equation}
Here $T^{-1/2}\le T^{-\zeta/2}$ because $\zeta<1$. As the maximum of the two absolute deviations is bounded by their sum, by Markov's inequality, we have
\begin{equation*}
 \max\bigg\{\bigg|\frac{R_T}T-H(\lambda^\star)\bigg|,
              \bigg|\frac{V_T}T-\alpha H(\lambda^\star)\bigg|\bigg\}
 =O_{\Pbb}\bigg(r_{T}+T^{-\zeta/2}
                       +\frac{\mathfrak H_\chi(T)}T\bigg).
\end{equation*}

It remains to justify the bound of $|\FDR_T-\alpha|$. Since
$r_{T}\to0$, $\zeta>0$, and $\mathfrak H_\chi(T)/T\to0$, we also have
\begin{align*}
 \Pbb\bigg\{\frac{R_T}T<\frac{H(\lambda^\star)}2\bigg\}
 \le\Pbb\bigg\{\bigg|\frac{R_T}T-H(\lambda^\star)\bigg|
 >\frac{H(\lambda^\star)}2\bigg\}\le\frac{2}{H(\lambda^\star)}
          \Ebb\bigg|\frac{R_T}T-H(\lambda^\star)\bigg|
 \longrightarrow0.
\end{align*}
On the event $R_T/T\ge H(\lambda^\star)/2$, the integer $R_T$ is positive, so $R_T\vee1=R_T$ and $\FDP_T=V_T/R_T$. Hence
\begin{align*}
 |\FDP_T-\alpha|=\frac{|V_T/T-\alpha R_T/T|}{R_T/T}\le\frac{2}{H(\lambda^\star)}
   \bigg\{\bigg|\frac{V_T}T-\alpha H(\lambda^\star)\bigg|
        +\alpha\bigg|\frac{R_T}T-H(\lambda^\star)\bigg|\bigg\}.
\end{align*}
Therefore, for any $x>0$, we have 
\begin{align*}
 &\Pbb\{|\FDP_T-\alpha|>x\}\\
 &\le\Pbb\bigg\{\frac{R_T}T<\frac{H(\lambda^\star)}2\bigg\}+\Pbb\bigg\{
       \left|\frac{V_T}T-\alpha H(\lambda^\star)\right|
       +\alpha\left|\frac{R_T}T-H(\lambda^\star)\right|
                       >\frac{H(\lambda^\star)x}{2}\bigg\}\\
 &\le\Pbb\bigg\{\frac{R_T}T<\frac{H(\lambda^\star)}2\bigg\}+\frac{2}{H(\lambda^\star)x}
        \bigg\{\Ebb\bigg|\frac{V_T}T-\alpha H(\lambda^\star)\bigg|
          +\alpha\Ebb\bigg|\frac{R_T}T-H(\lambda^\star)\bigg|\bigg\},
\end{align*}
Taking $x=M\{r_{T}+T^{-\zeta/2}+\mathfrak H_\chi(T)/T\}$ and using \eqref{eq:bound42counts} yields
\begin{align*}
 \Pbb\bigg\{|\FDP_T-\alpha|
       >M\bigg(r_{T}+T^{-\zeta/2}
                     +\frac{\mathfrak H_\chi(T)}T\bigg)\bigg\}\le\Pbb\bigg\{\frac{R_T}T<\frac{H(\lambda^\star)}2\bigg\}+\frac CM.
\end{align*}
Taking $M$ and $T$ sufficiently large establishes the claimed FDP rate, and subsequently, \eqref{eq:one-pass-frequency-fdp-rates} in Theorem~\ref{thm:fdr-one-pass}.

Finally, since the rate on the right-hand side of \eqref{eq:one-pass-frequency-fdp-rates} tends to zero, we have $\FDP_T\to\alpha$ in probability. Because $0\le\FDP_T\le1$ and $0<\alpha<1$, we have $|\FDP_T-\alpha|\le1$. For every $u\in(0,1)$, we have
\begin{align*}
 \Ebb|\FDP_T-\alpha|
 &=\Ebb\big[|\FDP_T-\alpha|
                  \ind\{|\FDP_T-\alpha|\le u\}\big]+\Ebb\big[|\FDP_T-\alpha|
                  \ind\{|\FDP_T-\alpha|>u\}\big]\\
 &\le u+\Pbb\{|\FDP_T-\alpha|>u\}.
\end{align*}
Letting $T\to\infty$ gives $\limsup_{T\to\infty}\Ebb|\FDP_T-\alpha|\le u$. Since $u$ can be arbitrarily small, this expectation converges to zero. Consequently,
\begin{equation*}
 |\FDR_T-\alpha|
 =|\Ebb(\FDP_T)-\alpha|
 \le\Ebb|\FDP_T-\alpha|
 \longrightarrow0.
\end{equation*}
\end{proof}

\begin{proof}[Proof of Corollary~\ref{cor:paits-cutoff-mean-square}]
Let $c=(1-\varepsilon_0)c_g>0$, $C'$, $n$, and $K_{k,t}$ be as in the proof of Proposition~\ref{prop:paits-cutoff-moments}. Thus \eqref{eq:one-pass-mean-square-recursion} holds for every $t\ge1$, and both \eqref{eq:step-size-order} and $0<c\eta_t<1$ hold for every $t\ge n$.

We first collect three facts used below. The initial cutoff satisfies $\lambda_1\in[\alpha,1]$, and each update in \eqref{eq:paits-cutoff-update} either leaves the cutoff unchanged or projects it onto $[\alpha,1]$, so $\lambda_t\in[\alpha,1]$ for every $t$. Since $\lambda^\star\in(\alpha,\lambda_+)\subset[\alpha,1]$ by Proposition~\ref{prop:oracle-threshold}, we have $\Ebb[(\lambda_n-\lambda^\star)^2]\le(1-\alpha)^2$. Next, under the assumptions of Theorem~\ref{thm:fdr-one-pass}, the averages in \eqref{eq:nuisance_rate} are finite for all sufficiently large $T$. Their summands are nonnegative, so every summand is finite. Taking such a $T\ge k$ gives
\begin{equation*}
 C'\Ebb[(\delta_{b,k}^{\infty})^2+(\delta_{F,k}^{\infty})^2]+\eta_k<\infty,
 \qquad k\ge1.
\end{equation*}
Finally, \eqref{eq:step-size-order} gives $c_\eta t^{-\zeta}\le\eta_t\le C_\eta t^{-\zeta}$ for $t\ge n$. The upper bound yields $\eta_t\to0$. Since $\zeta<1$ implies $j^{-\zeta}\ge j^{-1}$, yielding $\sum_{j=k+1}^{t-1}\eta_j\ge c_\eta\sum_{j=k+1}^{t-1}j^{-1}\to\infty$ as $t\to\infty$ for every fixed $k\ge n-1$. Because $0<c\eta_j<1$ and $1-c\eta_j\le\exp(-c\eta_j)$ for $j\ge n$, it follows that, when $t\to\infty$,
\begin{equation*}
 0<K_{k,t}\le\exp\Big(-c\sum_{j=k+1}^{t-1}\eta_j\Big)\longrightarrow0
 \qquad k\ge n-1 .
\end{equation*}

To control the total weight of the perturbation terms in \eqref{eq:one-pass-mean-square-kernel}, fix $t>m\ge n$ and write each summand as a difference of two consecutive products:
\begin{align*}
 c\eta_kK_{k,t}
 =\{1-(1-c\eta_k)\}\prod_{j=k+1}^{t-1}(1-c\eta_j)=\prod_{j=k+1}^{t-1}(1-c\eta_j)-\prod_{j=k}^{t-1}(1-c\eta_j)
 =K_{k,t}-K_{k-1,t}.
\end{align*}
Hence $c\sum_{k=m}^{t-1}\eta_kK_{k,t}=K_{t-1,t}-K_{m-1,t}=1-K_{m-1,t}$, where $K_{m-1,t}\in(0,1)$ because every factor $1-c\eta_j$ with $j\ge m\ge n$ lies in $(0,1)$. Therefore,
\begin{equation*}
 0\le\sum_{k=m}^{t-1}\eta_kK_{k,t}\le\frac1c,\qquad t>m\ge n.
\end{equation*}
And the total weight is bounded uniformly in $t$ and $m$.

Now we consider the right-hand side of \eqref{eq:one-pass-mean-square-kernel}. The first term
$$\Ebb[(\lambda_n-\lambda^\star)^2]K_{n-1,t}\le(1-\alpha)^2K_{n-1,t}\to0.$$ 
For the second term, fix $m\ge n$ and split the sum in \eqref{eq:one-pass-mean-square-kernel} at $k=m$.  The summands with $n\le k<m$ are finitely many, their number does not depend on $t$, and each is the finite factor $\eta_k\{C'\Ebb[(\delta_{b,k}^{\infty})^2+(\delta_{F,k}^{\infty})^2]+\eta_k\}$
multiplied by $K_{k,t}\to0$. Hence
$$\sum_{k=n}^{m-1}\eta_k K_{k,t}
    \big\{C'\Ebb[(\delta_{b,k}^{\infty})^2
                    +(\delta_{F,k}^{\infty})^2]+\eta_k\big\}\to0.$$
For the remaining summands $\sum_{k=m}^{t-1}\eta_k K_{k,t}\big\{C'\Ebb[(\delta_{b,k}^{\infty})^2+(\delta_{F,k}^{\infty})^2]+\eta_k\big\}$, the additional condition of Corollary~\ref{cor:paits-cutoff-mean-square} and $\eta_k\to0$ imply that
$$C'\Ebb[(\delta_{b,k}^{\infty})^2+(\delta_{F,k}^{\infty})^2]+\eta_k\to0.$$
Bounding each factor by its supremum over $k\ge m$ and using the total-weight bound above, we obtain
\begin{equation*}
 \limsup_{t\to\infty}\Ebb[(\lambda_t-\lambda^\star)^2]
 \le\frac1c\sup_{k\ge m}
 \big\{C'\Ebb[(\delta_{b,k}^{\infty})^2+(\delta_{F,k}^{\infty})^2]
       +\eta_k\big\}.
\end{equation*}
The left-hand side does not depend on $m$, whereas the tail suprema on the right decrease to zero as $m\to\infty$. Letting $m\to\infty$ proves $\Ebb[(\lambda_t-\lambda^\star)^2]\to0$.
\end{proof}

\subsection{Regret bounds}

We then consider the regret bounds of the proposed method.

\begin{proof}[Proof of Theorem~\ref{thm:one-pass-weighted-selection-regret}]
Recall that $G_t=\ind\{\widehat\ell_t(\bX_t)\le\lambda_t\}$, $S_t=(1-B_t)G_t$, and $A_t=S_t+B_tZ_t$. Since the plug-in and exploration assignments occur on disjoint events, we have
\begin{equation}
 \begin{aligned}
 |\ell(\bX_t)-\lambda^\star||A_t-A_t^\star|
 &=(1-B_t)|\ell(\bX_t)-\lambda^\star||G_t-A_t^\star|+B_t|\ell(\bX_t)-\lambda^\star||Z_t-A_t^\star|.
 \end{aligned}
 \label{eq:one-pass-weighted-selection-decomposition}
\end{equation}
We bound the conditional expectations of the two terms on the right-hand side of \eqref{eq:one-pass-weighted-selection-decomposition}. Throughout this proof, $C$ denotes a finite constant independent of $T$.

Let $\delta_{\ell,t}^{\infty}$ be defined as in Proposition~\ref{prop:score-rate}. The fitted score $\widehat\ell_t$, the cutoff $\lambda_t$, and $\delta_{\ell,t}^{\infty}$ are $\mathcal F_{t-1}$-measurable. For the first term, we consider two cases.
\begin{itemize}
\item If $G_t=1$ and $A_t^\star=0$, then,
almost surely,
\begin{align*}
 0<\ell(\bX_t)-\lambda^\star
 &\le\ell(\bX_t)-\widehat\ell_t(\bX_t)
+\lambda_t-\lambda^\star\le\delta_{\ell,t}^{\infty}+|\lambda_t-\lambda^\star|.
\end{align*}
\item If $G_t=0$ and $A_t^\star=1$, then
\begin{align*}
 0\le\lambda^\star-\ell(\bX_t)
 &\le\lambda^\star-\lambda_t
       +\widehat\ell_t(\bX_t)-\ell(\bX_t)\le\delta_{\ell,t}^{\infty}+|\lambda_t-\lambda^\star|.
\end{align*}
\end{itemize}
Combining the two cases gives
\begin{equation}
 |G_t-A_t^\star|
 \le\ind\big\{|\ell(\bX_t)-\lambda^\star|
       \le\delta_{\ell,t}^{\infty}+|\lambda_t-\lambda^\star|\big\}.
 \label{eq:one-pass-weighted-selection-boundary}
\end{equation}
Since $\bX_t$ is independent of $\mathcal F_{t-1}$, its conditional distribution given $\mathcal F_{t-1}$ remains $P_X$. Therefore, the conditional distribution of $\ell(\bX_t)$ coincides with the population distribution of $\ell(\bX)$. By Assumption~\ref{ass:oracle-score}, its distribution is absolutely continuous on $[0,1]$ with density $h$, and the density is uniformly bounded:
$$
    \overline h:=\|h\|_\infty
    =\sup_{u\in[0,1]}h(u)<\infty.
$$
Equation~\eqref{eq:one-pass-weighted-selection-boundary} implies that $G_t\neq A_t^\star$ only when the true score $\ell(\bX_t)$ is sufficiently close to the oracle threshold $\lambda^\star$. More precisely, $G_t\neq A_t^\star$ only if
$$
|\ell(\bX_t)-\lambda^\star|
\le
\delta_{\ell,t}^{\infty}
+
|\lambda_t-\lambda^\star|.
$$
Therefore, combining
\eqref{eq:one-pass-weighted-selection-boundary}, we obtain
\begin{equation}
\begin{aligned}
&\Ebb[
|\ell(\bX_t)-\lambda^\star||G_t-A_t^\star|
\mid\mathcal F_{t-1}]\le
\int_0^1 |u-\lambda^\star|
\ind\big\{|u-\lambda^\star|
\le\delta_{\ell,t}^{\infty}+|\lambda_t-\lambda^\star|\big\}
h(u)du\\
&\quad\le
2\overline h
\int_0^{\delta_{\ell,t}^{\infty}+|\lambda_t-\lambda^\star|}
vdv=
\overline h
\big\{\delta_{\ell,t}^{\infty}
+|\lambda_t-\lambda^\star|\big\}^2\le
2\overline h
\big\{(\delta_{\ell,t}^{\infty})^2
+|\lambda_t-\lambda^\star|^2\big\}.
\end{aligned}
\label{eq:one-pass-weighted-selection-plugin-bound}
\end{equation}
For the second term in the right-hand side of \eqref{eq:one-pass-weighted-selection-decomposition}, since $\ell(\bX_t),\lambda^\star\in[0,1]$ and $Z_t,A_t^\star\in\{0,1\}$, we have
\begin{equation}
 \begin{aligned}
 &\Ebb[
   B_t|\ell(\bX_t)-\lambda^\star||Z_t-A_t^\star|
   \mid\mathcal F_{t-1}]\le\Ebb[B_t\mid\mathcal F_{t-1}]=\varepsilon_t.
 \end{aligned}
 \label{eq:one-pass-weighted-selection-exploration-bound}
\end{equation}
Taking conditional expectations in \eqref{eq:one-pass-weighted-selection-decomposition}, using $1-B_t\le1$, and applying \eqref{eq:one-pass-weighted-selection-plugin-bound} and \eqref{eq:one-pass-weighted-selection-exploration-bound} yields
\begin{equation}
 \begin{aligned}
 &\Ebb[
   |\ell(\bX_t)-\lambda^\star||A_t-A_t^\star|
   \mid\mathcal F_{t-1}]\le2\overline h(\delta_{\ell,t}^{\infty})^2
       +2\overline h|\lambda_t-\lambda^\star|^2+\varepsilon_t
       \quad\text{a.s.}
 \end{aligned}
 \label{eq:one-pass-weighted-selection-one-step-quadratic}
\end{equation}
The first inequality in \eqref{eq:score-uniform-bound} and $(u+v)^2\le2u^2+2v^2$ give
\begin{equation}
 (\delta_{\ell,t}^{\infty})^2
 \le2(\delta_{F,t}^{\infty})^2
      +2(C_U+\delta_U^{-1})^2(\delta_{b,t}^{\infty})^2.
 \label{eq:one-pass-score-squared-bound}
\end{equation}
Substituting \eqref{eq:one-pass-score-squared-bound} into \eqref{eq:one-pass-weighted-selection-one-step-quadratic} and summing over $t\le T$ proves \eqref{eq:one-pass-weighted-selection-quadratic-transfer}. Next, taking expectations in \eqref{eq:one-pass-weighted-selection-quadratic-transfer} and applying \eqref{eq:one-pass-cutoff-cumulative-second-moment} in Proposition~\ref{prop:paits-cutoff-moments} gives
\begin{equation}
 \begin{aligned}
 \Regret_T=\Ebb[\widehat{\Regret}_T]
 &\le C\sum_{t=1}^T
   \Big\{\Ebb[(\delta_{b,t}^{\infty})^2]
          +\Ebb[(\delta_{F,t}^{\infty})^2]
          +\Ebb[|\lambda_t-\lambda^\star|^2]+\varepsilon_t\Big\}\\
 &=O\bigg(1+\sum_{t=1}^T
   \Big\{\Ebb[(\delta_{b,t}^{\infty})^2]
          +\Ebb[(\delta_{F,t}^{\infty})^2]
          +\eta_t+\varepsilon_t\Big\}\bigg)\\
 &=O(T\{r_{b,T}^2+r_{F,T}^2+\bar\eta_T+\bar\varepsilon_T+T^{-1}\}).
 \end{aligned}
 \label{eq:one-pass-weighted-selection-expectation-bound}
\end{equation}
The last equation uses \eqref{eq:nuisance_rate} and the definitions of $\bar\eta_T$ and $\bar\varepsilon_T$ in \eqref{eq:step-size-calendar-average-endpoint} and \eqref{eq:exploration-sample-size-rate}. Hence \eqref{eq:one-pass-weighted-selection-quadratic-rate} holds for all sufficiently large $T$. Since $\widehat{\Regret}_T\ge0$, Markov's inequality and \eqref{eq:one-pass-weighted-selection-quadratic-rate} give, for every $M>0$ and all sufficiently large $T$,
\begin{equation}
 \begin{aligned}
 &\Pbb(
  \widehat{\Regret}_T>
  MT\{r_{b,T}^2+r_{F,T}^2+\bar\eta_T+\bar\varepsilon_T+T^{-1}\}
 )\le
 \frac{\Ebb[\widehat{\Regret}_T]}
 {MT\{r_{b,T}^2+r_{F,T}^2+\bar\eta_T+\bar\varepsilon_T+T^{-1}\}}
 \le\frac{C}{M}.
 \end{aligned}
 \label{eq:one-pass-weighted-selection-probability-bound}
\end{equation}
Equation~\eqref{eq:one-pass-weighted-selection-probability-bound} implies $\widehat{\Regret}_T=O_{\Pbb}[T\{r_{b,T}^2+r_{F,T}^2+\bar\eta_T+\bar\varepsilon_T+T^{-1}\}]$. Finally, as $r_{b,T}+r_{F,T}\to0$, $\bar\eta_T\to0$ and $\bar\varepsilon_T\to0$ implied by \eqref{eq:exploration-sample-size-rate} and \eqref{eq:step-size-calendar-average-endpoint}. Dividing \eqref{eq:one-pass-weighted-selection-quadratic-rate} by $T$ therefore gives $\Regret_T/T\to0$. For every fixed $u>0$, Markov's inequality and \eqref{eq:weighted-selection-regret} further yield
\begin{equation*}
 \Pbb\bigg(\frac{\widehat{\Regret}_T}{T}>u\bigg)
 \le\frac{\Ebb[\widehat{\Regret}_T]}{uT}
 =\frac{\Regret_T}{uT}\longrightarrow0,
\end{equation*}
which proves $\widehat{\Regret}_T/T\xrightarrow{\Pbb}0$.
\end{proof}

\subsection{Minimax bounds}
\label{app:minimax-proofs}

The proof relates nuisance-estimation accuracy to the number of assignments that disagree with the oracle. We construct models with the same score and oracle rule but different nuisance functions. An accurate estimator must distinguish these models, which requires observations under the treatment condition opposite to the oracle assignment in certain regions. We first bound regret by the number of these observations, then give two lemmas that quantify the information they provide, and finally construct the required models.

Specifically, we consider two disjoint interior cubes $\mathcal X_-,\mathcal X_+\subset(0,1)^d$ of positive volume such that, for a constant $g_0>0$,
\begin{equation}
 \ell(\bx)\le\lambda^\star-g_0\quad(\bx\in\mathcal X_-),
 \qquad
 \ell(\bx)\ge\lambda^\star+g_0\quad(\bx\in\mathcal X_+).
 \label{eq:minimax-interior-gap}
\end{equation}
By \eqref{eq:minimax-interior-gap}, the oracle assigns treatment on $\mathcal X_-$ and control on $\mathcal X_+$. In the models constructed in the following, learning $b_0$ on $\mathcal X_-$ requires control observations, whereas learning $F_1$ on $\mathcal X_+$ requires treatment observations. We count these observations by
\begin{equation}
 N_{0,-}(T)=\sum_{t=1}^T\ind\{\bX_t\in\mathcal X_-,A_t=0\},
 \qquad
 N_{1,+}(T)=\sum_{t=1}^T\ind\{\bX_t\in\mathcal X_+,A_t=1\}.
 \label{eq:minimax-informative-counts}
\end{equation}
Each observation counted in \eqref{eq:minimax-informative-counts} disagrees with the oracle and incurs regret at least $g_0$. Thus, \eqref{eq:minimax-interior-gap} gives
\begin{equation}
|\ell(\bX_t)-\lambda^\star|\ind\{A_t\ne A_t^\star\}\ge g_0\big[\ind\{\bX_t\in\mathcal X_-,A_t=0\}+\ind\{\bX_t\in\mathcal X_+,A_t=1\}\big].
 \label{eq:minimax-one-period-regret}
\end{equation}
Summing \eqref{eq:minimax-one-period-regret} over time, taking expectations, and using \eqref{eq:minimax-informative-counts} yields
\begin{equation}
 \Regret_T(\mathcal A,P)
 \ge g_0\Ebb_{P,\mathcal A}\{N_{0,-}(T)+N_{1,+}(T)\},
 \label{eq:minimax-count-to-regret}
\end{equation}
where $\Ebb_{P,\mathcal A}$ denotes the expectation under the joint distributions of the historical data and the online observations through time $T$

The first lemma identifies the information available under adaptive assignment. Conditional on the same observed history and current covariates, the procedure uses the same assignment rule under two candidate models. Thus, the new information for distinguishing them comes only from the observed outcome. The identity below will let us bound this information in terms of the counts in \eqref{eq:minimax-informative-counts}.

\begin{lemma}[Information in the observed outcomes]
\label{lem:minimax-adaptive-KL}
Let $P$ and $Q$ be two distributions of $(\bX,Y(0),Y(1))$ with the same covariate marginal $P_X$. Let $\mathcal{A}$ be a fixed nonanticipating assignment procedure used under both distributions. Let $\mathbb P_P^{\mathcal A,T}$ and $\mathbb P_Q^{\mathcal A,T}$ denote the joint distributions of the historical data and the online observations through time $T$. Then
\begin{equation}
 \KL(\mathbb P_P^{\mathcal A,T},\mathbb P_Q^{\mathcal A,T})=\KL(P_{\mathrm{hist}},Q_{\mathrm{hist}})+\sum_{t=1}^T\Ebb_{P,\mathcal A}
 \big[\KL\{P_{Y(A_t)\mid\bX_t,A_t},
 Q_{Y(A_t)\mid\bX_t,A_t}\}\big].
 \label{eq:minimax-adaptive-KL}
\end{equation}
\end{lemma}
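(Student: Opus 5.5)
The plan is to apply the chain rule for KL divergence to the sequential factorization of the joint law of $(\mathcal D_{\mathrm{hist}},(\bX_t,B_t,Z_t,A_t,Y_t)_{t\le T})$. Here the internal randomization of $\mathcal A$ is the exploration indicators, $Z_t$, and any other independent coins. Under either model $M\in\{P,Q\}$ the joint density factorizes in the following order. First comes the historical data, with law $M_{\mathrm{hist}}$. Then, for each $t$ in turn, we have:
\begin{itemize}
\item the covariate $\bX_t$, drawn from $P_X$ independently of $\mathcal F_{t-1}$ by the i.i.d.\ assumption;
\item the assignment and internal randomization given $(\mathcal F_{t-1},\bX_t)$, whose kernel is determined by the procedure $\mathcal A$ alone and does not depend on $M$;
\item the outcome $Y_t$ given $(\mathcal F_{t-1},\bX_t,A_t)$.
\end{itemize}

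The key step is to identify the last conditional law. By \eqref{eq:generic-sequential-randomization}, $A_t$ is conditionally independent of $(Y_t(0),Y_t(1))$ given $(\mathcal F_{t-1},\bX_t)$. The unit-$t$ potential outcomes are also independent of $\mathcal F_{t-1}$ given $\bX_t$, since units are i.i.d.\ and independent of the history. Together these give that the law of $Y_t=Y_t(A_t)$ given $(\mathcal F_{t-1},\bX_t,A_t=a)$ equals $M_{Y(a)\mid\bX}(\cdot\mid\bX_t)$. This is a conditional-independence argument in the style of SUTVA.

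Next we apply the chain rule. Write the log-likelihood ratio as a sum of log ratios of the conditional factors. The covariate factors cancel because both models share $P_X$, and the assignment factors cancel because the policy kernel is common to both models. What remains is the historical log ratio plus $\sum_t\log\{dP_{Y(A_t)\mid\bX_t}/dQ_{Y(A_t)\mid\bX_t}\}(Y_t)$. Take the expectation under $\mathbb P_P^{\mathcal A,T}$ and apply the tower property, conditioning each summand on $(\mathcal F_{t-1},\bX_t,A_t)$. Each summand then becomes $\Ebb_{P,\mathcal A}[\KL\{P_{Y(A_t)\mid\bX_t,A_t},Q_{Y(A_t)\mid\bX_t,A_t}\}]$, which yields \eqref{eq:minimax-adaptive-KL}. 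Two technical points need handling. First, if $P\not\ll Q$ for some factor, both sides are $+\infty$, and the identity holds in the extended sense. Second, nonnegativity of each conditional KL justifies exchanging sums and expectations by monotone convergence.

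The main obstacle is the measure-theoretic bookkeeping of the conditional-outcome step. We must verify rigorously that the conditional law of $Y_t$ depends only on $(\bX_t,A_t)$ and the model. This requires that the procedure's randomization at time $t$, and $\mathcal F_{t-1}$, be independent of $(Y_t(0),Y_t(1))$ given $\bX_t$. I would state this explicitly as a consequence of the nonanticipating definition and the i.i.d.\ arrival assumption. I would work with densities relative to a common dominating measure, for example $P+Q$ for outcomes and the product measure for the rest, so that the chain rule is a direct telescoping of log densities.
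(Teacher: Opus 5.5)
Your proposal is correct and follows essentially the same route as the paper. Both arguments factor each round's conditional law given the history into the common covariate law $P_X$, the model-free assignment kernel of $\mathcal A$, and the model-specific outcome law $M_{Y(a)\mid\bX}$. Both then apply the KL chain rule, so that only the historical term and the outcome terms survive. Your explicit handling of the procedure's internal randomization and of absolute continuity is slightly more careful bookkeeping than the paper's, but it is not a different argument.
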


\begin{proof}[Proof of Lemma~\ref{lem:minimax-adaptive-KL}]
Let $\mathsf H_{t-1}$ be the collection of historical samples and all information retained by the procedure before time $t$, so that $\mathcal F_{t-1}=\sigma(\mathsf H_{t-1})$. For a realized pre-arrival history $\mathsf H_{t-1}=\mathcal H$, let $K_t^{\mathcal A}(a\mid \mathcal H,\bx)$ be the probability that the procedure assigns treatment condition $a$ after observing covariates $\bx$. 

The arriving unit is independent of the preceding history, and the assignment uses no current outcome information. Thus, under $P$, the conditional distribution of the next observed triple factors as
\begin{equation}
 \begin{aligned}
 P\{\bX_t\in d\bx,A_t=a,Y_t\in dy\mid\mathsf H_{t-1}=\mathcal H\}=P_X(d\bx)K_t^{\mathcal A}(a\mid \mathcal H,\bx)P_{Y(a)\mid\bX=\bx}(dy).
 \end{aligned}
 \label{eq:minimax-observation-factorization}
\end{equation}
The corresponding factorization under $Q$ replaces only the final factor in \eqref{eq:minimax-observation-factorization}, while the covariate distribution and conditional assignment probabilities are unchanged. Applying the conditional KL chain rule gives
\begin{equation}
 \begin{aligned}
 &\KL\{P_{(\bX_t,A_t,Y_t)\mid\mathsf H_{t-1}=\mathcal H},
        Q_{(\bX_t,A_t,Y_t)\mid\mathsf H_{t-1}=\mathcal H}\}\\
 =&\int\sum_{a=0}^1 K_t^{\mathcal A}(a\mid \mathcal H,\bx)
       \KL\{P_{Y(a)\mid\bX=\bx},Q_{Y(a)\mid\bX=\bx}\}\,P_X(d\bx).
 \end{aligned}
 \label{eq:minimax-conditional-KL}
\end{equation}
For each $\mathcal H$, the right-hand side of
\eqref{eq:minimax-conditional-KL} averages the outcome KL divergence over
the arriving covariates and the treatment condition assigned by the
procedure. Taking expectation under $P$ yields
\begin{equation}
\Ebb_{P,\mathcal A}
   \big[\KL\{P_{(\bX_t,A_t,Y_t)\mid\mathcal F_{t-1}},
               Q_{(\bX_t,A_t,Y_t)\mid\mathcal F_{t-1}}\}\big]=\Ebb_{P,\mathcal A}
   \big[\KL\{P_{Y(A_t)\mid\bX_t,A_t},
               Q_{Y(A_t)\mid\bX_t,A_t}\}\big].
 \label{eq:minimax-averaged-conditional-KL}
\end{equation}
Finally, the chain rule of KL divergence gives
\begin{equation}
 \begin{aligned}
 \KL(\mathbb P_P^{\mathcal A,T},\mathbb P_Q^{\mathcal A,T})
=\KL(P_{\mathrm{hist}},Q_{\mathrm{hist}})+\sum_{t=1}^T\Ebb_{P,\mathcal A}
   \big[\KL\{P_{(\bX_t,A_t,Y_t)\mid\mathcal F_{t-1}},
               Q_{(\bX_t,A_t,Y_t)\mid\mathcal F_{t-1}}\}\big].
 \end{aligned}
 \label{eq:minimax-KL-chain}
\end{equation}
Substituting \eqref{eq:minimax-averaged-conditional-KL} into
\eqref{eq:minimax-KL-chain} proves \eqref{eq:minimax-adaptive-KL}.
\end{proof}

The next lemma constructs models $\{P_\omega\}$, indexed by $\omega=(\omega^b,\omega^F)\in\{-1,1\}^{M_b}\times\{-1,1\}^{M_F}$, to which \eqref{eq:minimax-adaptive-KL} will be applied. All $P_\omega$ share the covariate distribution and the score $\ell_0$, and also $\lambda^\star$, $A_t^\star$, $\mathcal X_\pm$, and $g_0$. Changing $\omega_j^b$ changes only the conditional law of $Y(0)$ given $\bX=\bx$, and only for $\bx\in Q_{b,j}\subset\mathcal X_-$. Changing $\omega_j^F$ changes only that of $Y(1)$, and only for $\bx\in Q_{F,j}\subset\mathcal X_+$. Here $Q_{b,j}\subset\mathcal X_-$, $j=1,\ldots,M_b$, and $Q_{F,j}\subset\mathcal X_+$, $j=1,\ldots,M_F$, are pairwise disjoint covariate cubes supporting local perturbations of $b_0$ and $F_1$, respectively. By \eqref{eq:minimax-adaptive-KL}, the online information distinguishing the two neighbors therefore comes only from rounds with $\{\bX_t\in Q_{b,j},A_t=0\}$ or $\{\bX_t\in Q_{F,j},A_t=1\}$, respectively. The lemma quantifies the separation of the nuisance functions and the information supplied by online and historical observations. All constants below depend only on $s$, $d$, $\alpha$, and the model-class constants, and not on $r_b$, $r_F$, $T$, or $n_{\mathrm{hist}}$.

\begin{lemma}[Construction of nonparametric alternatives with a common oracle]
\label{lem:minimax-product-hypercube}
Fix $s>0$ and $L_0>0$, and consider the model class $\mathcal P^{\mathrm{np}}(s)$ with fixed nuisance-regularity constants $L\ge1$, $C_U,\delta_U>0$, $0<c_X\le1\le C_X<\infty$, and $1\le\sigma^2<\infty$. The common oracle-regularity constants are chosen from the fixed score constructed below.

There exist a score $\ell_0$, disjoint cubes $\mathcal X_-,\mathcal X_+\subset(0,1)^d$, and constants $g_0,\kappa_b,\kappa_F>0$ such that, for every sufficiently small $r_b,r_F>0$, one can construct a finite class
$$
 \big\{P_\omega:\omega=(\omega^b,\omega^F)\in\{-1,1\}^{M_b}\times\{-1,1\}^{M_F}\big\}\subseteq\mathcal P^{\mathrm{np}}(s)
$$
whose members share the same oracle assignment but differ in their control references and treated-outcome distributions. The location scales $h_b,h_F$, perturbation amplitudes $\Delta_b,\Delta_F$, and numbers of perturbations $M_b,M_F$ satisfy
\begin{equation}
 \begin{aligned}
 M_b&\asymp h_b^{-d},
 &\Delta_b&=\kappa_bh_b^s=L_0r_b,\\
 M_F&\asymp h_F^{-d},
 &\Delta_F&=\kappa_Fh_F^s=L_0r_F.
 \end{aligned}
 \label{eq:minimax-hard-scaling}
\end{equation}
The constructed class has the following properties.

\begin{enumerate}[(i)]
\item \textbf{Common oracle and separated regions.}
Under every $P_\omega$, $\bX$ is uniform on $[0,1]^d$ and $F_{1,\omega}\{b_{0,\omega}(\bx)\mid\bx\}=\ell_0(\bx)$. Consequently, the oracle threshold $\lambda^\star$ and assignment $A^\star(\bx)=\ind\{\ell_0(\bx)\le\lambda^\star\}$ are the same to all members of the class. Moreover,
$$
 \ell_0(\bx)\le\lambda^\star-g_0
 \quad(\bx\in\mathcal X_-),
 \qquad
 \ell_0(\bx)\ge\lambda^\star+g_0
 \quad(\bx\in\mathcal X_+).
$$

\item \textbf{Separated nuisance functions.}
There are pairwise disjoint cubes $Q_{b,j}\subset\mathcal X_-$, $j=1,\ldots,M_b$, of side length $h_b$, and pairwise disjoint cubes $Q_{F,j}\subset\mathcal X_+$, $j=1,\ldots,M_F$, of side length $h_F$. Each control-reference cube $Q_{b,j}$ contains a concentric inner cube $Q_{b,j}^{\circ}$ with half its side length, so that $|Q_{b,j}^{\circ}|=2^{-d}|Q_{b,j}|$. It supports a smooth function $\psi_{b,j}$ taking values in $[0,1]$ and equal to one on $Q_{b,j}^{\circ}$. The treatment-distribution cubes are constructed analogously, with $|Q_{F,j}^{\circ}|=2^{-d}|Q_{F,j}|$ and smooth functions $\psi_{F,j}$ satisfying the same properties. The control distribution is
\begin{equation}
 \begin{gathered}
 Y(0)\mid\bX=\bx\sim N\{b_{0,\omega}(\bx),1\},\\
 b_{0,\omega}(\bx)
 =\Delta_b\sum_{j=1}^{M_b}\omega_j^b\psi_{b,j}(\bx).
 \end{gathered}
 \label{eq:minimax-control-construction}
\end{equation}
For a common baseline treatment CDF $F_{1,\mathrm{base}}$ and a fixed $u_0\in(0,\delta_U)$,
\begin{equation}
 \begin{aligned}
 b_{0,\omega}(\bx)
 &=\omega_j^b\Delta_b,
 &&\bx\in Q_{b,j}^{\circ},\\
 F_{1,\omega}(u_0\mid\bx)-F_{1,\mathrm{base}}(u_0\mid\bx)
 &=\omega_j^F\Delta_F,
 &&\bx\in Q_{F,j}^{\circ}.
 \end{aligned}
 \label{eq:minimax-hard-separation}
\end{equation}
In particular, $b_{0,\omega}=0$ on $\mathcal X_+$.

\item \textbf{Localized information.}
For $j=1,\ldots,M_b$, let $\omega^{b,j}$ be obtained from $\omega$ by reversing only $\omega_j^b$. For $j=1,\ldots,M_F$, define $\omega^{F,j}$ by reversing only $\omega_j^F$. Reversing a $b$-coordinate changes only the control distribution on $Q_{b,j}$; reversing an $F$-coordinate changes only the treatment distribution on $Q_{F,j}$. For a constant $C<\infty$,
\begin{equation}
 \begin{aligned}
 \KL\{P_{\omega,Y(0)\mid\bx},
          P_{\omega^{b,j},Y(0)\mid\bx}\}
 &\le C\Delta_b^2\ind\{\bx\in Q_{b,j}\},\\
 \KL\{P_{\omega,Y(1)\mid\bx},
          P_{\omega^{F,j},Y(1)\mid\bx}\}
 &\le C\Delta_F^2\ind\{\bx\in Q_{F,j}\}.
 \end{aligned}
 \label{eq:minimax-local-information}
\end{equation}

\item \textbf{Historical information.}
Under the randomized historical design in Section~\ref{subsec:causal-setting},
\begin{equation}
 \begin{aligned}
 \sum_{j=1}^{M_b}
 \KL(P_{\omega,\mathrm{hist}},
       P_{\omega^{b,j},\mathrm{hist}})
 &\le C\Delta_b^2 n_{0,\mathrm{hist}},\\
 \sum_{j=1}^{M_F}
 \KL(P_{\omega,\mathrm{hist}},
       P_{\omega^{F,j},\mathrm{hist}})
 &\le C\Delta_F^2 n_{1,\mathrm{hist}},
 \end{aligned}
 \label{eq:minimax-history-information}
\end{equation}
where $n_{0,\mathrm{hist}}$ and $n_{1,\mathrm{hist}}$ are the expected numbers of historical control and treatment observations, respectively.
\end{enumerate}
\end{lemma}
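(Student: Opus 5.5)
The construction starts from a fixed baseline model. The key device is that the score $\ell_0$ is held fixed while $b_0$ and $F_1$ are perturbed jointly, so that $F_{1,\omega}\{b_{0,\omega}(\bx)\mid\bx\}$ never changes.

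\textbf{Baseline.} Take $\bX$ uniform on $[0,1]^d$ and choose a smooth score $\ell_0:[0,1]^d\to(0,1)$, depending only on $x_1$ say, such that $\ell_0(\bX)$ satisfies Assumption~\ref{ass:oracle-score}. A linear profile in $x_1$ ranging over an interval containing $[\alpha/2,\lambda_+]$ will do; this gives a bounded, continuous, positive density and $H(\alpha)>0$. Proposition~\ref{prop:oracle-threshold} then supplies $\lambda^\star$. Fix $g_0>0$ small and let $\mathcal X_-$ and $\mathcal X_+$ be interior cubes on which $\ell_0\le\lambda^\star-g_0$ and $\ell_0\ge\lambda^\star+g_0$, respectively.

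\textbf{Treated outcome.} Write $Y(1)=b_{0,\omega}(\bx)+U$, where $U\mid\bx$ has CDF $G_\omega(\cdot\mid\bx)$ with $G_\omega(0\mid\bx)=\ell_0(\bx)$. Then $F_{1,\omega}\{b_{0,\omega}(\bx)\mid\bx\}=\ell_0(\bx)$ automatically, which gives (i). Take a smooth baseline density $g_{\mathrm{base}}(u\mid\bx)$ bounded above and below on $[-\delta_U,\delta_U]$, with $\int_{-\infty}^0 g_{\mathrm{base}}=\ell_0(\bx)$. For example, use a two-piece smooth mixture whose masses on the negative and positive half-lines are $\ell_0$ and $1-\ell_0$.

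The $F$-perturbation is supported in $u>0$ and has zero total mass and zero mass on $(-\infty,0]$. Concretely, set
$$
g_\omega(u\mid\bx)=g_{\mathrm{base}}(u\mid\bx)+\Delta_F\sum_j\omega^F_j\psi_{F,j}(\bx)\phi(u),
$$
where $\phi$ is smooth, supported in $(0,2u_0)$, satisfies $\int_0^{u_0}\phi=1$ and $\int\phi=0$, and is small relative to $g_{\mathrm{base}}$. This keeps $G_\omega(0\mid\bx)=\ell_0(\bx)$ and yields the separation in \eqref{eq:minimax-hard-separation}.

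\textbf{Control outcome.} Take $Y(0)\mid\bx\sim N(b_{0,\omega}(\bx),1)$ with $b_{0,\omega}$ as in \eqref{eq:minimax-control-construction}. Because $Y(1)$ is defined through the shift $b_{0,\omega}+U$, changing $b_0$ also translates $F_1$, but only on $Q_{b,j}\subset\mathcal X_-$, where the oracle treats. This is acceptable: the lemma only asks that flipping $\omega^b_j$ change the $Y(0)$ law there. I would state the KL bound for both outcome laws; the $Y(1)$ shift would also contribute $O(\Delta_b^2)$ Fisher information. Alternatively, I would check whether the statement intends that only $Y(0)$ changes. In that case I would instead perturb $G$ to compensate, so that the law of $Y(1)$ is invariant while $G(0)$ stays fixed. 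I expect this to be doable by a localized perturbation of $G$ near $0$ of size $\Delta_b$: set $G_\omega(u\mid\bx)=G_{\mathrm{fix}}(u+b_{0,\omega}(\bx)\mid\bx)$, adjusted so that its value at $0$ remains $\ell_0$. The adjustment changes $F_1$ only in a $\Delta_b$-neighborhood of $b_0$.

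\textbf{Membership in $\mathcal P^{\mathrm{np}}(s)$.} The standard choices $\psi_{\cdot,j}(\bx)=\psi\{(\bx-\mathbf c_j)/h\}$ with $M\asymp h^{-d}$ and amplitude $\kappa h^s$ keep $b_{0,\omega}$ and $F_{1,\omega}(y\mid\cdot)$ in $\mathcal H^s(\mathcal X,L)$ uniformly in $y$, provided $\kappa$ is small. Setting $\kappa h^s=L_0 r$ then fixes $h_b$ and $h_F$ as in \eqref{eq:minimax-hard-scaling}. Sub-Gaussianity holds with $\sigma^2=1$, and Assumption~\ref{ass:local-density} holds since $U$ has density bounded by a constant near $0$ (and the perturbation vanishes near $0$ for $u_0$-supported $\phi$ placed away from $0$).

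\textbf{KL bounds (iii).} For the $b$-flip, the Gaussian KL is $\{2\Delta_b\psi_{b,j}(\bx)\}^2/2\le2\Delta_b^2$ on $Q_{b,j}$ and zero elsewhere. For the $F$-flip, the densities differ by $2\Delta_F\psi\phi$ with the base density bounded below on the support of $\phi$, so $\chi^2$ and hence KL are bounded by $C\Delta_F^2\ind\{\bx\in Q_{F,j}\}$.

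\textbf{Historical bounds (iv).} By independence of historical units, $\KL(P_{\omega,\mathrm{hist}},P_{\omega^{b,j},\mathrm{hist}})=n_{\mathrm{hist}}(1-\pi)\Ebb[\KL\ \text{of}\ Y(0)\mid\bX]\le C\Delta_b^2n_{0,\mathrm{hist}}|Q_{b,j}|$. Summing over $j$, the disjoint cubes have total volume at most $1$, which gives \eqref{eq:minimax-history-information}; the $F$ part is analogous.

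\textbf{Main obstacle.} The delicate step is the treated-outcome construction: keeping $F_{1,\omega}\{b_{0,\omega}\mid\bx\}=\ell_0$ exactly while preserving Hölder smoothness in $\bx$ uniformly in $y$, a positive density near $b_0$, and monotone CDFs. I will handle it by the shift-plus-zero-mass-perturbation device above, choosing the amplitudes small enough that the densities stay positive.
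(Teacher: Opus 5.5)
Your proposal has a genuine gap in the treated-outcome construction, at exactly the point you flagged. In your main version, $Y(1)=b_{0,\omega}(\bx)+U$ and the law of $U$ does not depend on $\omega^b$, so reversing $\omega^b_j$ translates the treated law on $Q_{b,j}$. You read the lemma as only asking that this flip change the $Y(0)$ law. Property (iii) in fact requires that it change \emph{only} the control distribution, and that clause is what the lemma exists for. In the proof of Theorem~\ref{thm:minimax-nonparametric-fixed}, the information about $\omega^b_j$ is bounded by $C\Delta_b^2\,\Ebb N_{b,j}(T)$. It therefore comes only from control units in $Q_{b,j}\subset\mathcal X_-$, each of which disagrees with the oracle and costs regret at least $g_0$. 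In your model, treated units in $\mathcal X_-$ agree with the oracle and cost nothing, yet each carries information of order $\Delta_b^2$ about $\omega^b_j$. About $Th_b^d$ of them fall in each cell, so $Th_b^d\Delta_b^2\gtrsim1$ exactly in the regime $T\gtrsim r_{b,T}^{-(2s+d)/s}$ where the bound is nontrivial. The argument then forces no control observations in $\mathcal X_-$, and the $r_{b,T}$ term cannot be derived from your family.

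Your fallback cannot repair this while you insist on a treated density bounded below near the reference. Suppose the treated densities are at least $c>0$ on $[-\Delta_b,\Delta_b]$, and the flip moves the reference from $\Delta_b$ to $-\Delta_b$ on $Q_{b,j}^{\circ}$. Then $F_{1,\omega}(\Delta_b\mid\bx)=\ell_0(\bx)$, whereas $F_{1,\omega^{b,j}}(\Delta_b\mid\bx)\ge F_{1,\omega^{b,j}}(-\Delta_b\mid\bx)+2c\Delta_b=\ell_0(\bx)+2c\Delta_b$. The two treated laws are at total-variation distance at least $2c\Delta_b$, so the leak persists. A localized adjustment of the kind you describe, moving mass of order $\Delta_b$ within a window of width of order $\Delta_b$, makes the per-unit KL of order $\Delta_b$, which is worse. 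It is also inconsistent with your stated aim of an invariant law for $Y(1)$.

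The missing idea is that Assumption~\ref{ass:local-density} bounds the density of $U$ only from above. On $\mathcal X_-$ you may therefore make the treated CDF flat around the reference.
\begin{itemize}
\item On $\mathcal X_-$, take the treated density to be $\ell_0(\bx)p_-(y)+\{1-\ell_0(\bx)\}p_+(y)$, with $p_-$ supported in $(-\infty,-3\delta_U)$ and $p_+$ in $(3\delta_U,\infty)$. Then $F_1(y\mid\bx)=\ell_0(\bx)$ for $|y|\le2\delta_U$. Hence $F_1\{b_{0,\omega}(\bx)\mid\bx\}=\ell_0(\bx)$ for every $\omega^b$ once $\Delta_b\le\delta_U/2$, and the law of $Y(1)$ does not depend on $\omega^b$ at all.
\item Positive treated mass just above the reference is needed only on $\mathcal X_+$. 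There $b_{0,\omega}\equiv0$, and your zero-mass perturbation works; it is equivalent to the paper's transfer of mass between bumps on $(0,u_0)$ and $(u_0,\delta_U)$.
\item To interpolate between the two regions, replace $p_+$ by $(1-\theta\rho)p_++\tfrac{\theta\rho}{2}(p_1+p_2)$. Here $\rho$ is a smooth cutoff equal to $0$ near $\mathcal X_-$ and $1$ near $\mathcal X_+$, and $\theta$ is small. Only $p_-$ then places mass below zero, so $F_1(0\mid\bx)=\ell_0(\bx)$ wherever $b_{0,\omega}=0$. The small $\theta$ keeps the H\"older norms and the local density bound under control.
\end{itemize}
With this change, your Gaussian KL for the $b$-flip, the $\chi^2$ bound for the $F$-flip, and the historical sums go through as you wrote them.

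A smaller point concerns the baseline score. If $\ell_0(\bX)$ is uniform on $[a,c]$, then $\lambda^\star=2\alpha-a$. A range containing $[\alpha/2,\lambda_+]$ therefore forces $\lambda^\star\ge3\alpha/2$, which is impossible for $\alpha\ge2/3$. Also, since $\ell_0$ enters $F_1(y\mid\cdot)$, its slope must be small relative to $L$. The choice $\ell_0(\bx)=\alpha+\beta(x_1-1/4)$ with small $\beta$, which gives $\lambda^\star=\alpha+\beta/4$, avoids both problems.
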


\begin{proof}[Proof of Lemma~\ref{lem:minimax-product-hypercube}]
We construct the class by perturbing the control reference in an oracle-treated region and the treated-outcome distribution in an oracle-control region, respectively, without changing the local null risk. The construction uses a flat treated CDF near the control reference in the first region and redistributes probability mass above the reference in the second.

\noindent\textit{Step 1: fix the common oracle.}
Let $\mathcal U$ be the bounded open convex neighborhood in Definition~\ref{def:holder-class}. Choose a fixed $\beta>0$ sufficiently small that
\begin{equation}
 \beta\sup_{\bx\in\mathcal U}|x_1-1/4|
 \le\tfrac12\min\{\alpha,1-\alpha\},
 \label{eq:minimax-score-fixed-range}
\end{equation}
and define $\bX\sim\operatorname{Unif}([0,1]^d)$, and $\ell_0(\bx)=\alpha+\beta(x_1-1/4)$ for $\bx\in\mathcal U$. Then $0<\ell_0<1$ on $\mathcal U$, and $\ell_0(\bX)\sim\operatorname{Unif}[\alpha-\beta/4,\alpha+3\beta/4]$. Writing $H_0$ for its distribution function, direct calculation gives
$$
H_0(\alpha)=\tfrac14,\qquad
 \lambda^\star=\alpha+\tfrac{\beta}{4},\qquad
 \lambda_+=\alpha+\tfrac{\beta}{2},$$
$$
 \Ebb\{\ell_0(\bX)\mid\ell_0(\bX)\le\lambda^\star\}=\alpha,
 \qquad
 \Ebb\{\ell_0(\bX)\mid\ell_0(\bX)\le\lambda_+\}
 =\alpha+\tfrac{\beta}{8}.$$
The score density equals $1/\beta$ on $[\alpha,\lambda_+]$, so Assumption~\ref{ass:oracle-score} holds. The oracle assigns treatment exactly when $x_1\le1/2$. Fix the two cubes
\begin{equation*}
 \mathcal X_-=[1/8,3/8]^d,\qquad
 \mathcal X_+=[5/8,7/8]\times[1/8,3/8]^{d-1}.
\end{equation*}
They do not depend on $\beta$, and
\begin{equation*}
 \min\big\{
 \inf_{\bx\in\mathcal X_-}\{\lambda^\star-\ell_0(\bx)\},
 \inf_{\bx\in\mathcal X_+}\{\ell_0(\bx)-\lambda^\star\}
 \big\}=\frac{\beta}{8}=:g_0>0.
\end{equation*}
The choice of $\beta$ will also satisfy the smoothness bounds in Step~4; it is fixed before the target radii are chosen.

\noindent\textit{Step 2: construct the local alternatives.}
Fix a smooth function $\psi$ taking values in $[0,1]$, supported in $(-1/2,1/2)^d$, and equal to one on $[-1/4,1/4]^d$. Choose fixed constants $\kappa_b,\kappa_F>0$ and set
\begin{equation*}
 \begin{aligned}
 h_b&=(L_0r_b/\kappa_b)^{1/s},
 &\Delta_b&=\kappa_bh_b^s=L_0r_b,\\
 h_F&=(L_0r_F/\kappa_F)^{1/s},
 &\Delta_F&=\kappa_Fh_F^s=L_0r_F.
 \end{aligned}
\end{equation*}
For sufficiently small radii, pack cubes of side $h_b$ in $\mathcal X_-$ with centers spaced $2h_b$ apart in each coordinate. Likewise, pack cubes of side $h_F$ in $\mathcal X_+$ with centers spaced $2h_F$ apart. Their numbers satisfy $M_b\asymp h_b^{-d}$ and $M_F\asymp h_F^{-d}$, where $\asymp$ denotes bounds above and below by fixed positive multiples. Let $\bz_{b,j}$ and $\bz_{F,j}$ denote the centers of $Q_{b,j}$ and $Q_{F,j}$, respectively, and define
\begin{equation*}
 \begin{aligned}
 \psi_{b,j}(\bx)&=\psi\{(\bx-\bz_{b,j})/h_b\},
 &Q_{b,j}^{\circ}&=\bz_{b,j}+h_b[-1/4,1/4]^d,\\
 \psi_{F,j}(\bx)&=\psi\{(\bx-\bz_{F,j})/h_F\},
 &Q_{F,j}^{\circ}&=\bz_{F,j}+h_F[-1/4,1/4]^d.
 \end{aligned}
\end{equation*}
Each bump equals one on its corresponding inner cube, and $|Q_{b,j}^{\circ}|=2^{-d}|Q_{b,j}|$ and $|Q_{F,j}^{\circ}|=2^{-d}|Q_{F,j}|$. Define the control distribution by \eqref{eq:minimax-control-construction}. Disjoint supports imply
\begin{equation}
 |b_{0,\omega}|\le\Delta_b,\qquad
 b_{0,\omega}(\bx)=\omega_j^b\Delta_b
 \ \ (\bx\in Q_{b,j}^{\circ}),\qquad
 b_{0,\omega}=0\ \ \text{outside }\mathcal X_-.
 \label{eq:minimax-control-local-values}
\end{equation}

To construct the treatment distributions, fix $u_0\in(0,\delta_U)$ and smooth probability densities $p_-,p_+,p_1,p_2$ with compact supports contained, respectively, in
$$
 (-\infty,-3\delta_U),\quad
 (3\delta_U,\infty),\quad
 (0,u_0),\quad
 (u_0,\delta_U).
$$
Also fix a smooth $\rho:\mathcal U\to[0,1]$ with bounded derivatives, equal to zero near $\mathcal X_-$ and one near $\mathcal X_+$. These functions can be chosen independently of $\beta$. For a fixed $0<\theta\le1$, define
\begin{equation*}
 \begin{aligned}
 p_{1,\mathrm{base}}(y\mid\bx)
 &=\ell_0(\bx)p_-(y)+\{1-\ell_0(\bx)\}[
 \{1-\theta\rho(\bx)\}p_+(y)
 +\frac{\theta\rho(\bx)}2\{p_1(y)+p_2(y)\}],
 \end{aligned}
\end{equation*}
and
\begin{equation}
 p_{1,\omega}(y\mid\bx)
 =p_{1,\mathrm{base}}(y\mid\bx)
 +\Delta_F\sum_{j=1}^{M_F}
 \omega_j^F\psi_{F,j}(\bx)\{p_1(y)-p_2(y)\}.
 \label{eq:minimax-treatment-perturbation}
\end{equation}
Thus each $F$-coordinate transfers probability mass between two positive intervals, preserving the CDF at zero while changing it at $u_0$. Let
\begin{equation*}
 m_+:=\frac12\inf_{\bx\in\mathcal X_+}\{1-\ell_0(\bx)\}>0.
\end{equation*}
After fixing $\theta,\kappa_b,\kappa_F$ as in Step~4, take the radii sufficiently small that
\begin{equation*}
 h_b,h_F\le1,\qquad
 \Delta_b\le\delta_U/2,\qquad
 \Delta_F\le\theta m_+/2\le\theta/4.
\end{equation*}
On $\mathcal X_+$, the baseline coefficients of $p_1$ and $p_2$ are at least $\theta m_+$. Since at most one bump is nonzero at any point, their perturbed coefficients remain at least $\theta m_+/2$. Hence
\begin{equation*}
 p_{1,\omega}(y\mid\bx)
 \ge\frac{\theta m_+}{2}\{p_1(y)+p_2(y)\},
 \qquad \bx\in\mathcal X_+.
\end{equation*}
Outside the perturbation cubes, the density is unchanged. Moreover, $\int(p_1-p_2)=0$, so the perturbed density integrates to one. It is therefore a probability density for every $\bx\in\mathcal U$.

Take the potential outcomes to be conditionally independent given $\bX$. With $\varphi$ denoting the standard normal density, the joint density is
\begin{equation*}
 p_\omega(\bx,y_0,y_1)
 =\begin{cases}
 \varphi\{y_0-b_{0,\omega}(\bx)\}
 p_{1,\omega}(y_1\mid\bx),&\bx\in[0,1]^d,\\
 0,&\text{otherwise}.
 \end{cases}
\end{equation*}
This defines a probability distribution $P_\omega$ for each sign vector and hence the required finite class. Distinct sign vectors give distinct distributions by the separation verified next.

\noindent\textit{Step 3: verify separation and preservation of the oracle.} Let $P_\iota(y)=\int_{-\infty}^y p_\iota(v)\,dv$ for $\iota\in\{-,+,1,2\}$. Integrating \eqref{eq:minimax-treatment-perturbation} gives
$$
 F_{1,\omega}(y\mid\bx)-F_{1,\mathrm{base}}(y\mid\bx)
 =\Delta_F\sum_j\omega_j^F\psi_{F,j}(\bx)
 \{P_1(y)-P_2(y)\}.
$$
Because $P_1(u_0)=1$ and $P_2(u_0)=0$,
\begin{equation*}
 F_{1,\omega}(u_0\mid\bx)-F_{1,\mathrm{base}}(u_0\mid\bx)
 =\Delta_F\sum_j\omega_j^F\psi_{F,j}(\bx).
\end{equation*}
Together with \eqref{eq:minimax-control-local-values}, this proves \eqref{eq:minimax-hard-separation}.

For $\bx\in\mathcal X_-$, we have $\rho(\bx)=0$ and no $F$-perturbation. Thus the treated density is $\ell_0(\bx)p_-+\{1-\ell_0(\bx)\}p_+$, and
\begin{equation}
 F_{1,\omega}(y\mid\bx)=\ell_0(\bx),
 \qquad |y|\le2\delta_U,\quad \bx\in\mathcal X_-.
 \label{eq:minimax-treatment-cdf-flat-region}
\end{equation}
Since $|b_{0,\omega}(\bx)|\le\delta_U/2$, evaluating this CDF at the perturbed control reference leaves its value unchanged. For $\bx\notin\mathcal X_-$, the control reference is zero. Only $p_-$ places mass below zero, and its coefficient is $\ell_0(\bx)$; hence $F_{1,\omega}(0\mid\bx)=\ell_0(\bx)$. Consequently,
\begin{equation}
 F_{1,\omega}\{b_{0,\omega}(\bx)\mid\bx\}
 =\ell_0(\bx),\qquad \bx\in[0,1]^d.
 \label{eq:minimax-common-score-identity}
\end{equation}
The oracle and its positive gaps are therefore the same over all members of the constructed class.

\noindent\textit{Step 4: verify the fixed regularity bounds.}
Put $r=\lceil s\rceil-1$ and $\gamma=s-r\in(0,1]$, and write $g_{F,\omega}=\Delta_F\sum_{j=1}^{M_F}\omega_j^F\psi_{F,j}$ for the treatment-CDF perturbation. Disjoint supports and rescaling give separate bounds for the control reference and the treatment-CDF perturbation:
\begin{equation}
 \begin{aligned}
 \|D^\nu b_{0,\omega}\|_{\infty,\mathcal U}
 &\le C_\nu\Delta_bh_b^{-|\nu|}
 =C_\nu\kappa_bh_b^{s-|\nu|},\\
 \|D^\nu g_{F,\omega}\|_{\infty,\mathcal U}
 &\le C_\nu\Delta_Fh_F^{-|\nu|}
 =C_\nu\kappa_Fh_F^{s-|\nu|},
 \end{aligned}
 \label{eq:minimax-perturbation-derivatives}
\end{equation}
where $C_\nu=\|D^\nu\psi\|_\infty$. For $|\nu|\le r$, these bounds are at most $C_\nu\kappa_b$ and $C_\nu\kappa_F$, respectively. For $|\nu|=r$, combining the order-$r$ bound with the order-$(r+1)$ bound and the mean-value theorem yields
\begin{equation}
 \begin{aligned}
 |D^\nu b_{0,\omega}(\bx)-D^\nu b_{0,\omega}(\bz)|
 &\le C\kappa_bh_b^\gamma
       \min\{1,\|\bx-\bz\|/h_b\} \le C\kappa_b\|\bx-\bz\|^\gamma,\\
 |D^\nu g_{F,\omega}(\bx)-D^\nu g_{F,\omega}(\bz)|
 &\le C\kappa_Fh_F^\gamma
       \min\{1,\|\bx-\bz\|/h_F\} \le C\kappa_F\|\bx-\bz\|^\gamma,
 \end{aligned}
 \label{eq:minimax-perturbation-holder-increments}
\end{equation}
for all $\bx,\bz\in\mathcal U$. These bounds are uniform over the signs and scales and apply also to points in different cubes.

The baseline CDF can be written as
\begin{equation*}
 \begin{aligned}
 F_{1,\mathrm{base}}(y\mid\bx)
 &=\alpha P_-(y)+(1-\alpha)P_+(y)+\beta(x_1-1/4)\{P_-(y)-P_+(y)\}\\
 &\quad+\theta\{1-\ell_0(\bx)\}\rho(\bx)
 \bigg\{\frac{P_1(y)+P_2(y)}2-P_+(y)\bigg\}.
 \end{aligned}
\end{equation*}
The first line is constant in $\bx$, and the CDF factors are bounded uniformly in $y$. Smoothness of $\rho$, boundedness of $\mathcal U$, and \eqref{eq:minimax-score-fixed-range} therefore give
\begin{equation*}
 \begin{aligned}
 &\max_{1\le|\nu|\le r}\sup_y
       \|D^\nu F_{1,\mathrm{base}}(y\mid\cdot)\|_{\infty,\mathcal U}+\max_{|\nu|=r}\sup_y\sup_{\substack{\bx,\bz\in\mathcal U\\\bx\ne\bz}}
 \frac{|D^\nu F_{1,\mathrm{base}}(y\mid\bx)
       -D^\nu F_{1,\mathrm{base}}(y\mid\bz)|}
      {\|\bx-\bz\|^\gamma}\\
 \le& C_{\mathrm{reg}}(\beta+\theta).
 \end{aligned}
\end{equation*}
The first term is omitted when $r=0$. Choose a common constant $C_{\mathrm{reg}}$ controlling this bound, the bump derivative bounds through order $r+1$, and the bump H\"older bounds.

We may consequently choose these positive constants, before choosing the target radii, so that
\eqref{eq:minimax-score-fixed-range} holds and
$$
 0<\theta\le1,\quad
 C_{\mathrm{reg}}(\beta+\theta)\le L/2,\quad
 C_{\mathrm{reg}}\kappa_F\le L/2,\quad
 C_{\mathrm{reg}}\kappa_b\le L,\quad
 \theta\max_{i=1,2}\|p_i\|_\infty\le C_U/2.
$$
Since $F_{1,\omega}(y\mid\cdot)=F_{1,\mathrm{base}}(y\mid\cdot)+g_{F,\omega}(\cdot)\{P_1(y)-P_2(y)\}$ and $ |P_1-P_2|\le1$, the preceding bounds control its derivatives and H\"older seminorm by $L$. Its zeroth-order bound follows from $0\le F_{1,\omega}(y\mid\bx)\le1\le L$ for $y\in\mathbb R$ and $\bx\in\mathcal U$. Likewise, \eqref{eq:minimax-perturbation-derivatives} and \eqref{eq:minimax-perturbation-holder-increments} imply $b_{0,\omega}\in\mathcal H^s([0,1]^d,L)$. Thus $F_{1,\omega}(y\mid\cdot)$ belongs to the same H\"older class uniformly in $y$, and both functions have the required extensions to $\mathcal U$.

For the local-density condition, if $\bx\in\mathcal X_-$ and $|u|\le\delta_U$, then $|u+b_{0,\omega}(\bx)|\le3\delta_U/2$, where the treated density vanishes. Otherwise, $b_{0,\omega}(\bx)=0$, and only $p_1,p_2$ can contribute on $[-\delta_U,\delta_U]$. Their supports are disjoint, and their coefficients are at most $\theta/2+\Delta_F\le3\theta/4$. Hence
\begin{equation*}
 f_{U\mid X}(u\mid\bx)
 \le\frac{3\theta}{4}\max_{i=1,2}\|p_i\|_\infty
 \le\frac{3C_U}{8},
 \qquad |u|\le\delta_U.
\end{equation*}
Finally, the covariate density is one, satisfying the specified bounds, and the control distribution obeys
\begin{equation*}
 \begin{gathered}
 \Ebb_{P_\omega}\{Y(0)\mid\bX\}=b_{0,\omega}(\bX),\\
 \Ebb_{P_\omega}
[e^{v\{Y(0)-b_{0,\omega}(\bX)\}}\mid\bX]
 =e^{v^2/2}\le e^{\sigma^2v^2/2},
 \qquad v\in\mathbb R.
 \end{gathered}
\end{equation*}
Together with Step~1 and \eqref{eq:minimax-common-score-identity}, these facts verify membership in $\mathcal P^{\mathrm{np}}(s)$ with the stated fixed nuisance bounds and the common oracle constants.

\noindent\textit{Step 5: bound the information in observed outcomes.}
The control law depends only on $\omega^b$, whereas the treatment law depends only on $\omega^F$. For neighboring control alternatives, the Gaussian KL formula gives
\begin{equation*}
 \KL\{P_{\omega,Y(0)\mid\bx},
       P_{\omega^{b,j},Y(0)\mid\bx}\}
 =2\Delta_b^2\psi_{b,j}(\bx)^2.
\end{equation*}
For neighboring treatment alternatives,
\begin{equation*}
 p_{1,\omega}(y\mid\bx)-p_{1,\omega^{F,j}}(y\mid\bx)
 =2\omega_j^F\Delta_F\psi_{F,j}(\bx)\{p_1(y)-p_2(y)\}.
\end{equation*}
On $Q_{F,j}$, both densities dominate $(\theta m_+/2)(p_1+p_2)$. Bounding KL divergence by $\chi^2$-divergence yields
\begin{equation*}
 \begin{aligned}
 &\KL\{P_{\omega,Y(1)\mid\bx},
          P_{\omega^{F,j},Y(1)\mid\bx}\}\le
 \frac{8\Delta_F^2\psi_{F,j}(\bx)^2}{\theta m_+}
 \int_{\mathbb R}\frac{(p_1-p_2)^2}{p_1+p_2}\,dy
 \le\frac{16}{\theta m_+}
       \Delta_F^2\psi_{F,j}(\bx)^2.
 \end{aligned}
\end{equation*}
The fraction is defined as zero where $p_1+p_2=0$. Outside $Q_{F,j}$ the densities coincide. Since $\psi_{b,j}^2\le\ind\{\bx\in Q_{b,j}\}$ and $\psi_{F,j}^2\le\ind\{\bx\in Q_{F,j}\}$, these bounds establish \eqref{eq:minimax-local-information} with $C_{\mathrm{KL}}=\max\{2,16/(\theta m_+)\}$.

For the historical sample, the covariate and assignment distributions are identical under neighboring models. Only control outcomes distinguish models differing in a $b$-coordinate, whereas only treatment outcomes distinguish models differing in an $F$-coordinate. Independence of historical units therefore gives
\begin{equation*}
 \begin{aligned}
 &\KL(P_{\omega,\mathrm{hist}},
        P_{\omega^{b,j},\mathrm{hist}})=n_{0,\mathrm{hist}}
 \int_{[0,1]^d}
 \KL\{P_{\omega,Y(0)\mid\bx},
       P_{\omega^{b,j},Y(0)\mid\bx}\}d\bx\le C_{\mathrm{KL}}\Delta_b^2
       n_{0,\mathrm{hist}}|Q_{b,j}|,\\
 &\KL(P_{\omega,\mathrm{hist}},
        P_{\omega^{F,j},\mathrm{hist}})=n_{1,\mathrm{hist}}
 \int_{[0,1]^d}
 \KL\{P_{\omega,Y(1)\mid\bx},
       P_{\omega^{F,j},Y(1)\mid\bx}\}d\bx\le C_{\mathrm{KL}}\Delta_F^2
       n_{1,\mathrm{hist}}|Q_{F,j}|.
 \end{aligned}
\end{equation*}
Summing the first bound over $j=1,\ldots,M_b$ and the second over $j=1,\ldots,M_F$, and using $\sum_j|Q_{b,j}|\le1$ and $\sum_j|Q_{F,j}|\le1$, proves \eqref{eq:minimax-history-information}. This completes the construction.
\end{proof}

We now prove Theorem~\ref{thm:minimax-nonparametric-fixed}. The argument consists of three parts: the learning constraint gives accurate tests of the local perturbation signs; accurate tests require a positive amount of KL information; and \eqref{eq:minimax-local-information} and \eqref{eq:minimax-count-to-regret} convert this information into regret.

\begin{proof}[Proof of Theorem~\ref{thm:minimax-nonparametric-fixed}]
If $\mathfrak A_T(\mathcal P^{\mathrm{np}}(s);r_{b,T},r_{F,T},C_0)$ is empty, the infimum in \eqref{eq:minimax-regret-definition} equals $+\infty$ and there is nothing to prove. Otherwise, fix $\mathcal A$ in this class and a constant $L_0\ge4\sqrt{C_0}$. For $r_{b,T}$ and $r_{F,T}$ sufficiently small, let $\{P_\omega\}\subset\mathcal P^{\mathrm{np}}(s)$ be the family of Lemma~\ref{lem:minimax-product-hypercube} with $r_b=r_{b,T}$ and $r_F=r_{F,T}$. Write $\Ebb_{\omega,\mathcal A}$ and $\Pbb_{\omega,\mathcal A}$ for expectation and probability under $P_\omega$ and $\mathcal A$. Throughout, $k\in\{b,F\}$, with $a=0$ when $k=b$ and $a=1$ when $k=F$. All constants below depend only on $C_0$ and the model-class constants.

Let $\tau$ be uniform on $\{1,\ldots,T\}$ and independent of the data and of the randomization of $\mathcal A$. Since $\mathcal A$ satisfies \eqref{eq:minimax-uniform-nuisance-constraint} under every $P_\omega$, 
\begin{equation}
 \Ebb_{\omega,\mathcal A}[(\delta_{k,\tau}^{\infty})^2]
 =\frac1T\sum_{t=1}^T\Ebb_{\omega,\mathcal A}[(\delta_{k,t}^{\infty})^2]
 \le C_0r_{k,T}^2 .
 \label{eq:minimax-random-time-risk}
\end{equation}
Define the cell-average sign estimators
\begin{equation}
 \widehat\omega_j^b
 =\operatorname{sign}\bigg\{\frac1{|Q_{b,j}^{\circ}|}
   \int_{Q_{b,j}^{\circ}}\widehat b_{0,\tau}\bigg\},
 \qquad
 \widehat\omega_j^F
 =\operatorname{sign}\bigg[\frac1{|Q_{F,j}^{\circ}|}
   \int_{Q_{F,j}^{\circ}}\{\widehat F_{1,\tau}(u_0\mid\cdot)
       -F_{1,\mathrm{base}}(u_0\mid\cdot)\}\bigg],
 \label{eq:minimax-sign-tests}
\end{equation}
with a fixed convention at zero. By \eqref{eq:minimax-hard-separation}, $b_{0,\omega}=\omega_j^b\Delta_b$ on $Q_{b,j}^\circ$ and $F_{1,\omega}(u_0\mid\cdot)-F_{1,\mathrm{base}}(u_0\mid\cdot)=\omega_j^F\Delta_F$ on $Q_{F,j}^\circ$. Moreover, $b_{0,\omega}=0$ on $\mathcal X_+$ and $0<u_0<\delta_U$, so $u_0$ lies in the band that defines $\delta_{F,\tau}^\infty$. Since $P_X$ is uniform and $|Q_{k,j}^\circ|>0$, averages over $Q_{k,j}^\circ$ are controlled by $P_X$-essential suprema. Hence
\begin{equation}
 \bigg|\frac1{|Q_{b,j}^{\circ}|}\int_{Q_{b,j}^{\circ}}\widehat b_{0,\tau}
   -\omega_j^b\Delta_b\bigg|\le\delta_{b,\tau}^{\infty},
 \qquad
 \bigg|\frac1{|Q_{F,j}^{\circ}|}\int_{Q_{F,j}^{\circ}}
   \{\widehat F_{1,\tau}(u_0\mid\cdot)-F_{1,\mathrm{base}}(u_0\mid\cdot)\}
   -\omega_j^F\Delta_F\bigg|\le\delta_{F,\tau}^{\infty}.
 \label{eq:minimax-cell-average-errors}
\end{equation}
A wrong sign places the corresponding average at a distance of at least $\Delta_k$ from $\omega_j^k\Delta_k$. Therefore \eqref{eq:minimax-cell-average-errors}, Markov's inequality, \eqref{eq:minimax-random-time-risk}, and $\Delta_k=L_0r_{k,T}$ give
\begin{equation}
 \Pbb_{\omega,\mathcal A}(\widehat\omega_j^k\ne\omega_j^k)
 \le\Pbb_{\omega,\mathcal A}(\delta_{k,\tau}^{\infty}\ge\Delta_k)
 \le\frac{C_0r_{k,T}^2}{\Delta_k^2}=\frac{C_0}{L_0^2}\le\frac1{16}.
 \label{eq:minimax-sign-error-probability}
\end{equation}

Let $\mathbb Q_{\omega,\tau}$ be the joint law of $\tau$ and $\mathsf H_{\tau-1}$, in the notation of Lemma~\ref{lem:minimax-adaptive-KL}. The estimators in \eqref{eq:minimax-sign-tests} are functions of $(\tau,\mathsf H_{\tau-1})$. For the neighboring pair $(\omega,\omega^{k,j})$, the event $\{\widehat\omega_j^k=\omega_j^k\}$ is an error under $\omega^{k,j}$. Thus \eqref{eq:minimax-sign-error-probability} yields
\begin{equation*}
 \frac18
 \ge\mathbb Q_{\omega,\tau}(\widehat\omega_j^k\ne\omega_j^k)
   +\mathbb Q_{\omega^{k,j},\tau}(\widehat\omega_j^k=\omega_j^k)
 \ge1-\operatorname{TV}(\mathbb Q_{\omega,\tau},\mathbb Q_{\omega^{k,j},\tau}),
\end{equation*}
and Pinsker's inequality gives
\begin{equation}
 \KL(\mathbb Q_{\omega,\tau},\mathbb Q_{\omega^{k,j},\tau})
 \ge2\operatorname{TV}(\mathbb Q_{\omega,\tau},\mathbb Q_{\omega^{k,j},\tau})^2
 \ge2(7/8)^2\ge1 .
 \label{eq:minimax-testing-information}
\end{equation}
Define
\begin{equation}
 N_{b,j}(T)=\sum_{v=1}^T\ind\{\bX_v\in Q_{b,j},A_v=0\},
 \qquad
 N_{F,j}(T)=\sum_{v=1}^T\ind\{\bX_v\in Q_{F,j},A_v=1\}.
 \label{eq:minimax-cell-observation-counts}
\end{equation}
The law of $\tau$ is the same under both models. Applying the chain rule and then \eqref{eq:minimax-adaptive-KL} with horizon $t-1$, and interchanging the two sums, we obtain
\begin{equation*}
\begin{aligned}
     &\KL(\mathbb Q_{\omega,\tau},\mathbb Q_{\omega^{k,j},\tau})
 =\frac1T\sum_{t=1}^T\KL(\mathbb P_{P_\omega}^{\mathcal A,t-1},
     \mathbb P_{P_{\omega^{k,j}}}^{\mathcal A,t-1}) \\
 =&\KL(P_{\omega,\mathrm{hist}},P_{\omega^{k,j},\mathrm{hist}})
 +\sum_{v=1}^{T-1}\frac{T-v}T\,\Ebb_{\omega,\mathcal A}
   \big[\KL\{P_{\omega,Y(A_v)\mid\bX_v,A_v},
             P_{\omega^{k,j},Y(A_v)\mid\bX_v,A_v}\}\big].
\end{aligned}
\end{equation*}
By Lemma~\ref{lem:minimax-product-hypercube}(iii), reversing $\omega_j^k$ changes only the law of $Y(a)$ given $\bX\in Q_{k,j}$. Hence, by \eqref{eq:minimax-local-information}, the $v$th expectation is at most $C\Delta_k^2\,\Pbb_{\omega,\mathcal A}(\bX_v\in Q_{k,j},A_v=a)$. Using $(T-v)/T\le1$,
\begin{equation}
 \KL(\mathbb Q_{\omega,\tau},\mathbb Q_{\omega^{k,j},\tau})
 \le\KL(P_{\omega,\mathrm{hist}},P_{\omega^{k,j},\mathrm{hist}})
 +C\Delta_k^2\,\Ebb_{\omega,\mathcal A}N_{k,j}(T).
 \label{eq:minimax-testing-count-upper}
\end{equation}
Combining \eqref{eq:minimax-testing-information} with \eqref{eq:minimax-testing-count-upper}, summing over $j$, and applying \eqref{eq:minimax-history-information}, we obtain
\begin{equation*}
 M_b\le C\Delta_b^2\Big\{n_{0,\mathrm{hist}}
   +\Ebb_{\omega,\mathcal A}\sum_{j=1}^{M_b}N_{b,j}(T)\Big\},
\end{equation*}
\begin{equation*}
 M_F\le C\Delta_F^2\Big\{n_{1,\mathrm{hist}}
   +\Ebb_{\omega,\mathcal A}\sum_{j=1}^{M_F}N_{F,j}(T)\Big\},
\end{equation*}
that is,
\begin{equation}
\begin{split}
     &\Ebb_{\omega,\mathcal A}\sum_{j=1}^{M_b}N_{b,j}(T)
 \ge cM_b\Delta_b^{-2}-n_{0,\mathrm{hist}}, \\
 &\Ebb_{\omega,\mathcal A}\sum_{j=1}^{M_F}N_{F,j}(T)
 \ge cM_F\Delta_F^{-2}-n_{1,\mathrm{hist}},
 \qquad c=1/C.
\end{split}
 \label{eq:minimax-required-cell-counts}
\end{equation}
By \eqref{eq:minimax-hard-scaling},
\begin{equation*}
 M_b\Delta_b^{-2}\asymp h_b^{-d}\Delta_b^{-2}
 =\kappa_b^{d/s}\Delta_b^{-(2s+d)/s}
 =\kappa_b^{d/s}(L_0r_{b,T})^{-(2s+d)/s}
 \asymp r_{b,T}^{-(2s+d)/s},
\end{equation*}
\begin{equation*}
 M_F\Delta_F^{-2}\asymp h_F^{-d}\Delta_F^{-2}
 =\kappa_F^{d/s}\Delta_F^{-(2s+d)/s}
 =\kappa_F^{d/s}(L_0r_{F,T})^{-(2s+d)/s}
 \asymp r_{F,T}^{-(2s+d)/s}.
\end{equation*}
The $b$-cells are disjoint subsets of $\mathcal X_-$, and the $F$-cells are disjoint subsets of $\mathcal X_+$. By \eqref{eq:minimax-informative-counts} and \eqref{eq:minimax-cell-observation-counts},
\begin{equation}
 \sum_{j=1}^{M_b}N_{b,j}(T)\le N_{0,-}(T),
 \qquad
 \sum_{j=1}^{M_F}N_{F,j}(T)\le N_{1,+}(T).
 \label{eq:minimax-cell-region-counts}
\end{equation}
Combining \eqref{eq:minimax-required-cell-counts}--\eqref{eq:minimax-cell-region-counts} with the nonnegativity of the counts, and choosing $c$ and $C_{\mathrm{hist}}$ common to $k=b$ and $k=F$, gives
\begin{equation}
 \Ebb_{\omega,\mathcal A}N_{0,-}(T)
 \ge c\{r_{b,T}^{-(2s+d)/s}-C_{\mathrm{hist}}n_{0,\mathrm{hist}}\}_+,
 \qquad
 \Ebb_{\omega,\mathcal A}N_{1,+}(T)
 \ge c\{r_{F,T}^{-(2s+d)/s}-C_{\mathrm{hist}}n_{1,\mathrm{hist}}\}_+ .
 \label{eq:minimax-required-region-counts}
\end{equation}
Both bounds hold under the same $P_\omega$. Substituting them into \eqref{eq:minimax-count-to-regret} and absorbing $g_0$ into $c$, we have
\begin{equation}
 \Regret_T(\mathcal A,P_\omega)
 \ge c\big[\{r_{b,T}^{-(2s+d)/s}-C_{\mathrm{hist}}n_{0,\mathrm{hist}}\}_+
   +\{r_{F,T}^{-(2s+d)/s}-C_{\mathrm{hist}}n_{1,\mathrm{hist}}\}_+\big].
 \label{eq:minimax-procedure-regret-lower}
\end{equation}
Since $P_\omega\in\mathcal P^{\mathrm{np}}(s)$ and $\mathcal A$ was an arbitrary admissible procedure, taking the supremum over $\mathcal P^{\mathrm{np}}(s)$ and then the infimum over $\mathfrak A_T(\mathcal P^{\mathrm{np}}(s);r_{b,T},r_{F,T},C_0)$ bounds \eqref{eq:minimax-regret-definition} from below by the right-hand side of \eqref{eq:minimax-procedure-regret-lower}. Taking the minimum with $T$ weakens this bound, which proves \eqref{eq:minimax-nonparametric-fixed}. Finally, $N_{0,-}(T)+N_{1,+}(T)\le T$ on every sample path. Hence, if the right-hand sides of \eqref{eq:minimax-required-region-counts} sum to more than $T$, the admissible class is empty, which is the case treated at the beginning of the proof.
\end{proof}

We then provide the proof of Corollary~\ref{cor:minimax-nonparametric-rate}.

\begin{proof}[Proof of Corollary~\ref{cor:minimax-nonparametric-rate}]
By the definition of $\rho_T$ in \eqref{eq:minimax-common-radius},
\begin{equation}
 \rho_T^{-(2s+d)/s}
 =\frac{T^{(2s+d)/(4s+d)}}{\log T}\longrightarrow\infty,
 \qquad \rho_T^{-(2s+d)/s}=o(T).
 \label{eq:minimax-radius-inverse-order}
\end{equation}
Since $n_{\mathrm{hist}}=O(1)$, both historical corrections in \eqref{eq:minimax-nonparametric-fixed} are bounded. Equation \eqref{eq:minimax-radius-inverse-order} shows that these corrections are eventually smaller than a fixed fraction of $\rho_T^{-(2s+d)/s}$, while the minimum with $T$ is eventually inactive. Consequently, for all sufficiently large $T$,
\begin{equation}
 \mathfrak R_T(\mathcal P^{\mathrm{np}}(s);\rho_T,\rho_T)
 \ge c\frac{T^{(2s+d)/(4s+d)}}{\log T}.
 \label{eq:minimax-corollary-lower}
\end{equation}

The bound \eqref{eq:minimax-paits-uniform-nuisance} implies both inequalities in \eqref{eq:minimax-uniform-nuisance-constraint} when $C_0\ge C_{\mathrm{nuis}}$. Theorem~\ref{thm:fdr-one-pass} supplies asymptotic FDR control for each model in the class. Hence $\mathcal A_{\mathrm{PAITS}}$ belongs to the admissible class for all sufficiently large $T$. By \eqref{eq:minimax-regret-definition},
\begin{equation}
 \mathfrak R_T(\mathcal P^{\mathrm{np}}(s);\rho_T,\rho_T)
 \le\sup_{P\in\mathcal P^{\mathrm{np}}(s)}
       \Regret_T(\mathcal A_{\mathrm{PAITS}},P).
 \label{eq:minimax-paits-comparison}
\end{equation}
The constants in \eqref{eq:one-pass-weighted-selection-quadratic-rate} are the same across all models by the corollary's assumptions. Combining \eqref{eq:minimax-paits-uniform-nuisance}, \eqref{eq:step-size-calendar-average-endpoint}, and the exploration schedule gives
\begin{equation}
 \begin{aligned}
 \sup_{P\in\mathcal P^{\mathrm{np}}(s)}
 \Regret_T(\mathcal A_{\mathrm{PAITS}},P)
 &\le CT\big\{\rho_T^2+T^{-\zeta}
                    +T^{-2s/(4s+d)}+T^{-1}\big\}\\
 &\le C(\log T)^{2s/(2s+d)}T^{(2s+d)/(4s+d)}.
 \end{aligned}
 \label{eq:minimax-paits-upper}
\end{equation}
Indeed, \eqref{eq:minimax-common-radius} gives $T\rho_T^2=(\log T)^{2s/(2s+d)}T^{(2s+d)/(4s+d)}$; the exploration term has the same polynomial order without the logarithmic factor, and $1-\zeta<1/2<(2s+d)/(4s+d)$ makes the step-size term smaller. Combining \eqref{eq:minimax-corollary-lower}, \eqref{eq:minimax-paits-comparison}, and \eqref{eq:minimax-paits-upper} therefore yields
\begin{equation}
 \begin{aligned}
 c\frac{T^{(2s+d)/(4s+d)}}{\log T}
 &\le\mathfrak R_T(\mathcal P^{\mathrm{np}}(s);\rho_T,\rho_T)\le\sup_{P\in\mathcal P^{\mathrm{np}}(s)}
       \Regret_T(\mathcal A_{\mathrm{PAITS}},P)\\
 &\le C(\log T)^{2s/(2s+d)}T^{(2s+d)/(4s+d)}.
 \end{aligned}
 \label{eq:minimax-nonparametric-comparison}
\end{equation}
Equation~\eqref{eq:minimax-nonparametric-comparison} proves that PAITS attains the constrained minimax polynomial rate, with the possible difference confined to logarithmic factors.
\end{proof}

\section{Nuisance Estimators with Uniform Rates}
\label{app:weighted-nuisance-proofs}

We establish the uniform nuisance-estimation bounds in \eqref{eq:nuisance_rate} and \eqref{eq:minimax-paits-uniform-nuisance} for stabilized inverse-propensity-weighted spline and local-linear regression. Section~\ref{app:nuisance-estimators} states the required assumptions, defines the proposed estimators, and presents their convergence rates. Section~\ref{app:nuisance-proofs} proves the convergence results using the technical lemmas in Section~\ref{app:nuisance-tools}.

\subsection{Assumptions, estimators, and convergence rates}
\label{app:nuisance-estimators}

We first impose regularity conditions on the data-generating mechanism.

\begin{assumption}[Sampling and nuisance regularity]
	\label{ass:weighted-nuisance-model}
	The sampling and treatment assignment mechanisms are those in Sections~\ref{subsec:causal-setting} and \ref{subsec:epsilon-greedy-design}. The historical and online units share the same distribution, and $n_{\mathrm{hist}}$ is fixed. On $\mathcal X=[0,1]^d$, Assumption~\ref{ass:nonparametric-model} holds, $b_0(\bx)=\Ebb\{Y(0)\mid\bX=\bx\}$, and the control errors satisfy \eqref{eq:minimax-control-subgaussian}. The exploration probabilities satisfy $\varepsilon_t\asymp t^{-\chi}$ for some $\chi\in(0,1]$.
\end{assumption}

The H\"older condition on $F_1(y\mid\cdot)$ is uniform in $y\in\Rbb$. No additional moment condition on $Y(1)$ is needed. Throughout this appendix, $C$ and $c$ denote positive constants depending only on the regularity constants, the assignment and exploration mechanisms, the estimator specifications, and $n_{\mathrm{hist}}$. They may vary between displays and can be chosen uniformly over $\mathcal P^{\mathrm{np}}(s)$. Probabilities and expectations include the historical sample, online observations, and assignment randomization. Although we do not use $\chi=1$ in the main paper, we derive the estimation results for $\chi=1$ for completeness.

Because the online procedure uses adaptive weighting, we introduce the stabilized weights below. For $a\in\{0,1\}$, both methods use all observations in $\cD_{a,t}=\cD_a\cup\{(\bX_v,Y_v):v<t,\ A_v=a\}$. Let $p_1=\pi_{\mathrm{exp}}$, $p_0=1-\pi_{\mathrm{exp}}$, and $G_v(\bx)=\ind\{\widehat\ell_v(\bx)\le\lambda_v\}$. The assignment probabilities and regression weights are defined as
\begin{equation}
	\pi_{a,v}(\bx)=(1-\varepsilon_v)\ind\{G_v(\bx)=a\}+\varepsilon_vp_a,
	\qquad
	w_{a,v}=\frac{\varepsilon_v}{\pi_{a,v}(\bX_v)},
	\qquad
	w_{a,i}^{\mathrm{hist}}=\frac1{\pi_a^{\mathrm{hist}}},
	\label{eq:weighted-regression-weights}
\end{equation}
where $\pi_1^{\mathrm{hist}}=\pi$ and $\pi_0^{\mathrm{hist}}=1-\pi$. Each online weight uses the propensity score at the time when the observation was collected. Define $m_t=1+\sum_{v<t}\varepsilon_v$ and $\nu_t=n_{\mathrm{hist}}+\sum_{v<t}\varepsilon_v$.

We then provide details about the estimators. For spline regression, we use a tensor-product B-spline basis of fixed degree $q\ge\lceil s\rceil$ in each coordinate, with equally spaced knots and repeated boundary knots. The basis functions are nonnegative and sum to one. For local-linear regression, let $s_{\mathrm{lin}}=\min\{s,2\}$ and let $K$ be a bounded, nonnegative, Lipschitz kernel on $\Rbb^d$, supported on the unit ball and bounded below by a positive constant near zero. We define the augmented covariate by $\bp(\bu)=(1,\bu^\top)^\top$ and the unit vector by $\mathbf e_0=(1,0,\ldots,0)^\top$.

To simplify the proof, we partition the observation times according to $m_t$. Let
\begin{equation}
	M_k=2^k,\qquad L_k=\log(eM_k),\qquad
	\mathcal T_k=\{t\ge1:M_k\le m_t<2M_k\},\qquad
	k(t)=\lfloor\log_2m_t\rfloor,
	\label{eq:generic-estimation-epochs}
\end{equation}
for $k\ge0$. Write $\bar s=s$ for spline regression and $\bar s=s_{\mathrm{lin}}$ for local-linear regression. Choose deterministic values $h^{(k)}\asymp(L_k/M_k)^{1/(2\bar s+d)}$ satisfying $0<h^{(k)}\le h_0<1$ for a positive constant $h_0$, and set $h_t=h^{(k(t))}$. For spline regression, each $h^{(k)}$ is the reciprocal of an integer. We denote the corresponding tensor-product basis by $\mathbf B^{(k)}$ and set $\mathbf B_t=\mathbf B^{(k(t))}$. Thus the mesh size or bandwidth is constant for $t\in\mathcal T_k$, and each estimator uses all observations available before time $t$. The inequalities $M_{k(t)}\le m_t<2M_{k(t)}$ in \eqref{eq:generic-estimation-epochs} imply the orders of $h_t$ stated in Propositions~\ref{prop:spline-nuisance} and~\ref{prop:local-linear-nuisance}.

We use the weighted $\ell_2$ loss to estimate the coefficients. We first present the Gram matrices and eigenvalue properties. For spline regression, let $k=k(t)$ and define
$$
\begin{aligned}
	\bG_{a,t}&=\sum_{i\in\cD_{a,t}}w_i\mathbf B^{(k)}(\bX_i)\mathbf B^{(k)}(\bX_i)^\top,\\
	\bR_{b,t}&=\sum_{i\in\cD_{0,t}}w_i\mathbf B^{(k)}(\bX_i)Y_i,\\
	\bR_{F,t}(y)&=\sum_{i\in\cD_{1,t}}w_i\mathbf B^{(k)}(\bX_i)\ind\{Y_i\le y\}.
\end{aligned}
$$
For local-linear regression, write $h=h^{(k(t))}$, $\mathbf v_{\bx,h}(\bz)=\bp\{(\bz-\bx)/h\}$, and $K_{\bx,h}(\bz)=K\{(\bz-\bx)/h\}$, and define
$$
\bG_{a,t}(\bx)=\sum_{i\in\cD_{a,t}}w_iK_{\bx,h}(\bX_i)\mathbf v_{\bx,h}(\bX_i)\mathbf v_{\bx,h}(\bX_i)^\top,
$$
with $\bR_{b,t}(\bx)$ and $\bR_{F,t}(\bx,y)$ defined by multiplying $w_iK_{\bx,h}(\bX_i)\mathbf v_{\bx,h}(\bX_i)$ by $Y_i$ and $\ind\{Y_i\le y\}$, respectively, and summing over the corresponding samples. Suppose that
\begin{equation}
	c_{\mathrm{fit}}\nu_th^d\bI\preceq \bG\preceq C_{\mathrm{fit}}\nu_th^d\bI,
	\label{eq:weighted-regression-safeguard}
\end{equation}
where $c_{\mathrm{fit}}$ and $C_{\mathrm{fit}}$ are the constants in the lower and upper eigenvalue bounds in \eqref{eq:spline-approximation-design} and \eqref{eq:local-linear-population-design}. The factor $\nu_t h^d$ is the order of the expected total weight of observations within the support of a basis or kernel function. Condition~\eqref{eq:weighted-regression-safeguard} requires the corresponding Gram matrices to have eigenvalues of this order, ensuring invertibility and uniformly bounded condition numbers. 

In both regressions, the coefficient vector is $\bG^{-1}\bR$ when $\nu_t>0$. To be precise, the control reference is estimated by weighted regression of $Y$ on $\bX$ using $\cD_{0,t}$. For each $y\in\Rbb$, the conditional distribution is estimated by weighted regression of $\ind\{Y\le y\}$ on $\bX$ using $\cD_{1,t}$. For spline regression, define
\begin{equation}
	\begin{aligned}
\widehat{\boldsymbol\theta}_{b,t}&\in\arg\min_{\boldsymbol\theta}\sum_{i\in\cD_{0,t}}w_i\{Y_i-\mathbf B_t(\bX_i)^\top\boldsymbol\theta\}^2, \\
		\widehat{\boldsymbol\theta}_{F,t}(y)&\in\arg\min_{\boldsymbol\theta}\sum_{i\in\cD_{1,t}}w_i[\ind\{Y_i\le y\}-\mathbf B_t(\bX_i)^\top\boldsymbol\theta]^2,
	\end{aligned}
	\label{eq:weighted-spline-fit}
\end{equation}
and set $\widetilde b_{0,t}(\bx)=\mathbf B_t(\bx)^\top\widehat{\boldsymbol\theta}_{b,t}$ and $\widetilde F_{1,t}(y\mid\bx)=\mathbf B_t(\bx)^\top\widehat{\boldsymbol\theta}_{F,t}(y)$. For local-linear regression, define
\begin{equation}
	\begin{aligned}
		\widehat{\boldsymbol\theta}_{b,t}(\bx)
		&\in\arg\min_{\boldsymbol\theta}\sum_{i\in\cD_{0,t}}w_iK\Big(\frac{\bX_i-\bx}{h_t}\Big)\Big\{Y_i-\bp\Big(\frac{\bX_i-\bx}{h_t}\Big)^\top\boldsymbol\theta\Big\}^2,\\
		\widehat{\boldsymbol\theta}_{F,t}(\bx,y)&\in\arg\min_{\boldsymbol\theta}\sum_{i\in\cD_{1,t}}w_iK\Big(\frac{\bX_i-\bx}{h_t}\Big)\bigg[\ind\{Y_i\le y\}-\bp\Big(\frac{\bX_i-\bx}{h_t}\Big)^\top\boldsymbol\theta\bigg]^2,
	\end{aligned}
	\label{eq:weighted-local-linear-fit}
\end{equation}
and use the fitted intercepts $\widetilde b_{0,t}(\bx)=\mathbf e_0^\top\widehat{\boldsymbol\theta}_{b,t}(\bx)$ and $\widetilde F_{1,t}(y\mid\bx)=\mathbf e_0^\top\widehat{\boldsymbol\theta}_{F,t}(\bx,y)$. Use each fit only when $\nu_t>0$, and it is well-defined by \eqref{eq:weighted-regression-safeguard}. Otherwise, use $\widetilde b_{0,t}(\bx)=0$ for the control fit and $\widetilde F_{1,t}(y\mid\bx)=\ind\{y\ge0\}$ for the treatment fit. For local-linear regression, this condition is checked separately at each query point. Finally, when necessary, project the control estimate onto its known range and monotonize the conditional distribution estimate:
\begin{equation}
	\widehat b_{0,t}(\bx)=\Pi_{[-L,L]}\{\widetilde b_{0,t}(\bx)\},
	\qquad
	\widehat F_{1,t}(y\mid\bx)=\sup_{z\le y}\Pi_{[0,1]}\{\widetilde F_{1,t}(z\mid\bx)\}.
	\label{eq:weighted-nonparametric-nuisance}
\end{equation}
Lemma~\ref{lem:weighted-cdf-postprocessing} shows that $\widehat F_{1,t}(\cdot\mid\bx)$ is a distribution function and that postprocessing does not increase the uniform estimation error.

Then, we state the convergence rate of the proposed methods. The errors $\delta_{b,t}^{\infty}$ and $\delta_{F,t}^{\infty}$ are defined after \eqref{eq:nuisance_rate}. We bound the error of the conditional distribution estimator uniformly over $(\bx,y)\in[0,1]^d\times\Rbb$ without requiring knowledge of $b_0$. For $\chi\in(0,1]$, define
$$
\psi_T(\chi)=\frac{\log(T\vee2)}{T^{1-\chi}}\quad(0<\chi<1),
\qquad
\psi_T(1)=\frac{\log\log(T\vee e^e)}{\log(T\vee e^e)}.
$$

\begin{proposition}[Weighted spline regression]
	\label{prop:spline-nuisance}
	Under Assumption~\ref{ass:weighted-nuisance-model}, consider the spline estimators defined above, with $h_t\asymp\{\log(em_t)/m_t\}^{1/(2s+d)}$ chosen according to \eqref{eq:generic-estimation-epochs}. For every $P\in\mathcal P^{\mathrm{np}}(s)$, $\delta_{b,t}^{\infty}\to0$ and $\delta_{F,t}^{\infty}\to0$ almost surely. Moreover, for all $T\ge1$,
	\begin{equation}
		\sup_{P\in\mathcal P^{\mathrm{np}}(s)}\frac1T\sum_{t=1}^T
		\Ebb_P\big[(\delta_{b,t}^{\infty})^2+(\delta_{F,t}^{\infty})^2\big]
		\le C\,\psi_T(\chi)^{2s/(2s+d)} .
		\label{eq:spline-average-rates}
	\end{equation}
\end{proposition}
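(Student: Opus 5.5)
We will prove the spline rate through a bias--variance decomposition carried out separately on each epoch $\mathcal T_k$, and then sum over the epochs. On an epoch $k$ the basis $\mathbf B^{(k)}$ and the mesh $h=h^{(k)}$ are fixed. We write $\widetilde b_{0,t}-b_0=\{\mathbf B_t^\top\bG_{0,t}^{-1}\sum w_i\mathbf B_t(\bX_i)(b_0(\bX_i)-\mathbf B_t(\bX_i)^\top\boldsymbol\theta^\ast)\}+\mathbf B_t^\top\bG_{0,t}^{-1}\sum w_i\mathbf B_t(\bX_i)\epsilon_i+(\mathbf B_t^\top\boldsymbol\theta^\ast-b_0)$. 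Here $\boldsymbol\theta^\ast$ is the quasi-interpolant coefficient with $\|\mathbf B^\top\boldsymbol\theta^\ast-b_0\|_\infty\le Ch^s$, which is standard for tensor B-splines of degree $q\ge\lceil s\rceil$ on the H\"older class. We use the same decomposition for $\widetilde F_{1,t}(y\mid\cdot)$ with $\epsilon_i=\ind\{Y_i\le y\}-F_1(y\mid\bX_i)$, and Lemma~\ref{lem:weighted-cdf-postprocessing} lets us pass from $\widetilde F$ to $\widehat F$ (and from $\widetilde b$ to $\widehat b$ by the projection onto $[-L,L]$) without increasing the error.

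The key structural fact is the design step. The stabilized weights $w_{a,v}=\varepsilon_v/\pi_{a,v}(\bX_v)$ make $\Ebb[w_{a,v}\ind\{A_v=a\}g(\bX_v)\mid\mathcal F_{v-1}]=\varepsilon_v\Ebb g(\bX)$ exactly, whatever the adaptive rule $G_v$ is. They are also bounded by $1/p_a$ because $\pi_{a,v}\ge\varepsilon_vp_a$. The weighted Gram matrix is therefore a martingale sum with compensator $\nu_t\Ebb[\mathbf B\mathbf B^\top]$, whose eigenvalues are of order $\nu_th^d$ by $c_X\le f_X\le C_X$ and the local support of the B-splines. A matrix Freedman inequality, applied entrywise or via the banded structure, gives \eqref{eq:weighted-regression-safeguard} with probability at least $1-CM_k^{-2}$ whenever $\nu_th^d\gtrsim L_k$. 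Our choice $h^{(k)}\asymp(L_k/M_k)^{1/(2s+d)}$ ensures this. Bounded condition numbers together with the local (exponentially decaying) structure of $\bG^{-1}$ for banded B-spline Gram matrices then give an $\ell_\infty\to\ell_\infty$ bound on $\bG^{-1}$ of order $(\nu_th^d)^{-1}$. As a result, the first (bias-leakage) term is $O(h^s)$.

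The main obstacle is the stochastic term, which must be handled uniformly in $\bx$ and, for $F$, in $y\in\Rbb$. For each basis index $j$, the sum $\sum w_i B_j(\bX_i)\epsilon_i$ is a martingale with increments bounded in conditional variance by $C\varepsilon_v h^d$. The increments are sub-Gaussian in the control case by \eqref{eq:minimax-control-subgaussian} and bounded in the CDF case. Freedman's inequality gives deviations of order $\sqrt{\nu_th^dL_k}$ with probability at least $1-M_k^{-3}$. A union bound covers the $h^{-d}$ indices, all $t\in\mathcal T_k$ (whose number of observation times is at most polynomial in $M_k$ when $\chi<1$, since $m_t\asymp t^{1-\chi}$), and, for $F$, a grid of $O(M_k)$ quantile levels of $y$. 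Monotonicity of $\ind\{Y\le y\}$ bridges the grid, using a bracketing argument with a Bernstein bound on the number of $Y$'s between grid points. Multiplying by $\|\bG^{-1}\|_{\infty}\lesssim(\nu_th^d)^{-1}$ and noting that at most $C$ basis functions are nonzero at each $\bx$, we obtain a stochastic error of order $\sqrt{L_k/(\nu_th^d)}\asymp h^s$.

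On the good event $\Omega_k$, therefore, $\delta_{b,t}^\infty+\delta_{F,t}^\infty\le C(L_k/M_k)^{s/(2s+d)}$. On the complement we use $\delta_{F}^\infty\le1$ and $\delta_b^\infty\le 2L$ after projection, so the complement contributes $O(M_k^{-2})$ to the squared error. Since $m_t\asymp t^{1-\chi}$ for $\chi<1$ (and $m_t\asymp\log t$ for $\chi=1$), we get $\Ebb[(\delta_{b,t}^\infty)^2+(\delta_{F,t}^\infty)^2]\le C\{\log(em_t)/m_t\}^{2s/(2s+d)}$ for every $t$, with constants uniform over $\mathcal P^{\mathrm{np}}(s)$. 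Averaging over $t\le T$ and comparing the sum with an integral yields \eqref{eq:spline-average-rates}; in the case $\chi=1$, $\psi_T(1)$ appears because $\log(em_t)/m_t$ evaluated at $m_t\asymp\log t$ gives $\log\log T/\log T$. The almost sure convergence $\delta_{b,t}^\infty,\delta_{F,t}^\infty\to0$ follows from Borel--Cantelli, since $\sum_kM_k^{-2}<\infty$ and the bound on $\Omega_k$ tends to zero.
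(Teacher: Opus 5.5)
Your architecture matches the paper's proof almost step for step. You fix the basis and mesh on each epoch $\mathcal T_k$. You use $\Ebb\{Z_{a,v}f\mid\mathcal F_{v-1}\}=\varepsilon_v\Ebb f$ to make the weighted Gram matrix and score vectors martingales with compensator $\nu_t\bGamma_k$. You then combine entrywise concentration with the banded Neumann-series bound $\|\bG_{a,t}^{-1}\|_\infty\lesssim(\nu_th^d)^{-1}$, bracket in $y$, and apply Lemma~\ref{lem:weighted-cdf-postprocessing}. You finish with the deterministic bound off the good event, averaging over $t$, and Borel--Cantelli. Your two departures are harmless. You center the CDF residuals directly at $F_1(y\mid\bX_i)$ instead of using the paper's three-term split \eqref{eq:cdf-population-centering-decomposition}. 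You also use a single event of probability $1-CM_k^{-2}$ rather than integrating the tail bound \eqref{eq:weighted-regression-uniform-tail} over $u$.

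The step that fails as written is the union bound over the times $t\in\mathcal T_k$. You justify it by $|\mathcal T_k|$ being polynomial in $M_k$, which holds only for $\chi<1$. The proposition, through Assumption~\ref{ass:weighted-nuisance-model}, also covers $\chi=1$, and you claim that case at the end.

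\begin{itemize}
\item When $\chi=1$, $m_t\asymp\log t$, so $\mathcal T_k$ contains on the order of $e^{cM_k}$ times.
\item A union bound over them forces the deviation level to satisfy $v\gtrsim M_k$.
\item This gives fluctuations of order $\sqrt{M_kh^d\cdot M_k}+M_k\ge M_k$, which dwarf the $M_kh^d$ scale of the Gram matrix.
\item Consequently neither \eqref{eq:weighted-regression-safeguard} nor the rate survives.
\end{itemize}

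The repair is to use a time-uniform (maximal) martingale inequality instead of a union over $t$. Within an epoch the predictable variation is deterministically at most $C\nu_th^d\le CM_kh^d$. Ville's inequality applied to the exponential supermartingale $\exp\{\theta U_t-C\theta^2h^d\nu_t\}$, as in Lemma~\ref{lem:weighted-epoch-concentration}, then controls $\sup_{t\in\mathcal T_k}$ at no logarithmic cost. This device also handles the unbounded sub-Gaussian control residuals, which the classical bounded-increment Freedman inequality you cite does not cover. With that change, the union bound runs only over the $O(h^{-d})$ basis entries and the $y$-grid, and the rest of your argument goes through for all $\chi\in(0,1]$.
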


\begin{proposition}[Weighted local-linear regression]
	\label{prop:local-linear-nuisance}
	Under Assumption~\ref{ass:weighted-nuisance-model}, consider the local-linear estimators defined above, with $h_t\asymp\{\log(em_t)/m_t\}^{1/(2s_{\mathrm{lin}}+d)}$ chosen according to \eqref{eq:generic-estimation-epochs}. For every $P\in\mathcal P^{\mathrm{np}}(s)$, both estimation errors converge to zero almost surely. In addition, \eqref{eq:spline-average-rates} holds with $s$ replaced by $s_{\mathrm{lin}}=\min\{s,2\}$.
\end{proposition}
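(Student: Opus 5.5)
The plan is to treat the local-linear case as a variant of the spline argument behind Proposition~\ref{prop:spline-nuisance}. The estimator is pointwise in $\bx$, so the uniform-in-$\bx$ bound comes from a chaining or grid argument instead of a finite-dimensional basis. For each epoch $\mathcal T_k$ in \eqref{eq:generic-estimation-epochs}, the bandwidth $h=h^{(k)}$ is fixed. I will split the error of the unprojected intercept $\widetilde b_{0,t}(\bx)-b_0(\bx)$ in the standard way, as $\mathbf e_0^\top\bG_{0,t}(\bx)^{-1}\{\text{bias vector}+\text{noise vector}\}$, and do the same for $\widetilde F_{1,t}(y\mid\bx)-F_1(y\mid\bx)$ uniformly in $y$. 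Lemma~\ref{lem:weighted-cdf-postprocessing}, together with the fact that projection onto $[-L,L]$ is a contraction, then passes the bounds to $\widehat b_{0,t}$ and $\widehat F_{1,t}$.

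\textbf{Gram matrix and bias.} First I show that the population version of the weighted local Gram matrix has eigenvalues of order $\nu_t h^d$. The stabilized weights $w_{a,v}=\varepsilon_v/\pi_{a,v}(\bX_v)$ make $\Ebb[w_{a,v}\ind\{A_v=a\}g(\bX_v)\mid\mathcal F_{v-1}]=\varepsilon_v\Ebb g(\bX)$, so the weighted design behaves like $\nu_t$ i.i.d.\ draws from $P_X$. The density bounds $c_X\le f_X\le C_X$ and the kernel lower bound near zero then give the eigenvalue bounds in \eqref{eq:local-linear-population-design}. A Freedman-type matrix martingale inequality gives concentration, using that the weights are at most $\max\{1,1/p_a\}$. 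A union bound over a polynomial grid in $\bx$, combined with the Lipschitz property of $K$, makes \eqref{eq:weighted-regression-safeguard} hold uniformly with probability $1-O(M_k^{-2})$ whenever $\nu_th^d\gtrsim L_k$. For the bias, a Taylor expansion of order $\lfloor s_{\mathrm{lin}}\rfloor\le 2$ applies. Local-linear fitting reproduces affine functions exactly, so the residual is $O(Lh^{s_{\mathrm{lin}}})$ on the kernel support. For $F_1(y\mid\cdot)$ this holds uniformly in $y$ because Assumption~\ref{ass:nonparametric-model} is uniform in $y$. This is exactly why $s$ is capped at $2$.

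\textbf{Noise term: the main obstacle.} The noise term $\sum_i w_iK_{\bx,h}(\bX_i)\mathbf v_{\bx,h}(\bX_i)\xi_i$ is a martingale with bounded weights. For $b_0$ the errors $\xi_i$ are conditionally sub-Gaussian by \eqref{eq:minimax-control-subgaussian}; for $F_1$ they are bounded indicator residuals. The difficulty is uniformity over $(\bx,y)\in[0,1]^d\times\Rbb$ under adaptive sampling. I would first try the following route. Discretize $\bx$ on a grid of mesh $h/M_k$, using Lipschitz continuity of $K$ to control the interpolation error. Discretize $y$ at the empirical and population quantile levels $j/M_k$, using monotonicity of indicators and of $F_1$ as in the classical DKW bracketing argument. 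Then apply Freedman's inequality at each grid point and take a union bound over the $O(M_k^{C})$ points. This gives a noise bound of order $\{L_k/(\nu_th^d)\}^{1/2}$ with probability $1-O(M_k^{-2})$. On the complementary failure events I use the fallback or the projection, which leave the error bounded by a constant. Those events therefore contribute $O(M_k^{-2})$ to the mean-square error.

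\textbf{Combining.} Putting the pieces together gives $\Ebb[(\delta_{b,t}^\infty)^2+(\delta_{F,t}^\infty)^2]\le C\{h_t^{2s_{\mathrm{lin}}}+L_k/(m_th_t^d)+M_k^{-2}\}\le C\{\log(em_t)/m_t\}^{2s_{\mathrm{lin}}/(2s_{\mathrm{lin}}+d)}$. Here I use that $\nu_t\ge m_t-1+n_{\mathrm{hist}}\gtrsim m_t$ and plug in the chosen $h_t$. Because $m_t\asymp t^{1-\chi}$ (or $\log t$ when $\chi=1$), averaging over $t\le T$ gives \eqref{eq:spline-average-rates} with $s_{\mathrm{lin}}$. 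All constants depend only on the class constants, so the bound holds uniformly over $\mathcal P^{\mathrm{np}}(s)$. For almost sure convergence, the failure probabilities $O(M_k^{-2})$ are summable over epochs. Borel--Cantelli then ensures that eventually only the vanishing deterministic bounds apply.
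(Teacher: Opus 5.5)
Your proposal is correct and follows essentially the paper's route: epochwise fixed bandwidths, the population design bounds of Lemma~\ref{lem:weighted-local-linear-design}, time-uniform martingale concentration at grid points in $\bx$ with bracketing in $y$, the normal-equation bias/noise split, and boundedness of the projected estimators off the good event, followed by averaging over $t$ and Borel--Cantelli over epochs. Using a single $1-O(M_k^{-2})$ event instead of the paper's $Ce^{-u}$ tail is an inessential simplification.

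Two steps should be made explicit. First, your Lipschitz grid-to-continuum step needs the high-probability bounds $\sup_{t\in\mathcal T_k}\sum_{i\in\cD_{a,t}}w_i\le CM_k$ and $\sup_{t\in\mathcal T_k}\sum_{i\in\cD_{0,t}}w_i|Y_i(0)-b_0(\bX_i)|\le CM_k$. The raw number of past observations can far exceed $M_k$, so these do not hold deterministically. Second, uniformity over $t\in\mathcal T_k$ must come from the maximal form of the Freedman/Ville inequality rather than a union bound over $t$, since $|\mathcal T_k|$ is not polynomial in $M_k$ when $\chi=1$.
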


For $s\le2$, the two methods attain the same rates. For $s>2$, spline regression exploits the additional smoothness through its polynomial degree, whereas the local-linear approximation error remains $O(h^2)$.

\subsection{Proofs of the convergence results}
\label{app:nuisance-proofs}

We first establish a uniform error bound on each interval $\mathcal T_k$, then derive almost-sure consistency and time-averaged mean-square rates. For either estimator, define
$$
W_k=\sup_{t\in\mathcal T_k}\max\Big\{\|\widehat b_{0,t}-b_0\|_\infty,
\sup_{\bx\in[0,1]^d,\,y\in\Rbb}
|\widehat F_{1,t}(y\mid\bx)-F_1(y\mid\bx)|\Big\}.
$$
Here $\|\cdot\|_\infty$ denotes the supremum over $[0,1]^d$. Writing $M=M_k$, $h=h^{(k)}$, and $\bar s=s$ for spline regression or $\bar s=s_{\mathrm{lin}}$ for local-linear regression, we show that
\begin{equation}
	\sup_{P\in\mathcal P^{\mathrm{np}}(s)}\Pbb_P\Big[W_k>
	C\Big\{h^{\bar s}+\sqrt{\frac{L_k+u}{Mh^d}}+\frac{L_k+u}{Mh^d}\Big\}\Big]
	\le Ce^{-u},\qquad u\ge0 .
	\label{eq:weighted-regression-uniform-tail}
\end{equation}
The constants do not depend on $k$, $u$, or $P$.

\subsubsection*{Weighted spline regression}

Fix $k$, and write $M=M_k$, $h=h^{(k)}$, and $\mathbf B=\mathbf B^{(k)}$. Lemma~\ref{lem:weighted-spline-design} provides the approximation and population Gram matrix bounds in \eqref{eq:spline-approximation-design}, uniformly over the nuisance functions. Let
$$
\begin{aligned}
	\bU_{b,t}&=\sum_{i\in\cD_{0,t}}w_i\mathbf B^{(k)}(\bX_i)\{Y_i(0)-b_0(\bX_i)\},\\
	\br_F(y)&=\Ebb\{\mathbf B^{(k)}(\bX)F_1(y\mid\bX)\}.
\end{aligned}
$$
Apply \eqref{eq:weighted-epoch-scalar-tail} to each coordinate of $\bU_{b,t}$ and each potentially nonzero entry of the Gram matrices. For each coordinate of $\bR_{F,t}(y)$, apply \eqref{eq:weighted-cdf-threshold-tail} with $\vartheta=h^s$. There are $O(J_k)$ quantities in these bounds and $\log J_k=O(L_k)$. Consequently, a union bound gives, with probability at least $1-Ce^{-u}$,
\begin{equation}
	\begin{aligned}
		\sup_{t\in\mathcal T_k}\Big\{\max_{a\in\{0,1\}}
		\|\bG_{a,t}-\nu_t\bGamma_k\|_\infty+\|\bU_{b,t}\|_\infty\Big\}
		&\le C\{\sqrt{Mh^d(L_k+u)}+L_k+u\},\\
		\sup_{t\in\mathcal T_k,\,y\in\Rbb}\|\bR_{F,t}(y)-\nu_t\br_F(y)\|_\infty
		&\le C\{\sqrt{Mh^d(L_k+u)}+L_k+u+Mh^{d+s}\}.
	\end{aligned}
	\label{eq:spline-epoch-moments}
\end{equation}
Here $\|\cdot\|_\infty$ denotes the maximum absolute row sum for matrices and the maximum absolute coordinate for vectors. Each row of the Gram matrix has a bounded number of nonzero entries, and $Mh^d/L_k\to\infty$. Combining \eqref{eq:spline-epoch-moments} with \eqref{eq:spline-approximation-design} shows that, on the same event, every $\bG_{a,t}$ with $t\in\mathcal T_k$ satisfies \eqref{eq:weighted-regression-safeguard} for $0\le u\le c_1Mh^d$, for all sufficiently large $k$ and sufficiently small $c_1>0$. By the inverse bound in Lemma~\ref{lem:weighted-spline-design} and $\nu_t\asymp M$, we also have
$$
\|\bG_{a,t}^{-1}\|_\infty\le \frac{C}{Mh^d},
\qquad \|\bG_{a,t}\mathbf1\|_\infty\le CMh^d.
$$

We then derive the uniform estimation errors. Let $b_{0,h}=\mathbf B^\top\boldsymbol\beta_b$ satisfy
\eqref{eq:spline-approximation-design}. The estimating equations associated with \eqref{eq:weighted-spline-fit} give
$$
\widehat{\boldsymbol\theta}_{b,t}-\boldsymbol\beta_b=\bG_{0,t}^{-1}\sum_{i\in\cD_{0,t}}w_i\mathbf B(\bX_i)\{b_0(\bX_i)-b_{0,h}(\bX_i)\}+\bG_{0,t}^{-1}\bU_{b,t}.
$$
By the nonnegativity of the basis functions, \eqref{eq:spline-approximation-design}, and the matrix bounds, the first term has maximum norm of order $h^s$. The first inequality in \eqref{eq:spline-epoch-moments} bounds the second term. For the conditional CDF, let $F_{1,h}(y\mid\bx)=\mathbf B(\bx)^\top\boldsymbol\beta_F(y)$ be the approximation in \eqref{eq:spline-approximation-design}. Decompose
\begin{equation}
	\bR_{F,t}(y)-\bG_{1,t}\boldsymbol\beta_F(y)
	=\{\bR_{F,t}(y)-\nu_t\br_F(y)\}
	+\nu_t\{\br_F(y)-\bGamma_k\boldsymbol\beta_F(y)\}
	+(\nu_t\bGamma_k-\bG_{1,t})\boldsymbol\beta_F(y).
	\label{eq:cdf-population-centering-decomposition}
\end{equation}
The first term in \eqref{eq:cdf-population-centering-decomposition} is bounded by \eqref{eq:spline-epoch-moments}. The second term is $O(Mh^{d+s})$ in each coordinate, since $\Ebb B_j(\bX)\le Ch^d$ and \eqref{eq:spline-approximation-design} holds uniformly over $y$. The last term is bounded by \eqref{eq:spline-epoch-moments} and $\sup_y\|\boldsymbol\beta_F(y)\|_\infty\le C$. Multiplying \eqref{eq:cdf-population-centering-decomposition} by $\bG_{1,t}^{-1}$ and using $\sum_jB_j^{(k)}\equiv1$ gives a bound for the error of the initial CDF estimator uniformly over $(\bx,y)$. Projection and Lemma~\ref{lem:weighted-cdf-postprocessing} then give \eqref{eq:weighted-regression-uniform-tail} with $\bar s=s$ for $u\le c_1Mh^d$. For $u>c_1Mh^d$, enlarge $C$ so that the threshold exceeds the deterministic bound $W_k\le\max\{2L,1\}$. The same adjustment covers the finitely many initial intervals. This proves the common tail bound for the spline estimator.

\subsubsection*{Weighted local-linear regression}

Fix $k$, and write $M=M_k$, $h=h^{(k)}$, and $\bar s=s_{\mathrm{lin}}$. Let $\Gamma_h(\bx)=\Ebb\{K_{\bx,h}(\bX)\mathbf v_{\bx,h}(\bX)\mathbf v_{\bx,h}(\bX)^\top\}$. Lemma~\ref{lem:weighted-local-linear-design} gives the uniform eigenvalue bounds in \eqref{eq:local-linear-population-design}. Let
$$
\begin{aligned}
	\bU_{b,t}(\bx)&=\sum_{i\in\cD_{0,t}}w_iK_{\bx,h}(\bX_i)
	\mathbf v_{\bx,h}(\bX_i)\{Y_i(0)-b_0(\bX_i)\},\\
	\br_F(\bx,y)&=\Ebb\{K_{\bx,h}(\bX)\mathbf v_{\bx,h}(\bX)F_1(y\mid\bX)\}.
\end{aligned}
$$
For fixed $\bx$, apply \eqref{eq:weighted-epoch-scalar-tail} with $g=K_{\bx,h}v_{\bx,h,j}v_{\bx,h,l}$ to each centered entry of $\bG_{a,t}(\bx)$, and with $g=K_{\bx,h} v_{\bx,h,j}$ to each coordinate of $U_{b,t}(\bx)$. Apply \eqref{eq:weighted-cdf-threshold-tail} to the positive and negative parts of $K_{\bx,h}\mathbf v_{\bx,h,j}$, with $\vartheta=h^{\bar s}$, to bound each coordinate of $\bR_{F,t}(\bx,y)-\nu_t \br_F(\bx,y)$ uniformly over $y$.

For uniformity over $\bx$, take a deterministic grid on $[0,1]^d$ with mesh $hM^{-4}$. Its cardinality is at most $C(hM^{-4})^{-d}$, whose logarithm is $O(L_k)$ by the bandwidth choice following \eqref{eq:generic-estimation-epochs}. To pass from the grid to $[0,1]^d$, first note that \eqref{eq:fullsample-weighted-moments} and \eqref{eq:minimax-control-subgaussian} give, for a sufficiently small fixed $\theta>0$,
$$
\Ebb\{\exp(\theta Z_{a,v})\mid\mathcal F_{v-1}\}
\le e^{C\varepsilon_v},
\qquad
\Ebb[\exp\{\theta Z_{0,v}|Y_v(0)-b_0(\bX_v)|\}\mid\mathcal F_{v-1}]\le e^{C\varepsilon_v}.
$$
By \eqref{eq:minimax-control-subgaussian} and H\"older's inequality, the historical sums have uniformly bounded exponential moments because $n_{\mathrm{hist}}$ is fixed. Iterated conditioning, Markov's inequality, and the nonnegativity of the summands therefore yield, with probability at least $1-Ce^{-cM}$,
$$
\sup_{t\in\mathcal T_k}\max_{a\in\{0,1\}}
\sum_{i\in\cD_{a,t}}w_i\le CM,\qquad
\sup_{t\in\mathcal T_k}\sum_{i\in\cD_{0,t}}
w_i|Y_i(0)-b_0(\bX_i)|\le CM .
$$
The functions $K_{\bx,h}v_{\bx,h,j}$ and $K_{\bx,h} v_{\bx,h,j} v_{\bx,h,l}$, extended by zero outside their supports, are Lipschitz in $\bx$ with constant $C/h$. On this event, replacing $\bx$ by a nearest grid point changes each centered Gram entry and each coordinate of $\bU_{b,t}(\bx)$ or $\bR_{F,t}(\bx,y)-\nu_t \br_F(\bx,y)$ by at most $(C/h)(hM^{-4})M=CM^{-3}$. For the last expression, the bound is uniform over $y$ because $\ind\{Y\le y\}\le1$ and $0\le F_1(y\mid\bX)\le1$. Taking $v=C_0L_k+u$ in \eqref{eq:weighted-epoch-scalar-tail} and \eqref{eq:weighted-cdf-threshold-tail}, and taking a union bound over the grid points and coordinates, gives, with probability at least $1-Ce^{-u}$ and for $0\le u\le c_1Mh^d$,
\begin{equation}
	\begin{aligned}
		\sup_{t\in\mathcal T_k,\,\bx\in[0,1]^d} &
		\Big\{\max_a\|G_{a,t}(\bx)-\nu_t\Gamma_h(\bx)\|+\|U_{b,t}(\bx)\|\Big\}
		\le C\{\sqrt{Mh^d(L_k+u)}+L_k+u\},\\
		\sup_{t\in\mathcal T_k,\,\bx\in[0,1]^d,\,y\in\Rbb} &
		\|R_{F,t}(\bx,y)-\nu_tr_F(\bx,y)\|
		\le C\{\sqrt{Mh^d(L_k+u)}+L_k+u+Mh^{d+s_{\mathrm{lin}}}\},
	\end{aligned}
	\label{eq:local-linear-epoch-moments}
\end{equation}
where the matrix and vector norms are the spectral and Euclidean norms, respectively. Here $Ce^{-cM}\le Ce^{-u}$ after decreasing $c_1$, and the grid approximation error $CM^{-3}$ is absorbed into the right-hand sides of \eqref{eq:local-linear-epoch-moments}. Since $Mh^d/L_k\to\infty$, the first inequality in \eqref{eq:local-linear-epoch-moments} and \eqref{eq:local-linear-population-design} imply that \eqref{eq:weighted-regression-safeguard} holds simultaneously for all $t\in\mathcal T_k$ and $\bx\in[0,1]^d$ on the same event, for all sufficiently large $k$ and sufficiently small $c_1>0$. In particular,
$$
\|\bG_{a,t}(\bx)^{-1}\|\le\frac C{Mh^d},\qquad
\sum_{i\in\cD_{a,t}}w_iK_{\bx,h}(\bX_i)
=(\bG_{a,t}(\bx))_{11}\le CMh^d .
$$

We then derive the uniform estimation errors. Let $\boldsymbol\beta_b(\bx)$ be the local approximation vector in Lemma~\ref{lem:weighted-local-linear-design} with $g=b_0$. Its intercept equals $b_0(\bx)$, and its approximation error is $O(h^{\bar s})$ throughout the kernel support. On the event in \eqref{eq:local-linear-epoch-moments}, the normal equations associated with \eqref{eq:weighted-local-linear-fit} give
$$
\begin{aligned}
	&\widehat{\boldsymbol\theta}_{b,t}(\bx)-\boldsymbol\beta_b(\bx)\\
	=&\bG_{0,t}(\bx)^{-1}\sum_{i\in\cD_{0,t}}w_iK_{\bx,h}(\bX_i)\mathbf v_{\bx,h}(\bX_i)
	\{b_0(\bX_i)-\mathbf v_{\bx,h}(\bX_i)^\top\boldsymbol\beta_b(\bx)\}+
	\bG_{0,t}(\bx)^{-1}\bU_{b,t}(\bx).
\end{aligned}
$$
Since $\mathbf e_0^\top\boldsymbol\beta_b(\bx)=b_0(\bx)$ and $\|\mathbf v_{\bx,h}\|\le C$ on the kernel support, \eqref{eq:weighted-regression-safeguard} and the first inequality in \eqref{eq:local-linear-epoch-moments} imply
$$
\begin{aligned}
	|\widetilde b_{0,t}(\bx)-b_0(\bx)|
	&\le\frac C{Mh^d}\left\{h^{\bar s}
	\sum_{i\in\cD_{0,t}}w_iK_{\bx,h}(\bX_i)+\|\bU_{b,t}(\bx)\|\right\}\\
	&\le C\left\{h^{\bar s}
	+\sqrt{\frac{L_k+u}{Mh^d}}+\frac{L_k+u}{Mh^d}\right\}.
\end{aligned}
$$
For the conditional CDF, use the vector $\boldsymbol\beta_F(\bx,y)$ from Lemma~\ref{lem:weighted-local-linear-design} with $g=F_1(y\mid\cdot)$. Uniformly over $(\bx,y)$, this vector is bounded, its intercept equals $F_1(y\mid\bx)$, and
$$
\|\br_F(\bx,y)-\bGamma_h(\bx)\boldsymbol\beta_F(\bx,y)\|\le Ch^{d+\bar s}.
$$
Apply \eqref{eq:cdf-population-centering-decomposition} with $\bR_{F,t}(\bx,y)$, $\br_F(\bx,y)$, $\bGamma_h(\bx)$, and $\boldsymbol\beta_F(\bx,y)$ in place of their spline counterparts. The normal equations associated with \eqref{eq:weighted-local-linear-fit} and \eqref{eq:weighted-regression-safeguard} then give
$$
\begin{aligned}
	|\widetilde F_{1,t}(y\mid\bx)-F_1(y\mid\bx)|
	\le&Ch^{\bar s}+\frac C{Mh^d}
	\Big\{\|\bR_{F,t}(\bx,y)-\nu_t \br_F(\bx,y)\|+\|\bG_{1,t}(\bx)-\nu_t\bGamma_h(\bx)\|\Big\}\\
	\le&C\left\{h^{\bar s}
	+\sqrt{\frac{L_k+u}{Mh^d}}+\frac{L_k+u}{Mh^d}\right\},
\end{aligned}
$$
where the last inequality follows from \eqref{eq:local-linear-epoch-moments}. The two estimation error bounds, the projection in \eqref{eq:weighted-nonparametric-nuisance}, and \eqref{eq:cdf-monotonicity-contraction} establish \eqref{eq:weighted-regression-uniform-tail} for $0\le u\le c_1Mh^d$. For $u>c_1Mh^d$, the bound $W_k\le\max\{2L,1\}$ extends \eqref{eq:weighted-regression-uniform-tail} to all $u\ge0$ after increasing $C$, which also covers the finitely many initial intervals. This proves the common tail bound for the local-linear estimator.

We now use \eqref{eq:weighted-regression-uniform-tail} to prove the almost-sure convergence and the time-averaged mean-square error bounds in Propositions~\ref{prop:spline-nuisance} and \ref{prop:local-linear-nuisance}. For either estimator, integration of the tail probability in \eqref{eq:weighted-regression-uniform-tail}, together with ${h^{(k)}}^{2\bar s+d}\asymp L_k/M_k$, gives
\begin{equation}
	\sup_{P\in\mathcal P^{\mathrm{np}}(s)}\Ebb_P(W_k^2)
	\le C\Big\{{h^{(k)}}^{2\bar s}+\frac{L_k}{M_k{h^{(k)}}^d}
	+\Big(\frac{L_k}{M_k{h^{(k)}}^d}\Big)^2\Big\}
	\le C\Big(\frac{L_k}{M_k}\Big)^{2\bar s/(2\bar s+d)} .
	\label{eq:weighted-epoch-mean-square}
\end{equation}
Taking $u=C_0L_k$ in \eqref{eq:weighted-regression-uniform-tail} makes the probabilities on the left-hand side summable in $k$. The Borel--Cantelli lemma therefore gives $W_k\to0$ almost surely for every $P\in\mathcal P^{\mathrm{np}}(s)$. By the definition of $W_k$, both $\delta_{b,t}^{\infty}$ and $\delta_{F,t}^{\infty}$ converge to zero almost surely. Since both errors are bounded by $W_{k(t)}$, $(\delta_{b,t}^{\infty})^2+(\delta_{F,t}^{\infty})^2\le2W_{k(t)}^2$. Combining \eqref{eq:weighted-epoch-mean-square} with $M_{k(t)}\le m_t<2M_{k(t)}$ in \eqref{eq:generic-estimation-epochs} gives
\begin{equation}
	\sup_{P\in\mathcal P^{\mathrm{np}}(s)}
	\Ebb_P\big[(\delta_{b,t}^{\infty})^2+(\delta_{F,t}^{\infty})^2\big]
	\le C\Big\{\frac{\log(em_t)}{m_t}\Big\}^{2\bar s/(2\bar s+d)} .
	\label{eq:nonparametric-pointwise-time-uniform-risk}
\end{equation}
It remains to average \eqref{eq:nonparametric-pointwise-time-uniform-risk} over $t\le T$. Let $\beta=2\bar s/(2\bar s+d)\in(0,1)$. If $0<\chi<1$, the growth rate in \eqref{eq:asymp_s_T} gives $m_t\asymp t^{1-\chi}$. Since $(1-\chi)\beta<1$, $T^{-1}\sum_{t\le T}\{\log(em_t)/m_t\}^\beta\le C\psi_T(\chi)^\beta$. If $\chi=1$, \eqref{eq:asymp_s_T} gives $m_t\asymp\log(t\vee2)$. The contribution of the first $\lfloor\sqrt T\rfloor$ terms to the average is $O(T^{-1/2})$. For the remaining terms, $\log t\asymp\log T$ and $\log\log t\asymp\log\log T$, so their contribution is at most $C\psi_T(1)^\beta$. This establishes \eqref{eq:spline-average-rates} for spline regression and the corresponding bound with $s$ replaced by $s_{\mathrm{lin}}$ for local-linear regression. The constants are uniform over $P\in\mathcal P^{\mathrm{np}}(s)$.

\subsection{Technical lemmas}
\label{app:nuisance-tools}

The following lemmas are stated under the conditions and estimator specifications in Section~\ref{app:nuisance-estimators}. The constants are uniform over $\mathcal P^{\mathrm{np}}(s)$.

\begin{lemma}[Constant reproduction and CDF postprocessing]
	\label{lem:weighted-cdf-postprocessing}
	For $\nu_t>0$, suppose that \eqref{eq:weighted-regression-safeguard} holds; both weighted regressions contain constants. With the prescribed estimates, \eqref{eq:weighted-nonparametric-nuisance} therefore defines a distribution function $\widehat F_{1,t}(\cdot\mid\bx)$ for every $\bx$. For any distribution function $F(\cdot\mid\bx)$,
	\begin{equation}
		\sup_y|\widehat F_{1,t}(y\mid\bx)-F(y\mid\bx)|
		\le\sup_y|\widetilde F_{1,t}(y\mid\bx)-F(y\mid\bx)|=:e .
		\label{eq:cdf-monotonicity-contraction}
	\end{equation}
\end{lemma}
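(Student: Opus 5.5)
The proof splits into two parts: first, that the postprocessed estimator is a genuine distribution function; second, that the sup-norm error does not increase. For the first part I rely on the \emph{constant-reproduction} property of both regressions. The tensor-product B-spline basis satisfies $\sum_j B_j^{(k)}\equiv 1$, so the constant function $1$ equals $\mathbf B_t(\cdot)^\top\mathbf 1$. In the local-linear fit the intercept direction $\mathbf e_0$ reproduces constants, since $\bp(\bu)^\top\mathbf e_0=1$. Under \eqref{eq:weighted-regression-safeguard} each Gram matrix is invertible, so the weighted least-squares solution $\bG^{-1}\bR$ is unique. The response $\ind\{Y_i\le y\}$ enters $\bR_{F,t}$ linearly. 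Hence, if every response is identically $0$, the fit is $0$; if every response is identically $1$, the fit is $\mathbf 1$ (spline) or $\mathbf e_0$ (local linear), and in both cases $\widetilde F_{1,t}(y\mid\bx)=1$.

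I then verify the distribution-function properties for $\widehat F_{1,t}(\cdot\mid\bx)=\sup_{z\le y}\Pi_{[0,1]}\{\widetilde F_{1,t}(z\mid\bx)\}$.
\begin{itemize}
\item \emph{Range and monotonicity.} A running supremum is nondecreasing in $y$, and clipping keeps its values in $[0,1]$.
\item \emph{Limits.} The sample $\cD_{1,t}$ is finite. For $y<\min_i Y_i$ every indicator is $0$, so $\widetilde F_{1,t}(z\mid\bx)=0$ for all $z\le y$, and therefore $\widehat F_{1,t}(y\mid\bx)=0$ there. For $y\ge\max_i Y_i$ the fit equals $1$ by constant reproduction, so $\widehat F_{1,t}(y\mid\bx)=1$.
\item \emph{Right-continuity.} Each indicator $\ind\{Y_i\le y\}$ is right-continuous and piecewise constant in $y$, with a jump only at $Y_i$. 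Since $\widetilde F_{1,t}(y\mid\bx)$ is a fixed linear combination of these indicators, it is a right-continuous step function with finitely many jumps. Clipping and taking a running supremum of such a function yields again a right-continuous step function.
\item \emph{Default case.} When $\nu_t=0$ the estimator is $\ind\{y\ge0\}$, which is trivially a distribution function.
\end{itemize}

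For the contraction \eqref{eq:cdf-monotonicity-contraction}, fix $\bx$ and a distribution function $F$. The map $\Pi_{[0,1]}$ is $1$-Lipschitz and fixes $F(z\mid\bx)\in[0,1]$. Hence $|\Pi_{[0,1]}\{\widetilde F_{1,t}(z\mid\bx)\}-F(z\mid\bx)|\le e$ for every $z$.
\begin{itemize}
\item \emph{Upper bound.} For $z\le y$, monotonicity of $F$ gives $\Pi_{[0,1]}\{\widetilde F_{1,t}(z\mid\bx)\}\le F(z\mid\bx)+e\le F(y\mid\bx)+e$. Taking the supremum over $z\le y$ yields $\widehat F_{1,t}(y\mid\bx)\le F(y\mid\bx)+e$.
\item \emph{Lower bound.} Taking $z=y$ in the supremum gives $\widehat F_{1,t}(y\mid\bx)\ge\Pi_{[0,1]}\{\widetilde F_{1,t}(y\mid\bx)\}\ge F(y\mid\bx)-e$.
\end{itemize}
Combining the two bounds and taking the supremum over $y$ gives \eqref{eq:cdf-monotonicity-contraction}.

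The only step needing care is the limit at $+\infty$. It is the one place where constant reproduction and uniqueness of the solution (from \eqref{eq:weighted-regression-safeguard}) are genuinely used. For local-linear regression this must be checked at each query point $\bx$ separately, in line with the pointwise safeguard.
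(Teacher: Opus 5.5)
Your proposal is correct and follows the paper's proof essentially step for step. Constant reproduction via $\bG_{1,t}\mathbf 1$ (spline) and $\bG_{1,t}(\bx)\mathbf e_0$ (local linear) gives the limits. The $y$-independence of the Gram matrix gives a right-continuous step function, which clipping and the running supremum preserve. The contraction then follows from monotonicity of $F$ for the upper bound and the choice $z=y$ for the lower bound. Your version additionally makes explicit the $1$-Lipschitz property of $\Pi_{[0,1]}$ and the evaluations at $y<\min_i Y_i$ and $y\ge\max_i Y_i$, which the paper leaves implicit.
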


\begin{proof}
	For spline and local-linear regression, respectively,
	$$
	\bG_{1,t}\mathbf1=\sum_{i\in\cD_{1,t}}w_i\mathbf B_t(\bX_i),
	\qquad
	\bG_{1,t}(\bx)\mathbf e_0
	=\sum_{i\in\cD_{1,t}}w_iK_{\bx,h}(\bX_i)
	\mathbf v_{\bx,h}(\bX_i).
	$$
	Thus a constant response is fitted exactly. Since the weights and Gram matrices do not depend on $y$, the initial CDF estimate is a right-continuous step function whose jumps occur at the observed treated outcomes and whose limits at $-\infty$ and $+\infty$ are $0$ and $1$. The estimate has the same right-continuity and limit properties. Projection
	onto $[0,1]$ followed by a running supremum preserves right-continuity and these limits and yields a nondecreasing function. To prove \eqref{eq:cdf-monotonicity-contraction}, let $e$ denote its right-hand side. Projection cannot increase the error, and monotonicity of $F$ gives
	$$
	\widehat F_{1,t}(y\mid\bx)\le\sup_{z\le y}\{F(z\mid\bx)+e\}=F(y\mid\bx)+e.
	$$
	Taking $z=y$ gives the reverse bound $\widehat F_{1,t}(y\mid\bx)\ge F(y\mid\bx)-e$.
\end{proof}

\begin{lemma}[Weighted moments and maximal concentration]
	\label{lem:weighted-epoch-concentration}
	Let $Z_{a,v}=\ind\{A_v=a\}\varepsilon_v/\pi_{a,v}(\bX_v)$. Then $0\le Z_{a,v}\le p_a^{-1}$ and, for every deterministic bounded measurable function $f$,
	\begin{equation}
		\begin{aligned}
			\Ebb\{Z_{a,v}f(\bX_v,Y_v(a))\mid\mathcal F_{v-1}\}
			&=\varepsilon_v\Ebb f(\bX,Y(a)),\\
			\Ebb\{Z_{a,v}^2f(\bX_v,Y_v(a))^2\mid\mathcal F_{v-1}\}
			&\le p_a^{-1}\varepsilon_v\Ebb f(\bX,Y(a))^2 .
			\label{eq:fullsample-weighted-moments}
		\end{aligned}
	\end{equation}
	Fix $k$, write $M=M_k$ and $h=h^{(k)}$, and let $g(\cdot)$ be deterministic with $|g(\cdot)|\le C$ and $P_X(\operatorname{supp}g)\le Ch^d$. Consider the centered sum
	$$
	U_t=\sum_{i\in\cD_{a,t}}w_ig(\bX_i)\phi_i-\nu_t\Ebb\{g(\bX)\phi\},
	\qquad \phi\in\{1,\ \ind\{Y(1)\le y\},\ \ind\{Y(1)<y\}\},
	$$
	where $a=1$ for the last two choices of $\phi$ and $y$ is fixed.
	Alternatively, consider the control residual sum
	$$
	U_t=\sum_{i\in\cD_{0,t}}w_ig(\bX_i)\{Y_i(0)-b_0(\bX_i)\}.
	$$
	In either case,
	\begin{equation}
		\Pbb\Big\{\sup_{t\in\mathcal T_k}|U_t|>C\big(\sqrt{Mh^dv}+v\big)\Big\}
		\le2e^{-v},\qquad v\ge1.
		\label{eq:weighted-epoch-scalar-tail}
	\end{equation}
\end{lemma}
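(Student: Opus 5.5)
We prove the two displays of Lemma~\ref{lem:weighted-epoch-concentration} separately. The moment identities follow from conditioning. Then a Freedman-type martingale inequality, combined with a maximal argument over the epoch $\mathcal T_k$, gives the tail bound.

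\emph{Moments.} Given $\mathcal F_{v-1}$, the function $G_v(\cdot)$ and hence $\pi_{a,v}(\cdot)$ are fixed. Moreover, $\bX_v\sim P_X$, and by \eqref{eq:generic-sequential-randomization} and the fresh randomization, $\Pbb(A_v=a\mid\mathcal F_{v-1},\bX_v,Y_v(0),Y_v(1))=\pi_{a,v}(\bX_v)$. Conditioning first on $(\mathcal F_{v-1},\bX_v,Y_v(a))$ replaces $\ind\{A_v=a\}$ by $\pi_{a,v}(\bX_v)$. This cancels the denominator and leaves $\varepsilon_v\Ebb f(\bX,Y(a))$. The bound $0\le Z_{a,v}\le p_a^{-1}$ holds because $\pi_{a,v}\ge\varepsilon_vp_a$. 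The second moment follows from $Z_{a,v}^2\le p_a^{-1}Z_{a,v}$ and the first identity applied to $f^2$.

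\emph{Tail bound.} Online weights equal $w_v=Z_{a,v}$ on $\{A_v=a\}$, so $U_t=U^{\mathrm{hist}}+\sum_{v<t}\xi_v$ with $\xi_v=Z_{a,v}g(\bX_v)\phi_v-\varepsilon_v\Ebb\{g\phi\}$. The historical part $U^{\mathrm{hist}}=\sum_{i\in\cD_a}w_i^{\mathrm{hist}}g(\bX_i)\phi_i-n_{\mathrm{hist}}\Ebb\{g\phi\}$ involves only a fixed number $n_{\mathrm{hist}}$ of terms. It is bounded by a constant in the bounded case, and is sub-Gaussian with a constant proxy in the residual case, so it is absorbed into $Cv$ for $v\ge1$.

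By the moment identity, $\xi_v$ is a martingale difference. In the bounded case $|\xi_v|\le C$, and the conditional variances sum to $\sum_{v<t}\varepsilon_v\cdot p_a^{-1}C^2P_X(\operatorname{supp} g)\le C\nu_th^d\le CMh^d$ for $t\in\mathcal T_k$, since $m_t<2M$. Freedman's inequality, in its maximal form over $t$ up to the last index of $\mathcal T_k$, then gives $\Pbb\{\sup_t|\sum_{v<t}\xi_v|>\sqrt{2CMh^dv}+Cv\}\le2e^{-v}$. Here $\sup_t$ is controlled by the supermartingale $\exp\{\theta S_t-\psi(\theta)V_t\}$ and Ville's inequality, with the deterministic variance cap $CMh^d$.

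For the residual sum, $\xi_v=Z_{0,v}g(\bX_v)\{Y_v(0)-b_0(\bX_v)\}$ is unbounded. We use the Bernstein condition instead. Conditioning on $(\mathcal F_{v-1},\bX_v,A_v)$ and applying \eqref{eq:minimax-control-subgaussian} gives $\Ebb\{e^{\theta\xi_v}\mid\mathcal F_{v-1}\}\le\Ebb\{\exp(\theta^2\sigma^2Z_{0,v}^2g^2/2)\mid\mathcal F_{v-1}\}$. Since $Z_{0,v}^2g^2\le C$ and $\Ebb[Z_{0,v}^2g^2\mid\mathcal F_{v-1}]\le C\varepsilon_vh^d$, this is at most $\exp(C\theta^2\varepsilon_vh^d)$ for $|\theta|\le c$. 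The same exponential supermartingale argument then yields the sub-exponential tail $\sqrt{Mh^dv}+v$.

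The main obstacle is the maximal, uniform-in-$t$ control within the epoch. The variance proxies are predictable but random only through the deterministic $\varepsilon_v$, so they are in fact deterministic, which makes Ville's inequality clean. The epoch's upper index is deterministic because $m_t$ is deterministic. Finally, we take $\theta\asymp\min\{c,\sqrt{v/(Mh^d)}\}$ and apply a union bound over the two signs to obtain the factor $2e^{-v}$.
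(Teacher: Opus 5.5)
Your proposal is correct and follows essentially the paper's route: condition on $(\mathcal F_{v-1},\bX_v,Y_v(a))$ for the moment identities, then use a nonnegative exponential supermartingale with the deterministic variance cap $C\theta^2h^d\nu_t\le C\theta^2Mh^d$ over the epoch, Ville's maximal inequality for both signs, and conditional sub-Gaussianity given $(\mathcal F_{v-1},\bX_v,A_v)$ for the residual sum, with $\theta=\min\{c,\cdot\}$ chosen at the end. The one difference is cosmetic: you split off the fixed-size historical term (a deterministic constant in the bounded case, a separate sub-Gaussian tail in the residual case, which costs a union bound and a harmless shift $v\mapsto v+\log 2$ to keep exactly $2e^{-v}$), whereas the paper absorbs it into the supermartingale's initial value via $\Ebb\exp(\theta U_1)\le\exp(C\theta^2n_{\mathrm{hist}}h^d)$.
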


\begin{proof}
	Sequential randomization and independent sampling give the first identity in \eqref{eq:fullsample-weighted-moments} by conditioning first on the current covariates and potential outcomes. The second follows from $\pi_{a,v}(\bx)\ge\varepsilon_vp_a$. For a historical observation, the corresponding marginal relations hold with $\varepsilon_v$ replaced by $1$ and $p_a$ by $\pi_a^{\mathrm{hist}}$. In particular, the expected weighted Gram matrix equals $\nu_t$ times its population counterpart, and $\nu_t\asymp M_k$ uniformly over $t\in\mathcal T_k$ for all sufficiently large $k$.
	
	For the bounded-response sum, \eqref{eq:fullsample-weighted-moments} implies that the increments $U_{t+1}-U_t=Z_{a,t}g(\bX_t)\phi_t-\varepsilon_t\Ebb\{g(\bX)\phi\}$ have conditional mean zero, are bounded in absolute value by $C$, and have conditional second moment at most $C\varepsilon_t h^d$, given $\mathcal F_{t-1}$. Expanding the exponential gives, for $|\theta|\le c$,
	$$
	\Ebb[\exp\{\theta(U_{t+1}-U_t)\}\mid\mathcal F_{t-1}]
	\le 1+C\theta^2\varepsilon_t h^d
	\le\exp(C\theta^2\varepsilon_t h^d).
	$$
	For the control residual sum, independent sampling in Section~\ref{subsec:causal-setting} and \eqref{eq:generic-sequential-randomization} allow \eqref{eq:minimax-control-subgaussian} to be applied conditionally on $(\bX_t,A_t,\mathcal F_{t-1})$. Thus
	\begin{align*}
		\Ebb[\exp\{\theta(U_{t+1}-U_t)\}\mid\mathcal F_{t-1}]
		&\le\Ebb[\exp\{\sigma^2\theta^2Z_{0,t}^2g(\bX_t)^2/2\}
		\mid\mathcal F_{t-1}]\\
		&\le 1+C\theta^2\Ebb[Z_{0,t}^2g(\bX_t)^2\mid\mathcal F_{t-1}]\le\exp(C\theta^2\varepsilon_t h^d),
	\end{align*}
	where the last line uses the boundedness of $Z_{0,t}g(\bX_t)$ and \eqref{eq:fullsample-weighted-moments}. For each historical observation, the same marginal bounds hold with
	exploration factor $1$ and assignment probability $\pi_a^{\mathrm{hist}}$. For fixed $n_{\mathrm{hist}}$, H\"older's inequality gives, after adjusting $C$ and $c$,
	$$
	\Ebb\exp(\theta U_1)\le\exp(C\theta^2n_{\mathrm{hist}}h^d),
	\qquad |\theta|\le c.
	$$
	The conditional bounds above and $\nu_{t+1}-\nu_t=\varepsilon_t$ give
	\begin{align*}
		&\Ebb[
		\exp\{\theta U_{t+1}-C\theta^2h^d\nu_{t+1}\}
		\mid\mathcal F_{t-1}]=\exp\{\theta U_t-C\theta^2h^d\nu_{t+1}\}
		\Ebb[\exp\{\theta(U_{t+1}-U_t)\}\mid\mathcal F_{t-1}]\\
		\leq&\exp\{\theta U_t-C\theta^2h^d\nu_t\}.
	\end{align*}
	Hence, for each fixed $|\theta|\le c$, $\exp\{\theta U_t-C\theta^2h^d\nu_t\}$ is a nonnegative supermartingale with respect to $(\mathcal F_{t-1})_{t\ge1}$ and has initial expectation at most $1$. Since \eqref{eq:generic-estimation-epochs} implies $\nu_t\le(n_{\mathrm{hist}}+2)M$ for $t\in\mathcal T_k$, applying the maximal inequality to the processes with parameters $\theta$ and $-\theta$ yields, for $z>0$ and $0<\theta\le c$,
	$$
	\Pbb\Big\{\sup_{t\in\mathcal T_k}|U_t|\ge z\Big\}
	\le2\exp\{-\theta z+C_1\theta^2Mh^d\}.
	$$
	Taking $\theta=\min\{c,z/(2C_1Mh^d)\}$ and $z=C(\sqrt{Mh^dv}+v)$, with $C$ sufficiently large, gives \eqref{eq:weighted-epoch-scalar-tail}.
\end{proof}

\begin{lemma}[Uniform concentration over outcome thresholds]
	\label{lem:weighted-threshold-concentration}
	Fix $k$, $M=M_k$, and $h=h^{(k)}$, and let $g$ satisfy the conditions of Lemma~\ref{lem:weighted-epoch-concentration}. Define
	$$
	V_{g,t}(y)=\sum_{i\in\cD_{1,t}}w_ig(\bX_i)\ind\{Y_i\le y\}-\nu_t\Ebb\{g(\bX)\ind\{Y(1)\le y\}\}.
	$$
	For every $0<\vartheta\le1$ and $v\ge1$,
	\begin{equation}
		\Pbb\Big[\sup_{t\in\mathcal T_k,\,y\in\Rbb}|V_{g,t}(y)|
		>C\big\{\sqrt{Mh^d[v+\log(e/\vartheta)]}+v+\log(e/\vartheta)
		+Mh^d\vartheta\big\}\Big]\le Ce^{-v}.
		\label{eq:weighted-cdf-threshold-tail}
	\end{equation}
\end{lemma}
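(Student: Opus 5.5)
The plan is a standard chaining argument: a one-sided CDF-type discretization of the threshold $y$, combined with the scalar maximal inequality \eqref{eq:weighted-epoch-scalar-tail} of Lemma~\ref{lem:weighted-epoch-concentration} at finitely many grid points.

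First, reduce to a nonnegative $g$ by writing $g=g^+-g^-$. Both parts satisfy the hypotheses of Lemma~\ref{lem:weighted-epoch-concentration}, so it suffices to treat $g\ge0$. Then define the deterministic, nondecreasing, bounded function
$$\Phi(y)=\Ebb\{g(\bX)\ind\{Y(1)\le y\}\},\qquad 0\le\Phi\le \Ebb g(\bX)\le Ch^d.$$
Choose a grid $-\infty=y_0<y_1<\dots<y_N=+\infty$ with $N\le C/\vartheta$, such that the increment of $\Phi$ over each open cell $(y_{j-1},y_j)$ is at most $\vartheta\,\Ebb g(\bX)\le C\vartheta h^d$. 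This uses quantiles of the finite measure induced by $\Phi$. Atoms of $\Phi$ are handled by also using the left limits $\ind\{Y(1)<y_j\}$; this is why Lemma~\ref{lem:weighted-epoch-concentration} includes the strict-inequality choice of $\phi$.

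Second, apply \eqref{eq:weighted-epoch-scalar-tail} with $v'=v+\log(e/\vartheta)+\log C$ to each of the $O(1/\vartheta)$ processes $\sup_{t\in\mathcal T_k}|V_{g,t}(y_j)|$ and their left-limit analogues. A union bound then gives, with probability at least $1-Ce^{-v}$, that all of them are at most $C\{\sqrt{Mh^d[v+\log(e/\vartheta)]}+v+\log(e/\vartheta)\}$.

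Third, pass from the grid to all $y$ by monotonicity. Since $g\ge0$ and $w_i\ge0$, the empirical part $\sum_i w_ig(\bX_i)\ind\{Y_i\le y\}$ is nondecreasing in $y$. For $y\in[y_{j-1},y_j)$, sandwich it between its values at $y_{j-1}$ and at the left limit $y_j^-$. This gives
$$|V_{g,t}(y)|\le \max\{|V_{g,t}(y_{j-1})|,|V_{g,t}(y_j^-)|\}+\nu_t\{\Phi(y_j^-)-\Phi(y_{j-1})\}.$$
The bracket is at most $C\vartheta h^d$, and $\nu_t\le(n_{\mathrm{hist}}+2)M$ on $\mathcal T_k$ by \eqref{eq:generic-estimation-epochs}. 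The extra bias term is therefore at most $CMh^d\vartheta$, which is exactly the last term in \eqref{eq:weighted-cdf-threshold-tail}.

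The main obstacle is the bookkeeping around atoms and left limits in the grid construction, so that each cell increment is controlled without assuming $Y(1)$ is continuous. It also requires that the union bound be taken jointly over $t\in\mathcal T_k$, which the maximal form of \eqref{eq:weighted-epoch-scalar-tail} already provides, and over the grid. The resulting constants depend only on the bounds for $g$, $p_1$, and $n_{\mathrm{hist}}$, so they are uniform over $\mathcal P^{\mathrm{np}}(s)$.
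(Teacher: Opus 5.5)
Your proposal is correct and follows essentially the same route as the paper: reduce to $g\ge0$ via positive and negative parts, bracket $\ind\{Y(1)\le y\}$ by $O(1/\vartheta)$ quantile thresholds of $\mu_g$ (with strict indicators at atoms), apply the maximal inequality \eqref{eq:weighted-epoch-scalar-tail} with a union bound over the finite collection, and absorb the bracketing gap into the $CMh^d\vartheta$ term using $\nu_t\le(n_{\mathrm{hist}}+2)M$. Your explicit sandwich bound for $|V_{g,t}(y)|$ just writes out the monotonicity step that the paper states in one line.
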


\begin{proof}
	First suppose that $g\ge0$, and let $\mu_g(D)=\Ebb\{g(\bX)\ind\{Y(1)\in D\}\}$. Its total mass is at most $Ch^d$. If this mass is zero, the weighted sums vanish almost surely. Otherwise, quantiles of $\mu_g$ give at most $C/\vartheta$ threshold indicators, together with the constants $0$ and $1$, that bracket every $\ind\{Y(1)\le y\}$ with a $\mu_g$-integral gap of at most $\vartheta\mu_g(\Rbb)$. At atoms larger than this gap, include both strict and non-strict indicators.
	
	Because the weights and $g$ are nonnegative, the empirical sum at each $y$ lies between the two bracketing sums. Their expectation terms differ by at most $CMh^d\vartheta$. Apply \eqref{eq:weighted-epoch-scalar-tail} to the finite collection and
	take a union bound to obtain \eqref{eq:weighted-cdf-threshold-tail}. For signed $g$, apply the same argument separately to its positive and negative parts and adjust $C$. These quantiles are used only in the proof, not in computing the estimator.
\end{proof}

With $\vartheta=h^{\bar s}$ and $v=C_0L_k+u$, a further union bound over any collection whose cardinality is polynomial in $h^{-1}$ and $M_k$ gives logarithmic terms of order $L_k+u$.

\begin{lemma}[Spline approximation and matrix stability]
	\label{lem:weighted-spline-design}
	Fix $k$ and write $h=h^{(k)}$. The basis $\mathbf B^{(k)}$ has $J_k\asymp h^{-d}$ nonnegative, bounded functions, each supported on $O(1)$ neighboring mesh cubes and overlapping with at most $O(1)$ other basis functions. The basis functions sum to one. For each $g\in\mathcal H^s([0,1]^d,L)$, there is a coefficient vector $\boldsymbol\beta_g$ such that
	\begin{equation}
		\|g-{\mathbf B^{(k)}}^\top\boldsymbol\beta_g\|_\infty\le Ch^s,
		\qquad\|\boldsymbol\beta_g\|_\infty\le C,
		\qquad
		ch^d\bI\preceq\bGamma_k:=\Ebb\{\mathbf B^{(k)}(\bX)\mathbf B^{(k)}(\bX)^\top\}
		\preceq Ch^d\bI.
		\label{eq:spline-approximation-design}
	\end{equation}
	For $a\in\{0,1\}$ and $t\in\mathcal T_k$, whenever $\nu_t>0$ and \eqref{eq:weighted-regression-safeguard} holds,
	$$
	\|\bG_{a,t}^{-1}\|_\infty\le \frac{C}{\nu_th^d},
	\qquad
	\|\bG_{a,t}\mathbf1\|_\infty\le C\nu_th^d,
	$$
	where the matrix norm is the maximum absolute row sum.
\end{lemma}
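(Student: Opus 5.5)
The plan is to treat the two halves of the lemma separately. The approximation and population Gram statements \eqref{eq:spline-approximation-design} come from classical univariate B-spline theory tensorized to $[0,1]^d$. The bounds on $\bG_{a,t}$ are purely algebraic consequences of \eqref{eq:weighted-regression-safeguard} and the banded structure of the basis.

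First I would record the structural facts. Take univariate B-splines of degree $q\ge\lceil s\rceil$ on the uniform knot sequence with mesh $h=1/N$ and $(q+1)$-fold boundary knots. There are $N+q\asymp h^{-1}$ of them. Each is nonnegative, bounded by one, and supported on at most $q+1$ consecutive cells, and they sum to one. Tensor products inherit all of this. Hence $J_k\asymp h^{-d}$, each $B_j$ is supported on at most $(q+1)^d$ mesh cubes, and each overlaps at most $(2q+1)^d$ others.

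For approximation I would use a de Boor--Fix type quasi-interpolant $Qg=\sum_j\lambda_j(g)B_j$. Here each $\lambda_j$ is a local linear functional on the support of $B_j$ with $|\lambda_j(g)|\le C\|g\|_{\infty}$, and $Q$ reproduces tensor polynomials of coordinate degree at most $q$. Since $q\ge\lceil s\rceil>r$, all polynomials of total degree $r=\lceil s\rceil-1$ are reproduced. Then $\|g-Qg\|_\infty$ on a cell is bounded by $C$ times the local error of best approximation by degree-$r$ Taylor polynomials, and the Hölder condition makes this at most $CLh^s$. Setting $\boldsymbol\beta_g=(\lambda_j(g))_j$ gives $\|\boldsymbol\beta_g\|_\infty\le CL$. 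The constants do not depend on $g$ within $\mathcal H^s([0,1]^d,L)$, which gives uniformity in $y$ for $g=F_1(y\mid\cdot)$ and uniformity over $\mathcal P^{\mathrm{np}}(s)$.

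For $\bGamma_k$, the bounds $c_X\le f_X\le C_X$ reduce the claim to the Lebesgue Gram matrix. The $L^2$ stability of the normalized B-spline basis (de Boor's condition-number bound, which depends only on $q$ and tensorizes) gives $\int(\sum_j\theta_jB_j)^2\asymp h^d\|\boldsymbol\theta\|_2^2$. This is exactly $ch^d\bI\preceq\bGamma_k\preceq Ch^d\bI$.

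Next come the sample Gram bounds. Nonnegativity and the overlap count mean that $\bG_{a,t}$ has nonnegative entries and at most $(2q+1)^d$ nonzeros per row. By \eqref{eq:weighted-regression-safeguard}, each entry satisfies $0\le(\bG_{a,t})_{jl}\le\{(\bG_{a,t})_{jj}(\bG_{a,t})_{ll}\}^{1/2}\le C_{\mathrm{fit}}\nu_th^d$. The $j$th coordinate of $\bG_{a,t}\mathbf1$ is its $j$th row sum, which is therefore at most $C\nu_th^d$. This is the same as $\sum_iw_iB_j(\bX_i)$ by the partition of unity.

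The main obstacle is the bound on $\|\bG_{a,t}^{-1}\|_\infty$, because an eigenvalue bound alone only controls the spectral norm. Here I would invoke the Demko--Moss--Smith theorem on inverses of banded (more generally, finite-range-sparse) symmetric positive definite matrices. With spectrum in $[c_{\mathrm{fit}}\nu_th^d,C_{\mathrm{fit}}\nu_th^d]$, the condition number is at most $\kappa=C_{\mathrm{fit}}/c_{\mathrm{fit}}$. The theorem then gives $|(\bG_{a,t}^{-1})_{jl}|\le C(\nu_th^d)^{-1}\gamma^{\mathrm{dist}(j,l)}$, with $\gamma<1$ depending only on $\kappa$ and the overlap pattern. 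Here $\mathrm{dist}$ is the graph distance on the multi-index grid, which is comparable to the $\ell_\infty$ distance of the indices. The proof is via polynomial approximation of $x^{-1}$ on $[c,C]$: $p_m(\bG)$ has range $m$, and the Chebyshev error decays geometrically.

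Summing over $l$, the number of indices at distance $m$ is $O(m^{d-1})$. So $\sum_l|(\bG_{a,t}^{-1})_{jl}|\le C(\nu_th^d)^{-1}\sum_mm^{d-1}\gamma^m\le C/(\nu_th^d)$, uniformly in $k$, $t$ and $P$. If I wanted to avoid citing Demko--Moss--Smith, the fallback would be to prove the decay directly by expanding $\bG^{-1}=\tau\sum_{m\ge0}(\bI-\tau\bG)^m$ with $\tau=2/\{(c_{\mathrm{fit}}+C_{\mathrm{fit}})\nu_th^d\}$. The $m$th term has spectral norm at most $\tau\rho^m$ with $\rho<1$ and vanishes beyond graph distance $m$, so the same geometric summation gives the row-sum bound.
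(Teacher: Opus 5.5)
Your proposal is correct and follows essentially the paper's route. The approximation bound comes from a sup-norm-bounded local quasi-interpolant with polynomial reproduction, the bound on $\bGamma_k$ from local $L^2$ stability of the B-splines together with $c_X\le f_X\le C_X$, and the row-sum bound from a finite-propagation expansion of $\bG_{a,t}^{-1}$ combined with bounded overlap of supports. The one variation is that your primary tool for the inverse is Demko--Moss--Smith (Chebyshev approximation of $x^{-1}$, giving entrywise decay in graph distance), whereas the paper uses exactly your Neumann-series fallback, bounding the row sums of the $j$th power of $\bI-c_0\bG_{a,t}/(\nu_th^d)$ by $\{C(j+1)^d\}^{1/2}\rho^j$ through its $O((j+1)^d)$ nonzeros per row.
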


\begin{proof}
	To establish the eigenvalue bounds, restrict the basis functions to each mesh cube and rescale the cube to unit size. The Gram matrices of the nonzero basis functions then have eigenvalues bounded above and away from zero, uniformly over the finitely many interior and boundary configurations. Summing over the mesh cubes and applying the bounds on the covariate density gives the matrix inequalities in \eqref{eq:spline-approximation-design}. For the approximation bound, choose a mesh cube $Q_j$ in the support of each basis function. Let $\Lambda_j(g)$ be the coefficient corresponding to that basis function in the local $L^2(Q_j)$ polynomial projection. Norm equivalence in finite dimensions gives $|\Lambda_j(g)|\le C\|g\|_{\infty,Q_j}$. The operator $\sum_jB_j^{(k)}\Lambda_j(g)$ reproduces polynomials of the required degree. Comparing $g$ with its Taylor polynomial on an $O(h)$ neighborhood and using $\sum_jB_j^{(k)}\equiv1$ proves the approximation bound, including at the boundary.
	
	If $\nu_t>0$ and \eqref{eq:weighted-regression-safeguard} holds, the eigenvalues of $A=G_{a,t}/(\nu_th^d)$ lie in a fixed
	positive interval. Choose $c_0>0$ such that $R=I-c_0A$ has spectral norm at most $\rho<1$. Since an entry of $A$ can be nonzero only for basis functions with overlapping support, each row of $R^j$ has at most $C(j+1)^d$ nonzero entries. Hence
	$$
	\|A^{-1}\|_\infty\le c_0\sum_{j\ge0}\|R^j\|_\infty\le C\sum_{j\ge0}(j+1)^{d/2}\rho^j<\infty .
	$$
	It follows that $\|G_{a,t}^{-1}\|_\infty\le C/(\nu_th^d)$ and $\|G_{a,t}\mathbf1\|_\infty\le C\nu_th^d$.
\end{proof}

\begin{lemma}[Local-linear approximation and population design]
	\label{lem:weighted-local-linear-design}
	Let
	$$\bGamma_h(\bx)=\Ebb\{K_{\bx,h}(\bX)\mathbf v_{\bx,h}(\bX)
	\mathbf v_{\bx,h}(\bX)^\top\}.$$ 
	Then
	\begin{equation}
		ch^d\bI_{d+1}\preceq\bGamma_h(\bx)\preceq Ch^d\bI_{d+1},
		\qquad\bx\in[0,1]^d,\ 0<h\le h_0 .
		\label{eq:local-linear-population-design}
	\end{equation}
	For $g\in\mathcal H^s([0,1]^d,L)$, define
	$$
	\boldsymbol\beta_g(\bx)=
	\begin{cases}
		(g(\bx),\mathbf0^\top)^\top,&s\le1,\\
		(g(\bx),h\nabla g(\bx)^\top)^\top,&s>1.
	\end{cases}
	$$
	Uniformly over $\bx\in[0,1]^d$,
	$\|\boldsymbol\beta_g(\bx)\|\le C$,
	$\mathbf e_0^\top\boldsymbol\beta_g(\bx)=g(\bx)$, and
	$$
	|g(\bz)-\mathbf v_{\bx,h}(\bz)^\top\boldsymbol\beta_g(\bx)|
	\le Ch^{s_{\mathrm{lin}}}
	\quad\text{whenever }K_{\bx,h}(\bz)\ne0.
	$$
	Consequently,
	$$
	\left\|\Ebb\{K_{\bx,h}(\bX)\mathbf v_{\bx,h}(\bX)g(\bX)\}
	-\Gamma_h(\bx)\boldsymbol\beta_g(\bx)\right\|
	\le Ch^{d+s_{\mathrm{lin}}}.
	$$
	The constants are uniform over $g$, and hence over $y$ when
	$g=F_1(y\mid\cdot)$.
\end{lemma}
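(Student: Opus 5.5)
The plan is to prove the two-sided eigenvalue bound \eqref{eq:local-linear-population-design} by a change of variables. Then I would obtain the approximation properties of $\boldsymbol\beta_g(\bx)$ from a Taylor expansion inside the kernel support. The final display will follow by combining these two ingredients.

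\textbf{Change of variables and upper bound.} Substituting $\bz=\bx+h\bu$ gives
\begin{equation*}
 \bGamma_h(\bx)=h^d\int K(\bu)\,\bp(\bu)\bp(\bu)^\top f_X(\bx+h\bu)\,\ind\{\bx+h\bu\in[0,1]^d\}\,d\bu .
\end{equation*}
The kernel $K$ is bounded and supported on the unit ball, $\|\bp(\bu)\|\le\sqrt2$ there, and $f_X\le C_X$. Hence the upper bound $Ch^d\bI_{d+1}$ is immediate.

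\textbf{Lower bound.} For a unit vector $\mathbf a$, use $f_X\ge c_X$ and $K\ge k_0>0$ on a ball of radius $r_0$. This gives
\begin{equation*}
 \mathbf a^\top\bGamma_h(\bx)\mathbf a\ge c_Xk_0h^d\int_{D}(\mathbf a^\top\bp(\bu))^2\,d\bu .
\end{equation*}
Here $D$ is any subset of $\{\|\bu\|\le r_0:\bx+h\bu\in[0,1]^d\}$. To keep this uniform up to the boundary, take $D$ to be the intersection of the $r_0$-ball with the closed orthant whose sign pattern $\sigma_j$ points from $x_j$ toward the centre of $[0,1]$ in each coordinate. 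Because $h\le h_0<1$ and $r_0\le1$, every $\bx+h\bu$ with $\bu$ in that orthant-ball stays in the cube. There are only $2^d$ such orthants. For each orthant, the map $\mathbf a\mapsto\int_D(\mathbf a^\top\bp)^2$ is continuous on the unit sphere. It is strictly positive there, because $\mathbf a^\top\bp(\bu)$ is a nonzero affine function and cannot vanish on a set with nonempty interior. Compactness then gives a uniform positive minimum, and hence $ch^d\bI_{d+1}$. I expect this boundary-uniform lower bound to be the main obstacle, and the orthant argument is how I would handle it.

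\textbf{Approximation.} Note that $\mathbf v_{\bx,h}(\bz)^\top\boldsymbol\beta_g(\bx)=g(\bx)+\ind\{s>1\}\nabla g(\bx)^\top(\bz-\bx)$, with $\|\bz-\bx\|\le h$ on the kernel support. Convexity of $\mathcal U$ keeps the segment $[\bx,\bz]$ in the domain of $g$. I would split into three cases.
\begin{itemize}
\item If $s\le1$ (so $r=0$), the H\"older condition gives $|g(\bz)-g(\bx)|\le Lh^s$.
\item If $1<s\le2$ (so $r=1$), the mean value theorem yields $g(\bz)-g(\bx)-\nabla g(\bx)^\top(\bz-\bx)=\{\nabla g(\boldsymbol\xi)-\nabla g(\bx)\}^\top(\bz-\bx)$. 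By the $(s-1)$-H\"older bound on first derivatives this is at most $CLh^{s}$.
\item If $s>2$, the second derivatives are bounded by $L$, and Taylor's theorem gives $CLh^2$.
\end{itemize}
In all three cases the error is at most $Ch^{s_{\mathrm{lin}}}$. Moreover, $\|\boldsymbol\beta_g(\bx)\|\le L(1+h\sqrt d)\le C$, and the intercept identity $\mathbf e_0^\top\boldsymbol\beta_g(\bx)=g(\bx)$ holds by construction.

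\textbf{Final display.} Write
\begin{equation*}
 \Ebb\{K_{\bx,h}\mathbf v_{\bx,h}g\}-\bGamma_h(\bx)\boldsymbol\beta_g(\bx)=\Ebb[K_{\bx,h}(\bX)\mathbf v_{\bx,h}(\bX)\{g(\bX)-\mathbf v_{\bx,h}(\bX)^\top\boldsymbol\beta_g(\bx)\}].
\end{equation*}
On the support of $K_{\bx,h}$ we have $\|\mathbf v_{\bx,h}\|\le C$, and the bracket is at most $Ch^{s_{\mathrm{lin}}}$. Also $\Ebb K_{\bx,h}(\bX)\le Ch^d$. Together these give the bound $Ch^{d+s_{\mathrm{lin}}}$.

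\textbf{Uniformity.} All constants depend only on $L$, $d$, $s$, $K$, $c_X$ and $C_X$. Assumption~\ref{ass:nonparametric-model} makes $F_1(y\mid\cdot)$ H\"older uniformly in $y$, so the constants are also uniform over $y$ when $g=F_1(y\mid\cdot)$.
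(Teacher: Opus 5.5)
Your proposal follows the paper's proof essentially step for step: a change of variables with an orthant-of-a-ball argument for the boundary-uniform eigenvalue bounds, the H\"older/mean-value/Taylor cases for the approximation, and integration of the pointwise error over a kernel support of probability $O(h^d)$. It is also more explicit than the paper's sketch. One small correction: $h<1$ and $r_0\le 1$ do not keep $\bx+h\bu$ inside $[0,1]^d$ (take $x_j=1/2$ and $hu_j$ close to $1$); you need $hr_0\le 1/2$, which you get by shrinking the ball to radius $r_0\le 1/2$, since $K\ge k_0$ still holds on the smaller ball.
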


\begin{proof}
	For the lower bound in \eqref{eq:local-linear-population-design},
	the change of variables $\bz=\bx+h\bu$ gives
	$$
	h^{-d}\Gamma_h(\bx)
	=\int_{\{\bu:\,\bx+h\bu\in[0,1]^d\}}
	K(\bu)\bp(\bu)\bp(\bu)^\top f_X(\bx+h\bu)\,d\bu .
	$$
	The integration domain contains one of finitely many fixed orthant portions of a small ball on which $K$ is bounded below. On each such portion, the integral of $\bp(\bu)\bp(\bu)^\top$ is positive definite. Together with $f_X\ge c_X$, this proves the lower bound uniformly in $\bx$, including at the boundary. The upper bound follows from $f_X\le C_X$ and the boundedness and bounded support of $K$.

	The approximation bounds follow from the H\"older condition when $s\le1$ and a first-order Taylor expansion when $s>1$. For $s>2$, bounded second derivatives give a remainder of order $h^2$. The extension to a fixed neighborhood in
	Definition~\ref{def:holder-class} ensures that these arguments also apply at boundary points. Boundedness of the derivatives and $h\le1$ give the coefficient bound. Finally, $\|\mathbf v_{\bx,h}\|\le C$ on the kernel support, whose $P_X$-probability is at most $Ch^d$. Integrating the approximation error therefore gives the stated population remainder.
\end{proof}

\section{Additional Details for the Numerical Analysis}
\label{app:jester-details}

The first two subsections provide the weighted regression
estimators and supplementary regret fits for the simulation
study in Section~\ref{sec:simulation-settings}. The remaining
three subsections describe the online nuisance estimation,
finite-population control reference and oracle, and evaluation
criteria for the real-data analysis in
Section~\ref{subsec:jester-real-data}.

\subsection{Weighted regression estimators}
\label{app:simulation-estimators}
This subsection describes the four weighted regression methods used to estimate the control reference $b_0$ in the simulation study in Section~\ref{sec:simulation-settings}.
For each treatment condition $a\in\{0,1\}$, let $\mathcal I_{a,t}$ index the observations available before time $t$, and let $w_i$ denote their observation weights. We consider the following four regression estimators.

\begin{enumerate}
    \item[LL] Weighted local-linear regression. At a query point $\bx$, define $\bu_{i,t}(\bx)=(\bX_i-\bx)/h_t$ and $\bp(\bu)=(1,\bu^\top)^\top$, where $h_t$ is the bandwidth. With local weights $q_{i,t}(\bx)=w_iK\{\bu_{i,t}(\bx)\}$, the coefficients minimize $\sum_{i\in\mathcal I_{a,t}}q_{i,t}(\bx)\{Y_i-\bp(\bu_{i,t}(\bx))^\top\boldsymbol\theta\}^2$. The fitted intercept gives $\widetilde f_{a,t}(\bx)=\mathbf e_0^\top\widehat{\boldsymbol\theta}_{a,t}(\bx)$, where $\mathbf e_0=(1,0,\ldots,0)^\top$.

    \item[B-spline] Weighted B-spline regression with a growing basis. Let $\mathbf B_t(\bx)$ denote the B-spline basis vector, whose dimension increases as observations accumulate. The coefficients are fitted by $\widehat{\boldsymbol\theta}_{a,t}\in\argmin_{\boldsymbol\theta}\sum_{i\in\mathcal I_{a,t}}w_i\{Y_i-\mathbf B_t(\bX_i)^\top\boldsymbol\theta\}^2$, giving $\widetilde f_{a,t}(\bx)=\mathbf B_t(\bx)^\top\widehat{\boldsymbol\theta}_{a,t}$.

    \item[RF] weighted random forest regression. If $c_{i,m}$ is the number of times observation $i$ is sampled for tree $m$, its fitting weight is $v_{i,m}=c_{i,m}w_i$. For condition $a$ at time $t$, let $\mathcal L_m(\bx)$ contain the observation indices in the leaf receiving $\bx$. The tree prediction is $\widehat\mu_{a,t,m}(\bx)=\{\sum_{i\in\mathcal L_m(\bx)}v_{i,m}Y_i\}/\{\sum_{i\in\mathcal L_m(\bx)}v_{i,m}\}$, and the forest averages the $M$ tree predictions, $\widetilde f_{a,t}(\bx)=M^{-1}\sum_{m=1}^M\widehat\mu_{a,t,m}(\bx)$.

    \item[kNN] Weighted nearest-neighbor regression. For each query $\bx$, observations are ordered by Euclidean distance $\|\bX_i-\bx\|_2$. The neighborhood $\mathcal N_{a,t}(\bx)$ is the smallest initial segment of this ordering satisfying $\sum_{i\in\mathcal N_{a,t}(\bx)}w_i\ge b_t$, where $b_t$ is a prescribed weight-mass target. The prediction is $\widetilde f_{a,t}(\bx)=\{\sum_{i\in\mathcal N_{a,t}(\bx)}w_iY_i\}/\{\sum_{i\in\mathcal N_{a,t}(\bx)}w_i\}$. 
\end{enumerate}

For each method, since $b_0=f_0$ in our simulation models, we set
$\widetilde b_{0,t}(\bx)=\widetilde f_{0,t}(\bx)$ and obtain
$\widehat b_{0,t}(\bx)$ by applying the projection in
\eqref{eq:weighted-nonparametric-nuisance}.

\subsection{Supplementary log--log fits for cumulative regret}
\label{app:simulation-loglog-fits}

This subsection provides supplementary log--log fits for the
cumulative regret trajectories presented in
Figures~\ref{fig:simulation-regret}--\ref{fig:stepsize-sensitivity}.

Figure~\ref{fig:simulation-regret-fit} presents supplementary log--log fits for the mean cumulative gap-weighted assignment regret shown in Figure~\ref{fig:simulation-regret}. The slope
from a linear fit on the log--log scale is reported for each curve as a summary of its empirical growth rate. All fitted regret slopes are below one over the plotted range. In M1
and M2, LL and B-spline yield smaller slopes than RF and kNN; in M3, LL has the smallest slope, while the other three methods have similar slopes. The bottom row of Figure~\ref{fig:simulation-regret-fit} shows that the exploration effect depends on the model. The fitted slopes decrease with $\chi$ in M1 and M2. In M3, however, the slope
increases with $\chi$. Despite this increase, larger $\chi$ yields lower cumulative regret over the plotted horizon.

\begin{figure}[htbp]
    \centering
    \includegraphics[width=\textwidth]{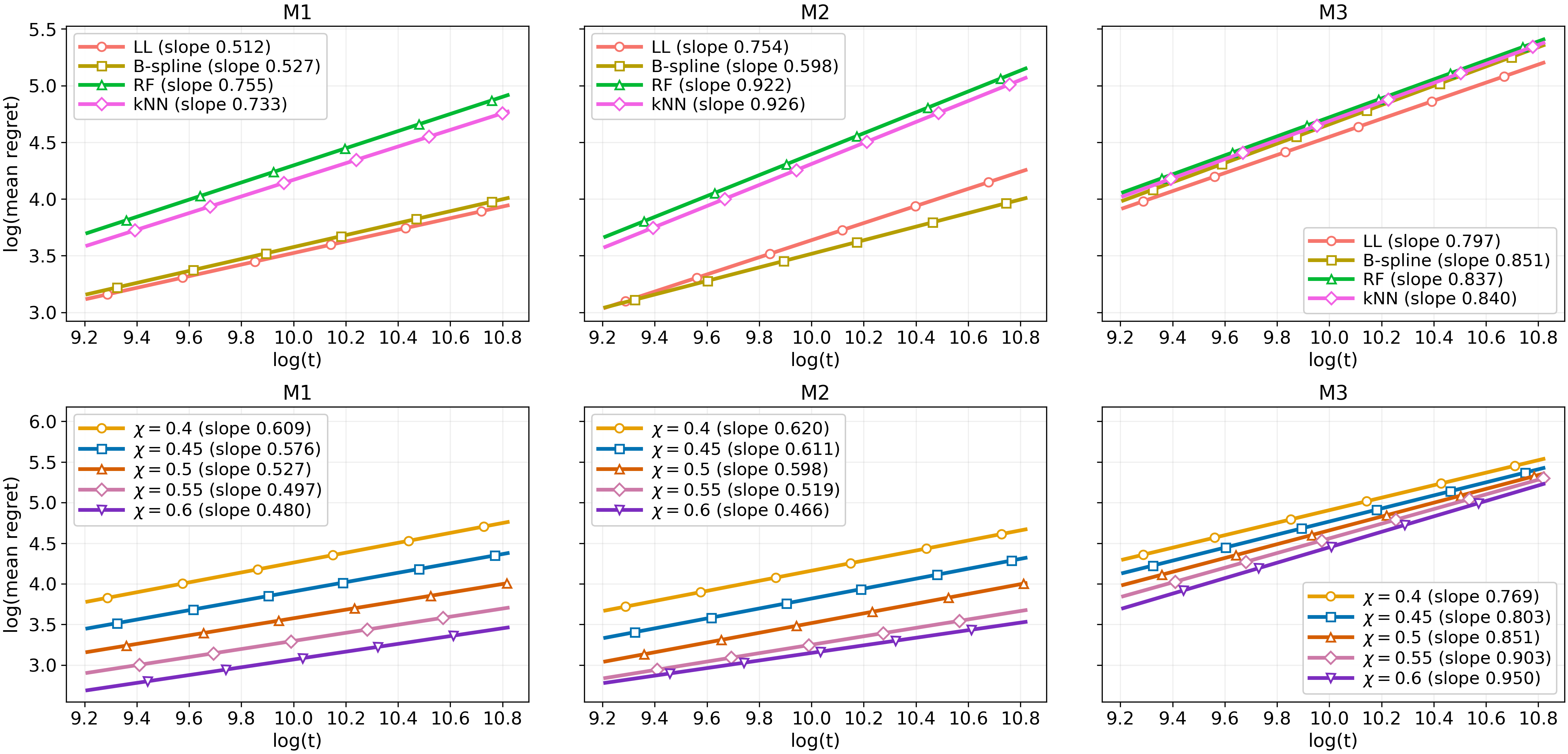}
    \caption{Supplementary log--log fits for the mean cumulative
    gap-weighted assignment regret in Models M1--M3,
    corresponding to Figure~\ref{fig:simulation-regret}.
    The top row compares four control-reference estimators at
    $\chi=0.5$, with weighted B-spline estimation of $F_1$
    throughout. The bottom row varies $\chi$, using weighted
    B-spline estimation for both nuisance functions.
    The lines show the fitted linear trends over the displayed
    time interval, and the legends report their slopes.}
    \label{fig:simulation-regret-fit}
\end{figure}

Figure~\ref{fig:history-sensitivity-fit} supplements the
historical-sample-size comparison in
Figure~\ref{fig:history-sensitivity}. As $n_{\mathrm{hist}}$
increases from $1000$ to $4000$, the fitted slope decreases
from $0.570$ to $0.501$ in M1, from $0.692$ to $0.511$ in M2,
and from $0.771$ to $0.735$ in M3. Thus, larger historical
samples are associated with both lower cumulative regret
toward the end of the simulated horizon and smaller fitted
growth rates over the displayed interval.

\begin{figure}[htbp]
    \centering
    \includegraphics[width=\textwidth]{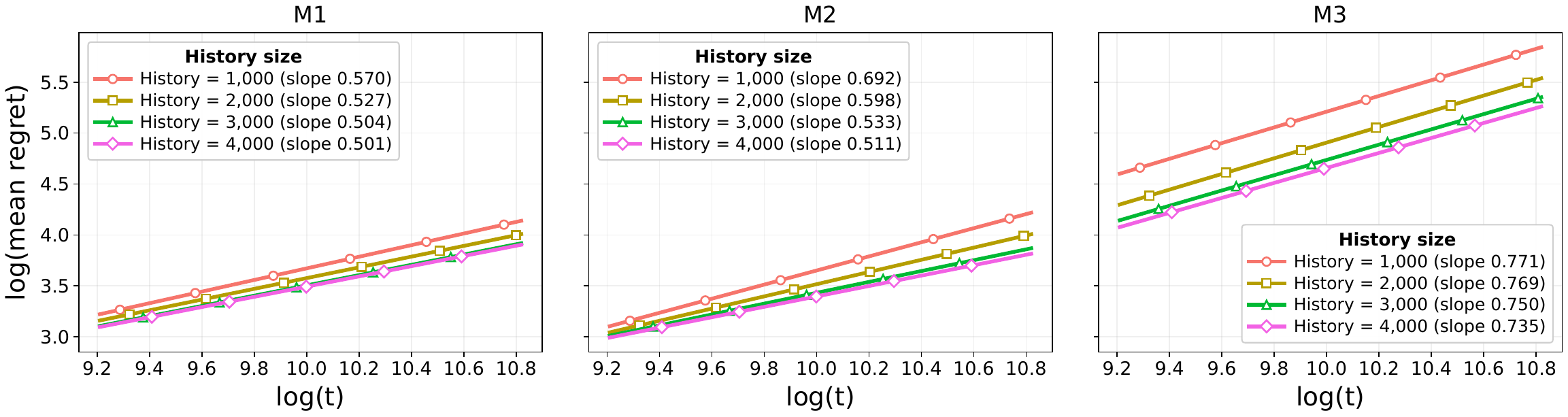}
    \caption{Supplementary log--log fits for the
    historical-sample-size comparison in
    Figure~\ref{fig:history-sensitivity}.
    The panels correspond to Models M1--M3, with
    $n_{\mathrm{hist}}\in\{1000,2000,3000,4000\}$.
    Both nuisance functions are estimated by weighted
    B-spline regression, with $\chi=0.5$ in M1 and M2 and
    $\chi=0.4$ in M3. The lines show fitted linear trends
    in the log mean cumulative gap-weighted assignment regret
    against $\log t$ over the displayed time interval.
    The legends report the fitted slopes.}
    \label{fig:history-sensitivity-fit}
\end{figure}

Figure~\ref{fig:stepsize-sensitivity-fit} supplements the
step-size comparison in Figure~\ref{fig:stepsize-sensitivity}.
Across the five values of $\zeta$, the fitted slopes range
from $0.519$ to $0.535$ in M1, from $0.597$ to $0.601$ in M2,
and from $0.762$ to $0.772$ in M3. These small within-model
differences are consistent with the limited sensitivity of
the empirical regret trajectories to the step-size exponent.

\begin{figure}[htbp]
    \centering
    \includegraphics[width=\textwidth]{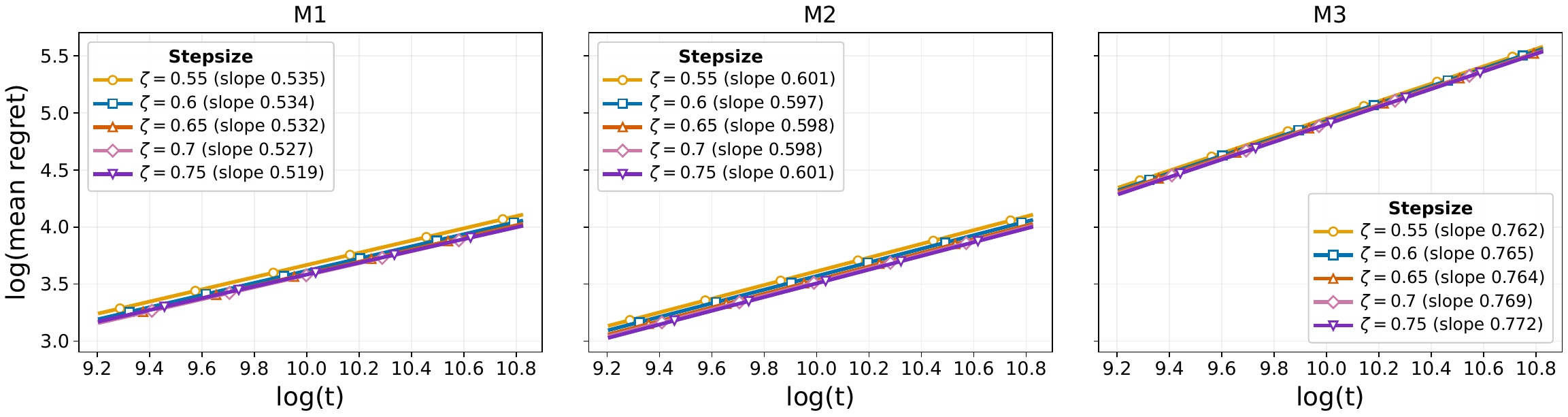}
    \caption{Supplementary log--log fits for the step-size
    comparison in Figure~\ref{fig:stepsize-sensitivity}.
    The panels correspond to Models M1--M3, with
    $\zeta\in\{0.55,0.60,0.65,0.70,0.75\}$.
    Both nuisance functions are estimated by weighted
    B-spline regression, with $\chi=0.5$ in M1 and M2 and
    $\chi=0.4$ in M3. The lines show fitted linear trends
    in the log mean cumulative gap-weighted assignment regret
    against $\log t$ over the displayed time interval.
    The legends report the fitted slopes.}
    \label{fig:stepsize-sensitivity-fit}
\end{figure}
\subsection{Online nuisance estimation}
\label{app:jester-online-nuisance}

For the real-data analysis in
Section~\ref{subsec:jester-real-data}, the nuisance estimators
use the treatment-specific samples $\cD_{a,t}$ defined in
Section~\ref{subsec:plugin-selection}, indexed by
$\mathcal I_{a,t}$, and the fitting weights $w_i$ in
\eqref{eq:weighted-regression-weights}. The observation weights are computed using the assignment
probabilities at the time of data collection and remain
fixed in subsequent refits.

\paragraph{Control reference.}
The control reference is estimated using a weighted tensor-product spline.
Let $\mathbf B(\bx)$ denote the tensor-product B-spline basis evaluated
at $\bx$.  Immediately before time $t$, define
\begin{equation}
\begin{aligned}
(\widehat c_{0,t},\widehat{\boldsymbol\beta}_{0,t})
\in\argmin_{c,\boldsymbol\beta}
\left\{
\frac{
\sum_{i\in\mathcal I_{0,t}}
w_i
\left[
Y_i-c-\mathbf B(\bX_i)^\top\boldsymbol\beta
\right]^2
}{
\sum_{i\in\mathcal I_{0,t}}w_i
}
+
\rho\|\boldsymbol\beta\|_2^2
\right\},
\end{aligned}
\label{eq:jester-online-b0-fit}
\end{equation}
where $\rho\ge0$ is the penalty parameter and the intercept is not penalized.
The estimated control reference is
\begin{equation}
    \widehat b_{0,t}(\bx)
    =
    \widehat c_{0,t}
    +
    \mathbf B(\bx)^\top
    \widehat{\boldsymbol\beta}_{0,t}.
    \label{eq:jester-online-b0}
\end{equation}

\paragraph{Conditional treated-outcome distribution.}
For a query point $\bx$, let
$\mathcal N_{1,t}(\bx)\subset\mathcal I_{1,t}$ index a local neighborhood of
the observed treated users, formed according to Euclidean distance in the
two-dimensional scaled covariate space.  We estimate the conditional
distribution of the treatment outcome by the weighted local empirical CDF
\begin{equation}
    \widehat F_{1,t}(y\mid\bx)
    =
    \frac{
    \sum_{i\in\mathcal N_{1,t}(\bx)}
    w_i\ind\{Y_i\le y\}
    }{
    \sum_{i\in\mathcal N_{1,t}(\bx)}w_i
    }.
    \label{eq:jester-online-F1}
\end{equation}
Substituting \eqref{eq:jester-online-b0} and \eqref{eq:jester-online-F1}
into \eqref{eq:paits-plugin-local-null-risk} gives the local null risk
estimate.  In this calculation, every outcome in the local treated
neighborhood is compared with the same query-specific control reference
$\widehat b_{0,t}(\bx)$, because the conditional distribution is evaluated
at the reference for the query user.

\subsection{Finite-population control reference and oracle}
\label{app:jester-population-oracle}

The control reference and local null risk estimates used for offline
evaluation are constructed once from the retained population of $N$ users
described in Section~\ref{subsec:jester-real-data} and held fixed across
all online replay orders.

For the control reference, we use cross-fitting.  Partition the population
into folds, and let $k(i)$ denote the fold containing user $i$.  Let
$\widehat b_0^{(-k)}$ be the control spline fitted using all users outside fold
$k$.  The fixed control reference for user $i$ is
\begin{equation}
    \widetilde b_{0,i}
    =
    \widehat b_0^{(-k(i))}(\bX_i).
    \label{eq:jester-pop-b0}
\end{equation}
Thus, user $i$'s own joke-17 rating is not used to construct its control
reference. We form a fixed local empirical distribution for the treated
outcome.  Let $\mathcal N_{\mathrm{pop}}(\bx)$ denote the local neighborhood
of $\bx$ in the complete population.  Define
\begin{equation}
    \widetilde F_{1,\mathrm{pop}}(y\mid\bx)
    =
    \frac{1}{|\mathcal N_{\mathrm{pop}}(\bx)|}
    \sum_{j\in\mathcal N_{\mathrm{pop}}(\bx)}
    \ind\{Y_j(1)\le y\},
    \label{eq:jester-pop-F1}
\end{equation}
and the corresponding finite-population local null risk by
\begin{equation}
    \widetilde\ell_i
    =
    \widetilde F_{1,\mathrm{pop}}
    (\widetilde b_{0,i}\mid\bX_i).
    \label{eq:jester-pop-risk}
\end{equation}

Using the estimated finite-population risk scores
$\{\widetilde\ell_i\}_{i=1}^N$, we construct the corresponding plug-in
oracle policy following Lemma~\ref{lem:oracle-assignment-optimality}.
Because the estimated scores are discrete, randomization is allowed among
users at the boundary when necessary.  We denote the resulting oracle cutoff
by $\widetilde\lambda^\star$.

\subsection{Evaluation criteria}
\label{app:jester-evaluation}

Suppose user $i_s$ arrives at online time $s$.  The cumulative FDP is
computed as in Section~\ref{subsec:treatment-error-regret}, using the
fixed control reference $\widetilde b_{0,i_s}$ in place of $b_0(\bX_s)$.
Thus, an assignment with $A_s=1$ is counted as a false discovery when
$Y_{i_s}(1)\le\widetilde b_{0,i_s}$.  The complete ratings are used only
for offline evaluation; PAITS receives the assigned outcome at each arrival.

For assignment efficiency, we use the pathwise finite-population counterpart
of the gap-weighted regret in
\eqref{eq:weighted-selection-regret}:
\begin{equation}
    \Regret_t^{\mathrm{pop}}
    =
    \sum_{s=1}^t
    \left|
    \widetilde\ell_{i_s}
    -
    \widetilde\lambda^\star
    \right|
    \left|
    A_s-\widetilde A_{i_s}^\star
    \right|,
    \label{eq:jester-gap-regret}
\end{equation}
where $\widetilde A_i^\star$ is the plug-in finite-population oracle
assignment.
The convention used for users exactly
at the oracle boundary does not affect \eqref{eq:jester-gap-regret}, because
their gap weight is zero.
For the descriptive log--log plots, we fit
$\log\Regret_t^{\mathrm{pop}}=a+b\log t+e_t$ over time points with positive
regret.  The fitted slope $b$ summarizes finite-horizon regret growth.
\end{document}